
\documentclass{article}

\usepackage{times}
\usepackage{graphicx} 
\usepackage{subfigure} 

\usepackage{natbib}

\usepackage{algorithm}
\usepackage{algorithmic}

\usepackage{hyperref}



\usepackage[accepted]{icml2016}

\usepackage{url}
\usepackage{tikz}
\usepackage{fancyhdr} 
\usepackage{color}
\usepackage{epsf}
\usepackage{epsfig}
\usepackage{bm}
\usepackage{amsmath, amssymb}
\usepackage{amsthm}
\usepackage{comment}
\usepackage{caption}
\usepackage{multirow}

\newtheorem{theorem}{Theorem}[section]
\newtheorem{definition}{Definition}[section]
\newtheorem{lemma}{Lemma}[section]
\newtheorem{remark}{Remark}[section]
\newtheorem{corollary}{Corollary}[section]

\def\layersep{2cm}
\def\layersepp{4cm}

\def\layersepppp{6cm}

\icmltitlerunning{Binary embeddings with structured hashed projections}

\begin{document} 

\twocolumn[
\icmltitle{Binary embeddings with structured hashed projections}

\icmlauthor{Anna Choromanska\thanks{1}}{achoroma@cims.nyu.edu}
\icmlauthor{Krzysztof Choromanski\footnotemark[1]}{kchoro@google.com}
\icmlauthor{Mariusz Bojarski}{mbojarski@nvidia.com}
\icmlauthor{Tony Jebara}{jebara@cs.columbia.edu}
\icmlauthor{Sanjiv Kumar}{sanjivk@google.com}
\icmlauthor{Yann LeCun}{yann@cs.nyu.edu}


\vskip 0.3in
]

\footnotetext[1]{Equal contribution.}

\begin{abstract} 
We consider the hashing mechanism for constructing binary embeddings, that involves pseudo-random projections followed by nonlinear (sign function) mappings. The pseudo-random projection is described by a matrix, where not all entries are independent random variables but instead a fixed ``budget of randomness'' is distributed across the matrix. Such matrices can be efficiently stored in sub-quadratic or even linear space, provide reduction in randomness usage (i.e. number of required random values), and very often lead to computational speed ups. We prove several theoretical 
results showing that projections via various structured matrices followed by nonlinear mappings accurately preserve the angular distance between input high-dimensional vectors. To the best of our knowledge, these results are the first that give theoretical ground for the use of general structured matrices in the nonlinear setting. In particular, they generalize previous extensions of the Johnson-Lindenstrauss lemma and prove the plausibility of the approach that was so far only heuristically confirmed for some special structured matrices. Consequently, we show that many structured matrices 
can be used as an efficient information compression mechanism. Our findings build a better understanding of certain deep architectures, which contain randomly weighted and untrained layers, and yet achieve high performance on different learning tasks. We empirically verify our theoretical findings and show the dependence of learning via structured hashed projections on the performance of neural network as well as nearest neighbor classifier.
\end{abstract} 

\section{Introduction}

The paradigm of binary embedding for data compression is of the central focus of this paper. 
The paradigm has been studied in some earlier works (see: \cite{weiss}, \cite{aqbe}, \cite{gong}, \cite{ssh}, \cite{gong2}, \cite{plan}, \cite{felix}, \cite{constantine}), 
and in particular it was observed that by using linear projections and then applying sign function as a nonlinear map 
one does not loose completely the information about the angular distance between vectors, but instead the information 
might be approximately reconstructed from the Hamming distance between hashes. In this paper we are interested in using 
pseudo-random projections via structured matrices in the linear projection phase. The pseudo-random projection is described 
by a matrix, where not all the entries are independent random variables but instead a fixed ``budget of randomness'' is 
distributed across the matrix. Thus they can be efficiently stored in a sub-quadratic or even linear space and provide 
reduction in the randomness usage. Moreover, using them often leads to computational speed ups since they provide fast 
matrix-vector multiplications via Fast Fourier Transform. We prove an extension of the Johnson-Lindenstrauss lemma~\cite{Sivakumar:2002:ADV:509907.509996} 
for general pseudo-random structured projections followed by nonlinear mappings. We show that the angular distance between high-dimensional 
data vectors is approximately preserved in the hashed space. This result is also new compared to previous extensions~\cite{journals/rsa/HinrichsV11,Vybíral20111096} 
of the Johnson-Lindenstrauss lemma, that consider special cases of our structured projections (namely: circulant matrices) and do not consider at 
all the action of the non-linear mapping. We give theoretical explanation of the approach that was so far only heuristically confirmed for some special structured matrices
(see: \cite{constantine}, \cite{felix}). 

Our theoretical findings imply that many types of matrices, such as circulant or Toeplitz Gaussian matrices, 
can be used as a preprocessing step in neural networks. Structured matrices were used before in different contexts also in deep learning, e.g.~\cite{ICML2011Saxe_551,sindhwani,cheng2,DBLP:journals/corr/MathieuHL13}). Our theoretical results however extend to more general class of matrices. 

Our work has primarily theoretical focus, but we also ask an empirical question: how the action of the random projection followed by non-linear transformation may influence learning? We focus on the deep learning setting, where the architecture contains completely random or pseudo-random structured layers that are not trained. 
Little is known from the theoretical point of view about these fast deep architectures, which achieve significant speed ups of computation 
and space usage reduction with simultaneous little or no loss in 
performance~\cite{ICML2011Saxe_551, cheng2, sindhwani, conf/iccv/JarrettKRL09,journals/ploscb/PintoDDC09,pinto:bionetics_2010, Huang2006489}. 
The high-level intuition justifying the success of these approaches is that not only does the performance of the deep learning system depend on learning, but also on the intrinsic properties of the architecture. These findings coincide with the notion of high redundancy in network parametrization~\cite{NIPS2013_5025,DBLP:journals/corr/DentonZBLF14,AChoro2015}. In this paper we consider a simple model of
the fully-connected feed-forward neural network where the input layer
is hashed by a structured pseudo-random projection followed by a point-wise nonlinear mapping.
Thus the input is effectively hashed and learning is conducted in the fully connected subsequent layers that act in the hashed space. 
We empirically verify how the distortion introduced in the first layer
by hashing (where we reduce the dimensionality of the data) affects the performance of the network (in the supervised learning setting). Finally, we show how our structured nonlinear embeddings can be used in the $k$-nn setting~\cite{citeulike:5847607}.

This article is organized as follows: Section~\ref{sec:rel} discusses related work, Section~\ref{sec:has_mech} explains the hashing mechanism, Section~\ref{sec:the} provides theoretical results, Section~\ref{sec:ne} shows experimental results, and Section~\ref{sec:con} concludes. Supplement contains additional proofs and experimental results.

\section{Related work}
\label{sec:rel}

The idea of using random projections to facilitate learning with high-dimensional data stems from the early work on random projections~\cite{DBLP:conf/focs/Dasgupta99} showing in particular that learning of high-dimensional mixtures of Gaussians can be simplified when first projecting the data into a randomly chosen subspace of low dimension (this is a consequence of the curse of dimensionality and the fact that high-dimensional data often has low intrinsic dimensionality). This idea was subsequently successfully applied to both synthetic and real datasets~\cite{conf/uai/Dasgupta00,Bingham:2001:RPD:502512.502546}, and then adopted to a number of learning approaches such as random projection trees~\cite{dasgupta2008random}, kernel and feature-selection techniques~\cite{Blum:2005:RPM:2182373.2182377}, clustering~\cite{conf/icml/FernB03a}, privacy-preserving machine learning~\cite{Liu:2006:RPM:1105850.1105890,conf/alt/ChoromanskaCJM13}, learning with large databases~\cite{Achlioptas:2003:DRP:861182.861189}, sparse learning settings~\cite{pingli2006very}, and more recently - deep learning (see~\cite{ICML2011Saxe_551} for convenient review of such approaches).
Using linear projections with completely random Gaussian weights, instead of learned ones, was recently studied from both theoretical and practical point of view in \cite{giryes}, but that work did not consider
structured matrices which is a central point of our interest since structured matrices can be stored much more efficiently. Beyond applying methods that use random Gaussian matrix projections~\cite{DBLP:conf/focs/Dasgupta99,conf/uai/Dasgupta00,giryes} and 
random binary matrix projections~\cite{Achlioptas:2003:DRP:861182.861189}, it is also possible to construct deterministic projections that preserve angles and distances~\cite{JXHC09}. In some sense these methods use structured
matrices as well, yet they do not have the same projection efficiency
of circulant matrices and projections explored in this article. Our hybrid approach, where a fixed ``budget of randomness'' is distributed across the entire matrix in the structured way enables us to take advantage of both: the ability of completely random projection to preserve information and the compactness that comes from the highly-organized internal structure of the linear mapping.

This work studies the paradigm of constructing a binary embedding for data compression, where hashes are obtained by applying linear projections to the data followed by the non-linear (sign function) mappings. The point-wise nonlinearity was not considered in many previous works on structured matrices~\cite{haupt2010toeplitz,journals/corr/abs-1010-1847,CPA:CPA21504,yap.11b} (moreover note that these works also consider the set of structured matrices which is a strict subset of the class of matrices considered here). 
Designing binary embeddings for high dimensional data with low distortion is addressed in many recent 
works~\cite{weiss, ssh, gong2, gong, aqbe, felix, constantine, NIPS2009_3749,Salakhutdinov:2009:SH:1558385.1558446}. 
In the context of our work, one of the recent articles~\cite{constantine} is especially important 
since the authors introduce the pipeline of constructing hashes with the use of structured matrices in the linear step, instead of completely random ones. They prove several theoretical results regarding the quality of the produced hash, and extend some previous theoretical results~\cite{DBLP:journals/corr/abs-1104-3160,plan}. Their pipeline is more complicated than ours, i.e. they first apply Hadamard transformation and then a sequence of partial Toeplitz Gaussian matrices.
Some general results (unbiasedness of the angular distance estimator) were also known for short hashing pipelines involving circulant matrices (\cite{felix}). 
These works do not provide guarantees regarding concentration of the estimator around its mean, which is crucial for all practical applications.
Our results for general structured matrices, which include circulant Gaussian matrices and a larger class of Toeplitz Gaussian matrices as special subclasses, 
provide such concentration guarantees, and thus establish a solid mathematical foundation for using various types of structured matrices in binary embeddings. 
In contrast to \cite{constantine}, we present our theoretical results for simpler hashing models (our hashing mechanism is explained in Section~\ref{sec:has_mech} 
and consists of two very simple steps that we call preprocessing step and the actual hashing step, where the latter consists of pseudo-random projection followed 
by the nonlinear mapping). In \cite{aditya} theoretical guarantees regarding bounding the variance of the angle estimator in the circulant setting were presented.
Strong concentration results regarding several structured matrices were given in \cite{choromanski_sindhwani, triple_spin}, following our work.

In the context of deep learning, using random network parametrization, where certain layers have random and untrained weights, often accelerates training. Introducing randomness to networks was explored for various architectures, in example feedforward networks~\cite{Huang2006489}, convolutional networks~\cite{conf/iccv/JarrettKRL09,ICML2011Saxe_551}, and recurrent networks~\cite{Jaeger04harnessingnonlinearity,citeulike:13373321,Boedecker2009}. We also refer the reader to~\cite{3447}, where the authors describe how neural systems cope with the challenge of processing data in high dimensions and discuss random projections. Hashing in neural networks that we consider in this paper is a new direction of research. Very recently (see: \cite{tyree}) it was empirically showed that hashing in neural nets may achieve drastic reduction in model sizes with no significant loss of the quality, by heavily exploiting the phenomenon of redundancies in neural nets. 
HashedNets introduced in \cite{tyree} do not give any theoretical guarantees regarding the quality of the proposed hashing. Our work aims to touch both grounds. We experimentally show the plausibility of the approach, but also explain theoretically why the hashing we consider compresses important information 
about the data that suffices for accurate classification. 
Dimensionality reduction techniques were used also to approximately preserve certain metrics defined on graph objects (\cite{shaw_jebara}).
Structured hashing was applied also in \cite{yann_lecun}, but in a very different context than ours. 

\section{Hashing mechanism}
\label{sec:has_mech}

In this section we explain our hashing mechanism for dimensionality reduction that we next analyze.

\subsection{Structured matrices}

We first introduce the aforementioned family of structured matrices, that we call: $\Psi$-regular matrices $\mathcal{P}$. This is the key ingredient of the method.

\begin{definition}
\label{circ-def}
$\mathcal{M}$ is a circulant Gaussian matrix if its first row is a sequence of independent Gaussian random variables taken from the distribution $\mathcal{N}(0,1)$ and next rows are obtained from the previous ones by either only one-left shifts or only one-right shifts. 
\end{definition}
\begin{definition}
\label{Toep-def}
$\mathcal{M}$ is a Toeplitz Gaussian matrix if each of its descending diagonals from left to right is of the form $(g,...,g)$ for some $g \sim \mathcal{N}(0,1)$ and different descending diagonals are independent. 
\end{definition}

\begin{remark} 
The circulant Gaussian matrix with right shifts is a special type of the Toeplitz Gaussian matrix.
\end{remark}
\vspace{-0.02in}
Assume that $k$ is the size of the hash and $n$ is the dimensionality of the data.

\begin{definition}
Let $t$ be the size of the pool of independent random Gaussian variables $\{g_{1},...,g_{t}\}$,
where each $g_{i} \sim \mathcal{N}(0,1)$. Assume that $k \leq n \leq t \leq kn$. We
take $\Psi$ to be a natural number, i.e. $\Psi \in \mathbb{N}$. $\mathcal{P}$ is $\Psi$-regular random matrix if it has the following form
\vspace{-0.02in}
\begin{equation*}
\left( \begin{array}{ccccc}
\sum_{l \in \mathcal{S}_{1,1}}g_{l}     &  ... & \sum_{l \in \mathcal{S}_{1,j}}g_{l} & ... & \sum_{l \in \mathcal{S}_{1,n}}g_{l}\\
... & ... & ... & ... & ... \\
\sum_{l \in \mathcal{S}_{i,1}}g_{l} & ... & \sum_{l \in \mathcal{S}_{i,j}}g_{l} & ... & \sum_{l \in \mathcal{S}_{i,n}}g_{l}\\
... & ... & ... & ... & ... \\
\sum_{l \in \mathcal{S}_{k,1}}g_{l} & ... & \sum_{l \in \mathcal{S}_{k,j}}g_{l} & ... & \sum_{l \in \mathcal{S}_{k,n}}g_{l}
\end{array} \right)
\end{equation*}

where $S_{i,j} \subseteq \{1,...,t\}$ for $i \in \{1,...,k\}$, $j \in \{1,...,n\}$, $|S_{i,1}|=...=|S_{i,n}|$ for $i=1,...,k$, $S_{i,j_{1}} \cap S_{i,j_{2}} = \emptyset$
for $i \in \{1,...,k\}$, $j_{1},j_{2} \in \{1,...,n\}$, $j_{1} \neq j_{2}$, and furthermore the following holds: 
for any two different rows $\mathcal{R}_{1}, \mathcal{R}_{2}$ of $\mathcal{P}$ 
the number of random variables $g_{l}$, where $l \in \{1,...,t\}$, such that $g_{l}$ is in the intersection of some column with both $\mathcal{R}_{1}$
and $\mathcal{R}_{2}$ is at most $\Psi$.
\end{definition}

\begin{remark} 
Circulant Gaussian matrices and Toeplitz Gaussian matrices are special cases of the $0$-regular matrices. Toeplitz Gaussian matrix is $0$-regular, where subsets $\mathcal{S}_{i,j}$ are singletons.
\end{remark}

In the experimental section of this paper we consider six different kinds of structured matrices, which are examples of general structured matrices covered by our theoretical analysis. 
Those are:
\vspace{-0.15in}
\begin{itemize}
\setlength\itemsep{-0.1em}
\item \textit{Circulant} (see: Definition \ref{circ-def}),
\item \textit{BinCirc} - a matrix, where the first row is partitioned into consecutive equal-length blocks of elements and each row is obtained by the right shift of the blocks from the first row,
\item \textit{HalfShift} - a matrix, where next row is obtained from the previous one by swapping its halves and then performing right shift by one,
\item \textit{VerHorShift} - a matrix that is obtained by the following two phase-procedure: first each row is obtained from the previous one by the right shift of a fixed length and then in the obtained matrix each column is shifted up by a fixed number of elements,
\item \textit{BinPerm} - a matrix, where the first row is partitioned into consecutive equal-length blocks of elements and each row is obtained as a random permutation of the blocks from the first  row,
\item \textit{Toeplitz} (see: Definition~\ref{Toep-def}).
\end{itemize} 

\begin{remark}
Computing hashes for structured matrices: Toeplitz, BinCirc, HalfShift, and VerHorShift can be done faster than in time $\mathcal{O}(nk)$ (e.g. for Toeplitz one can use FFT to reduce computations to $\mathcal{O}(n\log k)$). Thus our structured approach leads to speed-ups, storage compression (since many structured matrices covered by our theoretical model can be stored in linear space) and reduction in randomness usage.
The goal of this paper is not to analyze in details fast matrix-vector product algorithms
since that requires a separate paper. We however point out that well-known fast matrix-vector product algorithms are some of the key advantages of our structured approach.

\end{remark}

\subsection{Hashing methods}

Let $\phi$ be a function satisfying $\lim_{x \rightarrow \infty} \phi(x) = 1$ and $\lim_{x \rightarrow -\infty} \phi(x) = -1$.
We will consider two hashing methods, both of which consist of what we refer to as a \textit{preprocessing} step followed by the actual \textit{hashing} step, where the latter consists of pseudo-random projection followed by nonlinear (sign function) mapping. The first mechanism, that we call \textit{extended $\Psi$-regular hashing}, applies first random diagonal matrix $\mathcal{R}$ to the data point $x$,
then the $L_{2}$-normalized Hadamard matrix $\mathcal{H}$, next another random diagonal matrix $\mathcal{D}$, then the $\Psi$-regular projection matrix $\mathcal{P}_{\Psi}$ and finally function $\phi$ (the latter one applied point-wise).
The overall scheme is presented below:
\begin{equation}
\underbrace{x \xrightarrow {\mathcal{R}} x_{\mathcal{R}} \xrightarrow {\mathcal{H}} x_{\mathcal{H}} \xrightarrow {\mathcal{D}} x_{\mathcal{D}}}_{\text{preprocessing}} \underbrace{\xrightarrow {\mathcal{P}_{\Psi}} x_{\mathcal{P}_{\Psi}} \xrightarrow {\phi} h(x)}_{\text{hashing}} \in \mathbb{R}^{k}.
\end{equation}
The diagonal entries of matrices $\mathcal{R}$ and $\mathcal{D}$ are chosen independently from the binary set $\{-1,1\}$,
each value being chosen with probability $\frac{1}{2}$.
We also propose a shorter pipeline, that we call \textit{short $\Psi$-regular hashing}, where we avoid applying first random matrix $\mathcal{R}$ and the Hadamard matrix $\mathcal{H}$, i.e. the overall pipeline is of the form:
\begin{equation}
\underbrace{x  \xrightarrow {\mathcal{D}} x_{\mathcal{D}}}_{\text{preprocessing}} \underbrace{\xrightarrow {\mathcal{P}_{\Psi}} x_{\mathcal{P}_{\Psi}} \xrightarrow {\phi} h(x)}_{\text{hashing}} \in \mathbb{R}^{k}.
\end{equation}
The goal is to compute good approximation of the angular distance between given vectors $p,r$, given their 
compact hashed versions: $h(p), h(r)$. 
To achieve this goal we consider the $L_{1}$-distances in the $k$-dimensional space of hashes.
Let $\theta_{p,r}$ denote the angle between vectors $p$ and $r$. We define the \textit{normalized approximate angle between $p$ and $r$} as:  
\begin{equation}
\tilde{\theta}_{p,r}^{n} = \frac{1}{2k}\|h(p)-h(r)\|_{1}
\end{equation} 
In the next section we show that the normalized approximate angle between vectors $p$ and $r$ leads to a very precise estimation of the actual angle for $\phi(x)=sign(x)$ if the chosen parameter $\Psi$ is not too large. Furthermore, we show an intriguing connection between theoretical guarantees regarding the quality of the produced hash and the chromatic number of some specific undirected graph encoding the structure of $\mathcal{P}$. For many of the structured matrices under consideration this graph is induced by an algebraic group operation defining the structure of $\mathcal{P}$ (for instance, for the circular matrix the group is a single shift and the underlying graph is a collection of pairwise disjoint cycles, thus its chromatic number is at most $3$).

\section{Theoretical results}
\label{sec:the}
\subsection{Unbiasedness of the estimator}
We are ready to provide theoretical guarantees regarding the quality of the produced hash. Our guarantees will be given for a \textit{sign} function, i.e for $\phi$ defined as: $\phi(x) = 1$ for $x \geq 0$, $\phi(x) = -1$ for $x < 0$. Using this nonlinearity will be important to preserve approximate information about the angle between vectors, while filtering out the information about their lengths. We first show that $\tilde{\theta}_{p,r}^{n}$ is an unbiased estimator of $\frac{\theta_{p,r}}{\pi}$, i.e. $E(\tilde{\theta}_{p,r}^{n}) = \frac{\theta_{p,r}}{\pi}$.

\begin{lemma}
\label{mean_lemma}
Let $\mathcal{M}$ be a $\Psi$-regular hashing model (either extended or a short one) and $\|p\|_{2}=\|r\|_{2}=1$. 
Then $\tilde{\theta}_{p,r}^{n}$ is an unbiased estimator of $\frac{\theta_{p,r}}{\pi}$, i.e. $E(\tilde{\theta}_{p,r}^{n}) = \frac{\theta_{p,r}}{\pi}.$
\end{lemma}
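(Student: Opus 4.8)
The plan is to exploit linearity of expectation and reduce the whole computation to a single-row, single-coordinate statement, namely the classical sign-random-projection identity. Since $\phi = \mathrm{sign}$, every coordinate of $h(p)$ and $h(r)$ lies in $\{-1,1\}$, so $|h(p)_{i} - h(r)_{i}|$ equals $0$ when the two signs agree and $2$ when they disagree. Hence $\|h(p)-h(r)\|_{1} = 2\,\#\{i : h(p)_{i} \neq h(r)_{i}\}$, and therefore
\[
E(\tilde{\theta}_{p,r}^{n}) = \frac{1}{k}\sum_{i=1}^{k} P\bigl(h(p)_{i} \neq h(r)_{i}\bigr).
\]
It thus suffices to show that each summand equals $\frac{\theta_{p,r}}{\pi}$. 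I note in advance that this requires only the marginal law of a single row of $\mathcal{P}_{\Psi}$, so the inter-row dependence encoded by $\Psi$ is irrelevant to the mean (it will enter only when bounding the variance).

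Next I would record that the preprocessing step is, for each realization, an orthogonal transformation: the diagonal sign matrices $\mathcal{R},\mathcal{D}$ and the $L_{2}$-normalized Hadamard matrix $\mathcal{H}$ are all orthogonal, so in either the extended or the short pipeline the map $x \mapsto x_{\mathcal{D}}$ preserves both norms and the angle between the two inputs. Conditioning on the random preprocessing matrices, the vectors $x_{\mathcal{D}}^{(p)}, x_{\mathcal{D}}^{(r)}$ are therefore unit vectors enclosing exactly the angle $\theta_{p,r}$, and it remains to analyze the sign of a single row of $\mathcal{P}_{\Psi}$ applied to them.

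The key structural observation is that, for a fixed row index $i$, the definition of a $\Psi$-regular matrix forces the entries $\sum_{l \in S_{i,j}} g_{l}$, $j = 1,\dots,n$, to be built from pairwise disjoint index sets $S_{i,j}$ of a common cardinality $m = |S_{i,j}|$. Consequently the $n$ entries of that row are independent, each distributed as $\mathcal{N}(0,m)$, so the whole row has the law $\sqrt{m}\,g$ with $g \sim \mathcal{N}(0,I_{n})$ a standard Gaussian vector. Since $\mathrm{sign}$ is scale invariant, $h(v)_{i} = \mathrm{sign}(\langle g, v\rangle)$ for each preprocessed input $v$. I would then invoke the classical sign-random-projection fact that, for a standard Gaussian $g$ and any two vectors $u,w$,
\[
P\bigl(\mathrm{sign}(\langle g,u\rangle) \neq \mathrm{sign}(\langle g,w\rangle)\bigr) = \frac{\theta_{u,w}}{\pi},
\]
which follows from rotational invariance: the two signs depend only on the direction of the projection of $g$ onto $\mathrm{span}(u,w)$, which is uniform on the circle, and they disagree precisely on two arcs of total angular measure $2\theta_{u,w}$. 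Applying this with $u = x_{\mathcal{D}}^{(p)}$ and $w = x_{\mathcal{D}}^{(r)}$ gives $P(h(p)_{i} \neq h(r)_{i}) = \frac{\theta_{p,r}}{\pi}$ for every $i$; since this value is independent of the preprocessing realization, averaging over $\mathcal{R},\mathcal{D}$ and summing over $i$ yields the claim.

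I expect the only genuine delicacy to be bookkeeping rather than a deep obstacle: one must check that within a single row the disjointness condition $S_{i,j_{1}}\cap S_{i,j_{2}}=\emptyset$ together with $|S_{i,1}|=\dots=|S_{i,n}|$ indeed makes the row an exactly isotropic scaled Gaussian vector, and that the boundary event $\langle g,v\rangle = 0$ has probability zero so the convention $\phi(0)=1$ is immaterial. Everything else reduces to linearity of expectation, orthogonality of the preprocessing, and the standard angle identity.
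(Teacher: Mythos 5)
Your proposal is correct and takes essentially the same route as the paper's own proof: both reduce to a per-row indicator via linearity of expectation, use the disjointness of the sets $S_{i,j}$ within a row to conclude the row is an isotropic (scaled) Gaussian, invoke rotational invariance of its projection onto $\mathrm{span}(p,r)$ to get the classical $\theta_{p,r}/\pi$ sign-disagreement probability, and dispose of the extended model by noting that $\mathcal{H}\mathcal{R}$ is orthogonal and hence angle-preserving. The only cosmetic difference is that you absorb $\mathcal{D}$ into the data vectors as an orthogonal preprocessing step (conditioning on it), whereas the paper absorbs it into the Gaussian row by noting $g^{i}_{\mathcal{D}}$ has the same distribution as $g^{i}$; these are equivalent.
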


Let us give a short sketch of the proof first. Note that the value of the particular entry in the constructed hash depends only on the sign of the dot product between the corresponding Gaussian vector representing the row of the $\Psi$-regular matrix and the given vector.  Fix two vectors: $p$ and $r$ with angular distance $\theta$. Note that considered dot products (and thus also their signs) are preserved when instead of taking the Gaussian vector representing the row one takes its projection onto a linear space spanned by $p$ and $r$.
The Hamming distance between hashes of $p$ and $r$ is built up by these entries for which one dot product is negative and the other one is positive. One can note that this happens if the projected vector is inside a $2$-dimensional cone covering angle $2\theta$. 
The last observation that completes the proof is that the projection of the Gaussian vector is isotropic 
(since it is also Gaussian), thus the probability that the two dot products will have different signs is $\frac{\theta}{\pi}$
(also see: \cite{charikar}).
\begin{proof}

Note first that the $i$th row, call it $g^{i}$, of the matrix $\mathcal{P}$ is a $n$-dimensional Gaussian vector with mean $0$ and where each element has variance $\sigma_{i}^{2}$ for $\sigma_{i}^{2}=|\mathcal{S}_{i,1}|=...=|\mathcal{S}_{i,n}|$ ($i=1,...,k$).
Thus, after applying matrix $\mathcal{D}$ the new vector $g^{i}_{\mathcal{D}}$ is still Gaussian and of the same distribution.
Let us consider first the short $\Psi$-regular hashing model.
Fix some vectors $p,r$ (without loss of generality we may assume that they are not collinear). Let $H_{p,r}$, shortly called by us $H$, be the $2$-dimensional hyperplane spanned by $\{p,r\}$. Denote by $g^{i}_{\mathcal{D},H}$ the projection of 
$g^{i}_{\mathcal{D}}$ into $H$ and by $g^{i}_{\mathcal{D},H,\perp}$ the line in $H$ perpendicular to $g^{i}_{\mathcal{D},H}$. Let $\phi$ be a \textit{sign} function. Note that the contribution to the $L_{1}$-sum $\|h(p)-h(r)\|_{1}$ comes from those $g^{i}$ for which
$g^{i}_{\mathcal{D},H,\perp}$ divides an angle between $p$ and $r$ (see: Figure~\ref{fig:one}), i.e. from those $g^{i}$ for which $g^{i}_{\mathcal{D},H}$
is inside the union $\mathcal{U}_{p,r}$ of two $2$-dimensional cones bounded by two lines in $H$ perpendicular to $p$ and $r$
respectively. If the angle is not divided (see: Figure~\ref{fig:two}) then the two corresponding entries in the hash have the same value and thus do not contribute to the overall distance between hashes.

Observe that, from what we have just said, we can conclude that $\tilde{\theta}_{p,r}^{n} = \frac{X_{1} + ... + X_{k}}{k}$, where:
\begin{equation}
X_{i} =
\left\{
	\begin{array}{ll}
		1  & \mbox{if }  g^{i}_{\mathcal{D},H} \in \mathcal{U}_{p,r}, \\
		0 & \mbox{otherwise.} 
	\end{array}
\right.
\end{equation}

\begin{figure}[t]
\centering
\includegraphics[width = 2.8in]{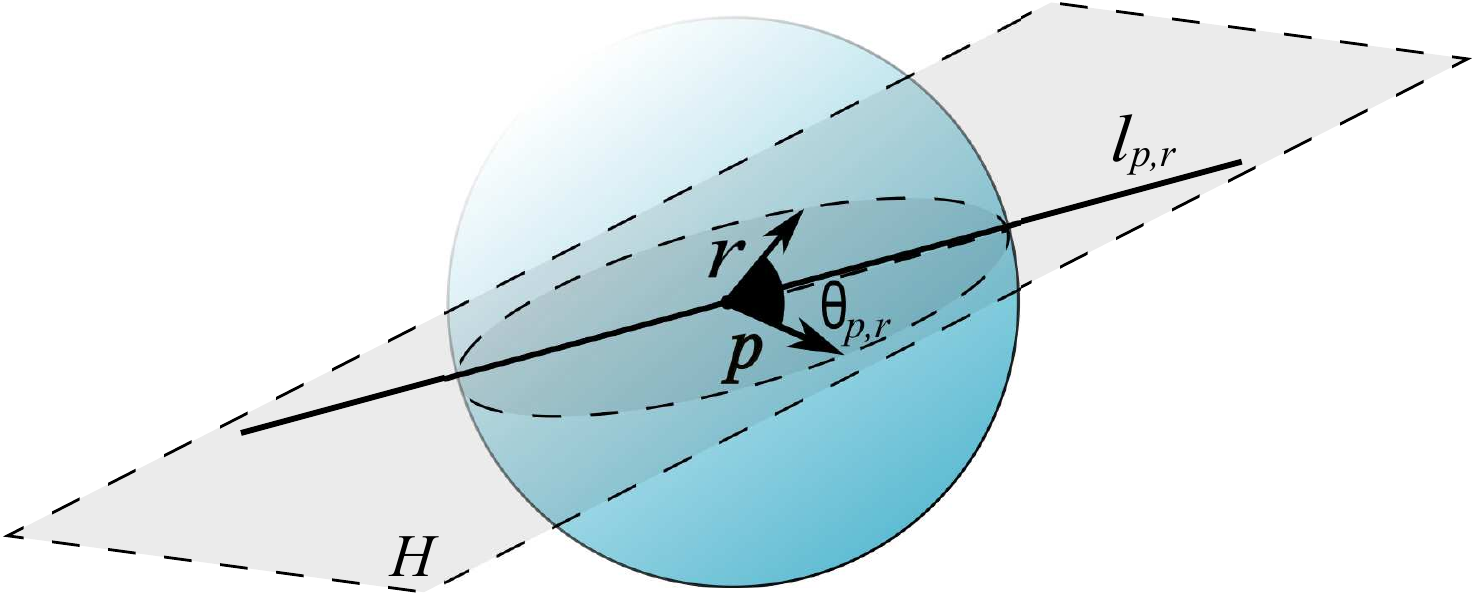}
\vspace{-0.05in}
\caption{Two vectors: $p$, $r$ spanning two-dimensional hyperplane $H$ and with the angular distance $\theta_{p,r}$ between them. We have: $l_{p,r} = g^{i}_{\mathcal{D},H,\perp}$.
Line $l_{p,r}$ is dividing $\theta_{p,r}$ and thus $g^{i}$ contributes to $\|h(p)-h(r)\|_{1}$.}
\label{fig:one}
\vspace{-0.15in}
\end{figure}
\begin{figure}[t]
\includegraphics[width = 2.3in]{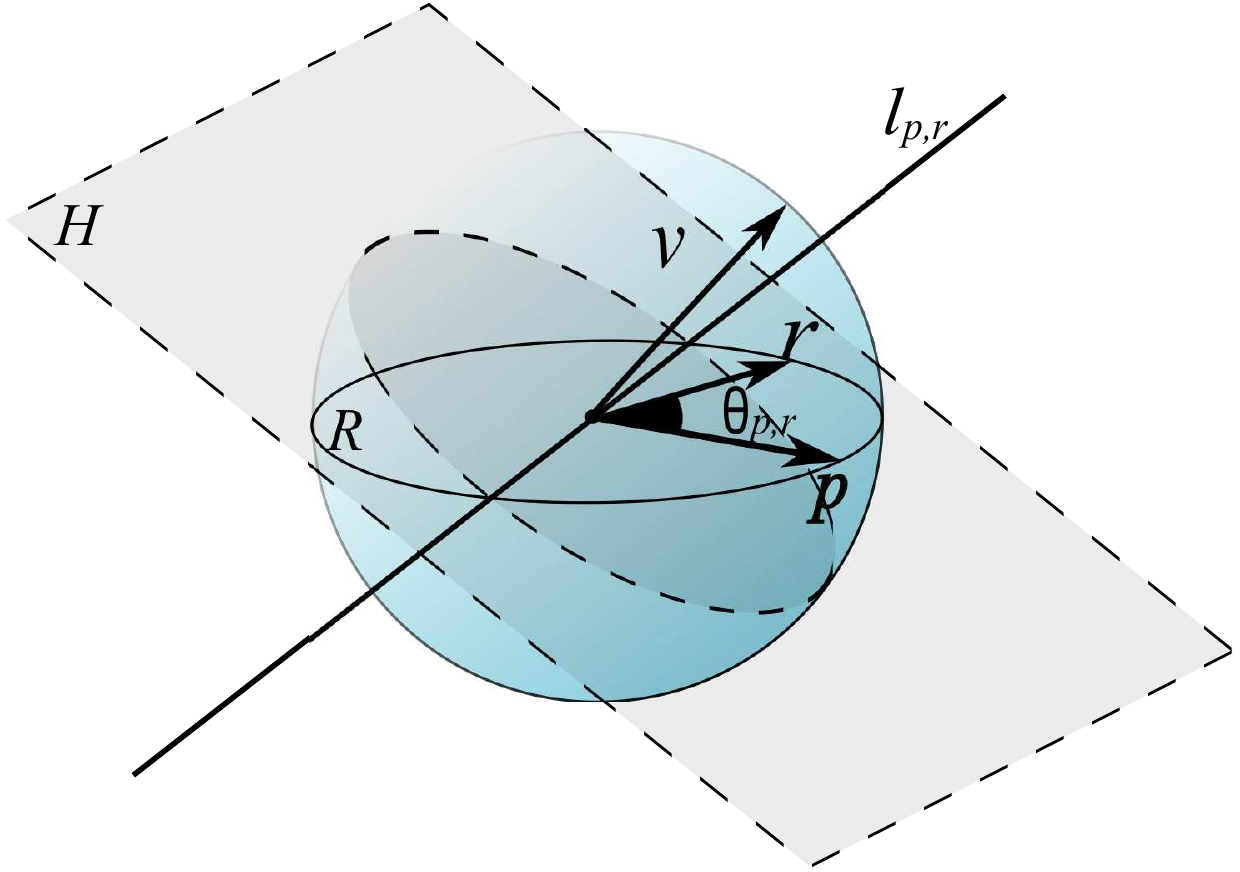}
\vspace{-0.05in}
\caption{Similar setting to the one presented on Figure \ref{fig:one}. Vector $v$ represents $L_{2}$-normalized version of $g^{i}$ and is perpendicular to the two-dimensional plane $R$.
The intersection $R \cap H$ of  that plane with the $2$-dimensional plane $H$ spanned by $p,r$ is 
a line $l_{p,r}$ that this time is outside $\mathcal{U}_{p,r}$. Thus $g^{i}$ does not contribute to 
$\|h(p)-h(r)\|_{1}$.}
\label{fig:two}
\centering
\vspace{-0.1in}
\end{figure}

Now it suffices to note that vector $g^{i}_{\mathcal{D},H}$ is a $2$-dimensional Gaussian vector and thus its direction is uniformly distributed over all directions. Thus each $X_{i}$ is nonzero with probability exactly $\frac{\theta_{p,r}}{\pi}$ and the theorem follows.
For the extended $\Psi$-regular hashing model the analysis is very similar.
The only difference is that data is preprocessed by applying $\mathcal{H}\mathcal{R}$ linear mapping first. 
Both $\mathcal{H}$ and $\mathcal{R}$ are orthogonal matrices though, thus their product is also an orthogonal matrix. Since orthogonal matrices do not change angular distance, the former analysis can be applied again and yields the proof.
\end{proof}

We next focus on the concentration of the random variable $\tilde{\theta}_{p,r}^{n}$ around its mean $\frac{\theta_{p,r}}{\pi}$. We prove strong exponential concentration results for the extended $\Psi$-regular hashing method. Interestingly, the application of the Hadamard mechanism is not necessary and it is possible to get concentration results, yet weaker than in the former case, also for short $\Psi$-regular hashing.

\subsection{The $\mathcal{P}$-chromatic number}

The highly well organized structure of the projection matrix $\mathcal{P}$ gives rise
to the underlying undirected graph that encodes dependencies between different entries of $\mathcal{P}$.
More formally, let us fix two rows of $\mathcal{P}$ of indices $1 \leq k_{1} < k_{2} \leq k$ respectively.
We define a graph $\mathcal{G}_{\mathcal{P}}(k_{1},k_{2})$ as follows: 
\vspace{-0.1in}
\begin{itemize}
\setlength\itemsep{-0.1em}
\item $V(\mathcal{G}_{\mathcal{P}}(k_{1},k_{2})) = \{\{j_{1},j_{2}\}: \exists l \in \{1,...,t\} s.t.  g_{l} \in \mathcal{S}_{k_{1},j_{1}} \cap \mathcal{S}_{k_{2},j_{2}}, j_{1} \neq j_{2}\}$,
\item there exists an edge between vertices $\{j_{1},j_{2}\}$ and $\{j_{3},j_{4}\}$ iff $\{j_{1},j_{2}\} \cap \{j_{3},j_{4}\} \neq \emptyset$.
\end{itemize}
\vspace{-0.1in}

The chromatic number $\chi(\mathcal{G})$ of the graph $\mathcal{G}$ is the minimal number of colors that can be used to color the vertices of the graph in such a way that no two adjacent vertices have the same color.

\begin{definition}
Let $\mathcal{P}$ be a $\Psi$-regular matrix. We define the $\mathcal{P}$-chromatic number $\chi(\mathcal{P})$ 
as:
$$\chi(\mathcal{P}) = \max_{1 \leq k_{1} < k_{2} \leq k} \chi(\mathcal{G}(k_{1},k_{2})).$$
\end{definition}

\begin{figure}[h]
\vspace{-0.3in}
\center
\includegraphics[width = 3.3in]{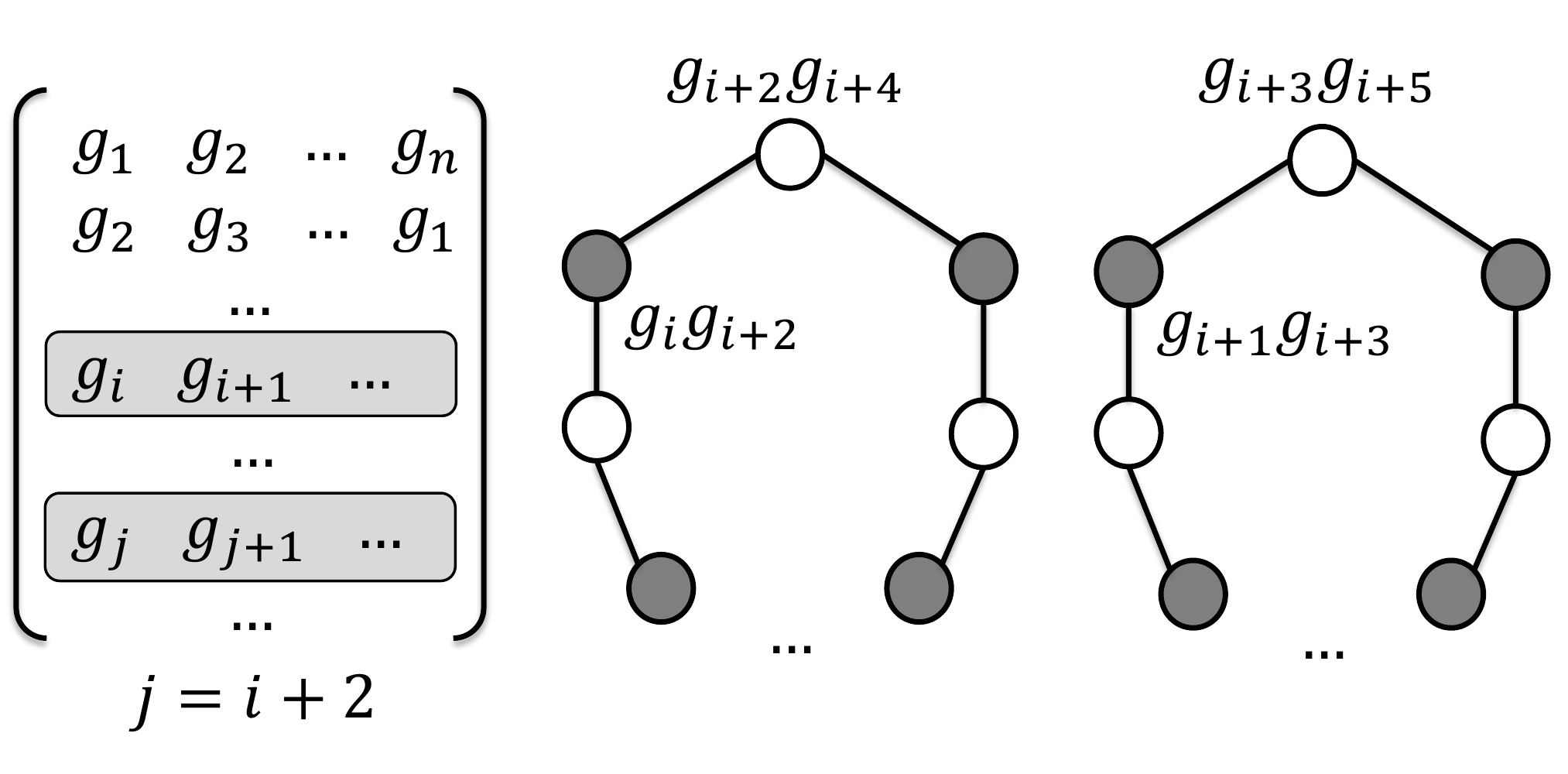}
\vspace{-0.35in}
\caption{\textbf{Left:} matrix $\mathcal{P}$ with two highlighted rows of indices: $k_{1}=i$ and $k_{2}=j$ respectively, where $j=i+2$. \textbf{Right:} corresponding graph that consists of two cycles. If each cycle is even then this graph is $2$-colorable, as indicated on the picture. Thus we have: $\chi(\mathcal{G}_{\mathcal{P}}(k_{1},k_{2}))=2$.}
\label{fig:fig2}
\vspace{-0.1in}
\label{fig:chromatic}
\end{figure}

The graph associated with each structured matrix that we have just described enables us to encode dependencies between entries of the structured matrix in the compact form
and gives us quantitative ways to efficiently measure these dependencies by analyzing several core parameters of this graph such as its chromatic number.
More dependencies that usually lead to more structured form mean more edges in the associated graph and often lead to higher chromatic number. On the other hand, fewer dependencies
produce graphs with much lower chromatic number (see Figure \ref{fig:chromatic}, where the graph associated with the circulant matrix is a collection of vertex disjoint cycles and has  chromatic number $3$ if it contains an odd length cycle and $2$ otherwise).

\subsection{Concentration inequalities for structured hashing with \textit{sign} function}

We present now our main theoretical results. The proofs are deferred to the Supplementary material. 
We focus on the concentration results regarding produced hashes that are crucial for practical applications of the proposed scheme.

We first start with the short description of the methods used and then rigorously formulate all the results.
If all the rows of the projection matrix are independent then standard concentration inequalities can be used. This is however not the case
in our setting since the matrix is structured. We still want to say that any two rows are ``close'' to independent Gaussian vectors and that will give us
bounds regarding the variance of the distance between the hashes (in general, we observe that any system of $k$ rows is ``close'' to the system of $k$ independent Gaussian vectors
and get bounds involving $k$th moments). 
We proceed as follows:
\vspace{-0.1in}
\begin{itemize}
\setlength\itemsep{-0.1em}
\item We take two rows and project them onto the linear space spanned by given vectors: $p$ and $r$. 
\item We consider the four coordinates obtained in this way (two for each vector). They are obviously Gaussian, but what is crucial, they are ``almost independent''.
\item  The latter observation is implied by the fact
that these are the coordinates of the projection of a fixed Gaussian vector onto ``almost orthogonal' directions'.
\item We use the property of the Gaussian vector that its projections onto orthogonal directions are independent.
\item To prove that directions considered in our setting are close to orthogonal with high probability, we compute their dot product. This is the place where the structure of the matrix, the chromatic number of the underlying graph and the fact that in our hashing scheme we use random diagonal matrices come into action. We decompose each dot product into roughly speaking $\chi$ components ($\chi$ is the chromatic number), such that each component is a sum of independent random variables with mean $0$. Now we can use standard concentration inequalities to get tight concentration results.
\item The Hadamard matrix used in the extended model preprocesses input vectors to distribute their mass uniformly over all the coordinates, while not changing $L_{2}$ distances (it is a unitary matrix). Balanced vectors lead to much stronger concentration results.
\end{itemize}
\vspace{-0.1in}

Now we are ready to rigorously state our results. 
By $poly(x)$ we denote a function $x^{r}$ for some $r > 0$.
The following theorems guarantee strong concentration of
${\tilde \theta}^n_{p,r}$ around its mean and therefore justify
theoretically the effectiveness of the structured hashing method. 

Let us consider first the extended $\Psi$-regular hashing model. 

\begin{theorem}
\label{ext_technical_theorem}
Consider extended $\Psi$-regular hashing model $\mathcal{M}$ with $t$ independent Gaussian random variables: $g_{1},...,g_{t}$, each of distribution $\mathcal{N}(0,1)$.
Let $N$ be the size of the dataset $\mathcal{D}$. Denote by $k$ the size of the hash and by $n$ the dimensionality of the data. Let $f(n)$ be an arbitrary positive function. Let $\theta_{p,r}$ be the angular distance between vectors $p,r \in \mathcal{D}$.
Then for $a=o_{n}(1)$, $\epsilon>0$, $t \geq n$ and $n$ large enough:
\vspace{-0.07in}
\begin{eqnarray*}
&&\!\!\!\!\!\!\!\!\!\!\!\!\mathbb{P}\left(\forall_{p,r \in\mathcal{D}}\left|\tilde{\theta}^{n}_{p,r} - \frac{\theta_{p,r}}{\pi}\right| \leq \epsilon\right) \geq\\ &&\!\!\!\!\!\!\!\!\!\!\!\!\left[1-4{N \choose 2}e^{-\frac{f^{2}(n)}{2}}-4\chi(\mathcal{P}){k \choose 2}e^{-\frac{2a^{2}t}{f^{4}(t)}}\right](1-\Lambda),
\end{eqnarray*}
where $\Lambda = \frac{1}{\pi} \sum_{j = \lfloor \frac{\epsilon k}{2} \rfloor}^{k}\frac{1}{\sqrt{j}}(\frac{ke}{j})^{j}\mu^{j}(1-\mu)^{k-j}+2e^{-\frac{\epsilon^{2}k}{2}}$ and $\mu=\frac{8(\sqrt{a}\chi(\mathcal{P}) + \Psi\frac{f^{2}(n)}{\sqrt{n}})}{\theta_{p,r}}$.
\end{theorem}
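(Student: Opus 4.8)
The plan is to bound the probability in the product form $[\text{preprocessing event}]\cdot[\text{near-orthogonality event}]\cdot(1-\Lambda)$, mirroring the three factors in the statement: the bracket is the probability that two ``good'' events both hold, and $(1-\Lambda)$ is the conditional concentration obtained on them by running the geometric argument of Lemma~\ref{mean_lemma} for an \emph{almost} independent family of rows. First I would use the orthogonality of $\mathcal{H}\mathcal{R}$ to pass to balanced inputs: since $\mathcal{H},\mathcal{R}$ are orthogonal, $\bar p:=\mathcal{H}\mathcal{R}p$ and $\bar r:=\mathcal{H}\mathcal{R}r$ keep the angle $\theta_{p,r}$ and unit norm, while the hash depends only on the signs of $\langle g^{i}_{\mathcal{D}},\bar p\rangle,\langle g^{i}_{\mathcal{D}},\bar r\rangle$, where $g^{i}_{\mathcal{D}}=\mathcal{D}g^{i}$ is the $i$-th row of $\mathcal{P}_{\Psi}$ with the random signs pushed onto it. The role of the Hadamard step is that a random sign flip followed by $\mathcal{H}$ spreads the mass of a fixed unit vector: a standard subgaussian estimate shows that the handful of scalar functionals of $\bar p,\bar r$ entering the later bounds are balanced at scale $f(n)/\sqrt n$, each failing with probability at most $e^{-f^{2}(n)/2}$, so a union bound over these (the factor $4$) and over the $\binom N2$ dataset pairs yields the first failure term $4\binom N2 e^{-f^{2}(n)/2}$. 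This balancedness is the key input that sharpens everything downstream.

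Next, fixing two rows $k_1<k_2$ and an orthonormal basis $u_1,u_2$ of $H=\mathrm{span}(\bar p,\bar r)$, I would express $X_{k_1},X_{k_2}$ through the jointly Gaussian quadruple $(\langle g^{k_1}_{\mathcal{D}},u_1\rangle,\langle g^{k_1}_{\mathcal{D}},u_2\rangle,\langle g^{k_2}_{\mathcal{D}},u_1\rangle,\langle g^{k_2}_{\mathcal{D}},u_2\rangle)$. Within one row the two coordinates are \emph{exactly} independent, since the disjointness of the column supports $\mathcal{S}_{i,j}$ makes the row a genuine scaled i.i.d.\ Gaussian vector whose projections onto orthogonal directions decorrelate; hence the only obstruction to independence of $X_{k_1},X_{k_2}$ is the four cross-covariances $\mathbb{E}[\langle g^{k_1}_{\mathcal{D}},u_a\rangle\langle g^{k_2}_{\mathcal{D}},u_b\rangle]=\sum_{l}d_{c_1(l)}d_{c_2(l)}(u_a)_{c_1(l)}(u_b)_{c_2(l)}$, where $c_1(l),c_2(l)$ are the columns of $g_l$ in the two rows. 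The terms with $c_1(l)=c_2(l)$ carry $d^2=1$ and survive the random signs; their number is at most $\Psi$ by $\Psi$-regularity, and after weighting by the small entries of $u_a,u_b$ from the balancedness they furnish precisely the deterministic $\Psi$-dependent contribution to $\mu$. The remaining terms ($c_1(l)\neq c_2(l)$) are indexed by the vertices of $\mathcal{G}_{\mathcal{P}}(k_1,k_2)$; properly coloring that graph with $\chi(\mathcal{P})$ colors splits them into independent sets whose participating column indices are all distinct, so each color class is a sum of independent mean-zero $\pm$ terms. A Hoeffding bound per class, with per-term scale again set by balancedness, shows each cross-covariance is $\le a$ up to probability $e^{-2a^{2}t/f^{4}(t)}$; a union bound over the four cross-covariances, the $\chi(\mathcal{P})$ classes and the $\binom k2$ row pairs gives the second failure term $4\chi(\mathcal{P})\binom k2 e^{-2a^{2}t/f^{4}(t)}$.

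On the intersection of the two good events every row pair has all cross-covariances $\le a$, so its quadruple is close to a truly independent Gaussian quadruple. I would then couple the true indicators $X_i$ to an i.i.d.\ family $X_i^{*}\sim\mathrm{Bernoulli}(\theta_{p,r}/\pi)$ (the marginal from Lemma~\ref{mean_lemma}), with per-index disagreement probability at most $\mu$: a disagreement forces the uniformly distributed projected $2$-dimensional direction into a thin angular band around the boundary of $\mathcal{U}_{\bar p,\bar r}$, whose measure is of the order of the total near-orthogonality error divided by $\theta_{p,r}$; combining the $\chi(\mathcal{P})$ classes and converting the covariance bound $a$ into an angular width (whence the $\sqrt a$) matches $\mu/8$. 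Since $\tilde\theta^{n}_{p,r}=\frac1k\sum X_i$, a deviation beyond $\epsilon$ forces either more than $\epsilon k/2$ disagreements, an upper tail of $\mathrm{Bin}(k,\mu)$ bounded via Stirling by $\frac1\pi\sum_{j=\lfloor\epsilon k/2\rfloor}^{k}\frac1{\sqrt j}(ke/j)^{j}\mu^{j}(1-\mu)^{k-j}$, or a deviation of $\frac1k\sum X_i^{*}$ beyond $\epsilon/2$, bounded by $2e^{-\epsilon^{2}k/2}$ via Hoeffding; their sum is exactly $\Lambda$, and a final union over the $\binom N2$ pairs completes the bound.

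The main obstacle is the coupling/thin-cone step together with the chromatic decomposition: I must convert the quantitative near-independence of the Gaussian quadruple into the clean per-index bound $\mu$, verify that each color class genuinely produces independent mean-zero summands at the scale dictated by the Hadamard balancedness, and track the interplay of $a,\Psi,\chi(\mathcal{P}),f$ carefully so that $\mu=o(1)$ under $a=o_n(1)$, which is what forces $\Lambda\to 0$.
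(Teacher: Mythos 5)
Your proposal is correct and follows essentially the same route as the paper: a balancedness event from the Hadamard--sign preprocessing (failure $4\binom{N}{2}e^{-f^2(n)/2}$), a near-orthogonality event for the projected row coefficients proved by splitting each cross-covariance into at most $\Psi$ diagonal terms plus $\chi(\mathcal{P})$ independent color classes handled by Hoeffding/Azuma (failure $4\chi(\mathcal{P})\binom{k}{2}e^{-2a^2t/f^4(t)}$), and then a coupling of the true indicators to an i.i.d.\ Bernoulli$(\theta_{p,r}/\pi)$ family with per-index disagreement probability $\mu$, whose deviation is controlled by the Stirling-bounded binomial tail plus Hoeffding, giving $\Lambda$. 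The only presentational difference is that the paper realizes your ``coupling to an i.i.d.\ family'' concretely via Gram--Schmidt orthogonalization of the coefficient vectors $s_i,v_i$ (producing genuinely independent Gaussian rows at angular distance $\le\sigma(k)\Delta$ from the true ones, whence the $\sqrt{a}$ in $\mu$), which is exactly the step you flagged as the main obstacle.
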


Note how the upper bound on the probability of failure $\mathbb{P}_{\epsilon}$ depends on the $\mathcal{P}$-chromatic number. The theorem above guarantees strong concentration of $\tilde{\theta}^{n}_{p,r}$ around its mean and therefore justifies theoretically the effectiveness of the structured hashing method. It becomes more clear below.

As a corollary, we obtain the following result:

\begin{corollary}
\label{ext_theorem}
Consider extended $\Psi$-regular hashing model $\mathcal{M}$. Assume that the projection matrix $\mathcal{P}$ is Toeplitz Gaussian.
Let $N, n, k$ be as above and denote by $\theta_{p,r}$ be the angular distance between vectors $p,r \in \mathcal{D}$.
Then the following is true for $n$ large enough: 
\vspace{-0.05in}
\begin{eqnarray*}
&&\!\!\!\!\!\!\!\!\!\!\!\!\mathbb{P}\left(\forall_{p,r \in\mathcal{D}}\left|\tilde{\theta}^{n}_{p,r} - \frac{\theta_{p,r}}{\pi}\right| \leq k^{-\frac{1}{3}}\right) 
\geq\\ &&\!\!\!\!\!\!\!\!\!\!\!\!\left[1-O\left(\frac{N^{2}}{e^{poly(n)}}+
k^{2}e^{-n^{\frac{3}{10}}}\right)\right]\left(1-3e^{-\frac{k^{\frac{1}{3}}}{2}}\right).
\end{eqnarray*}
\end{corollary}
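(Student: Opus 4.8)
The plan is to read the corollary directly off Theorem~\ref{ext_technical_theorem} by substituting the structural constants that are specific to a Toeplitz Gaussian $\mathcal{P}$ and then fixing one concrete choice of the still-free quantities $\epsilon$, $f$, and $a$, after which each of the three failure terms collapses to the advertised asymptotic shape. So the proof is really an exercise in parameter bookkeeping on top of the general bound, rather than a fresh probabilistic argument.

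First I would pin down the two structural inputs. By the Remark, a Toeplitz Gaussian matrix is $0$-regular with every $\mathcal{S}_{i,j}$ a singleton, so I may set $\Psi=0$; this deletes the $\Psi f^2(n)/\sqrt n$ summand and leaves $\mu=8\sqrt a\,\chi(\mathcal{P})/\theta_{p,r}$. Next I would bound the $\mathcal{P}$-chromatic number. For fixed rows $k_1<k_2$ the singleton-plus-shift structure forces a vertex $\{j_1,j_2\}$ of $\mathcal{G}_{\mathcal{P}}(k_1,k_2)$ to exist only when $j_2-j_1=k_2-k_1$, and each such pair then shares an index with at most two other pairs; hence $\mathcal{G}_{\mathcal{P}}(k_1,k_2)$ is a disjoint union of paths and cycles and, exactly as for the circulant case, $\chi(\mathcal{P})\le 3$. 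Finally, since a $k\times n$ Toeplitz matrix has only $n+k-1$ distinct diagonals and $k\le n$, the pool size satisfies $t=\Theta(n)$, which is what I would use when simplifying the middle exponent.

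With these in hand I would choose the free parameters. Taking $\epsilon=k^{-1/3}$ makes the explicit term $2e^{-\epsilon^2 k/2}$ in $\Lambda$ equal to $2e^{-k^{1/3}/2}$, supplying two of the three copies in the target factor $1-3e^{-k^{1/3}/2}$. I would then pick $f(n)=n^{c}$ for a suitable constant $c>0$, so that $f^2(n)/2$ is a genuine $\mathrm{poly}(n)$ and the first term becomes $4\binom N2 e^{-f^2(n)/2}=O\!\big(N^2/e^{\mathrm{poly}(n)}\big)$. The remaining choice is $a=o_n(1)$: it must be small enough that $\mu$ is at most a small constant multiple of $\epsilon$ (this controls the sum in $\Lambda$), and at the same time large enough, relative to $f$, that $2a^2 t/f^4(t)\ge n^{3/10}$ with $t=\Theta(n)$, so that the middle term becomes $4\chi(\mathcal{P})\binom k2 e^{-2a^2t/f^4(t)}=O\!\big(k^2 e^{-n^{3/10}}\big)$.

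The one genuine estimate left is the tail sum $\frac1\pi\sum_{j=\lfloor\epsilon k/2\rfloor}^{k}\frac1{\sqrt j}(ke/j)^j\mu^j(1-\mu)^{k-j}$, which I would bound by $e^{-k^{1/3}/2}$ to complete the $1-3e^{-k^{1/3}/2}$ factor. Since $\binom kj\le(ke/j)^j$, this is an upper bound on $\mathbb{P}(\mathrm{Bin}(k,\mu)\ge \epsilon k/2)$; with $\mu$ a small constant multiple of $\epsilon$ the summand is decreasing in $j$ across the whole range, so the sum is at most $k$ times its value at $j=\lfloor\epsilon k/2\rfloor$, namely $\approx(2e\mu/\epsilon)^{\epsilon k/2}=e^{-\Theta(k^{2/3})}$, which is well below $e^{-k^{1/3}/2}$. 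I expect the main obstacle to be exactly the simultaneous tuning of $a$ and $f$: the $\Lambda$-term forces $a\lesssim k^{-2/3}$, while the middle exponent forbids $a$ from being too small given $f^4(t)$, and $f$ must still keep $f^2(n)$ polynomial; one therefore has to exhibit an explicit admissible pair $(a,f)$ (and the implicit range of $k$ versus $n$ it presupposes) for which all three requirements hold at once, and verify the $o_n(1)$ constraint on $a$. Once such a pair is fixed, absorbing the constants and the factor $\chi(\mathcal{P})\le 3$ into the $O(\cdot)$ yields the stated bound.
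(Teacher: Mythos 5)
Your proposal is correct and follows essentially the same route as the paper: the paper's entire proof is the one-line parameter substitution into Theorem~\ref{ext_technical_theorem}, taking $a=n^{-\frac{1}{3}}$, $\epsilon=k^{-\frac{1}{3}}$, $f(n)=n^{p}$ for a small enough constant $p>0$, and noting that a Toeplitz Gaussian matrix is $0$-regular with $\chi(\mathcal{P})\leq 3$. The explicit admissible pair you were seeking is exactly $(a,f)=(n^{-1/3},n^{p})$; if anything, your write-up is more careful than the paper's, which neither estimates the tail sum in $\Lambda$ nor records the implicit restriction on $k$ relative to $n$ (and $\theta_{p,r}$) that this choice of $a$ presupposes.
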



\begin{figure}[h!]
\vspace{-0.2in}
\centering
\hspace{-0.1in}\includegraphics[width = 1.7in]{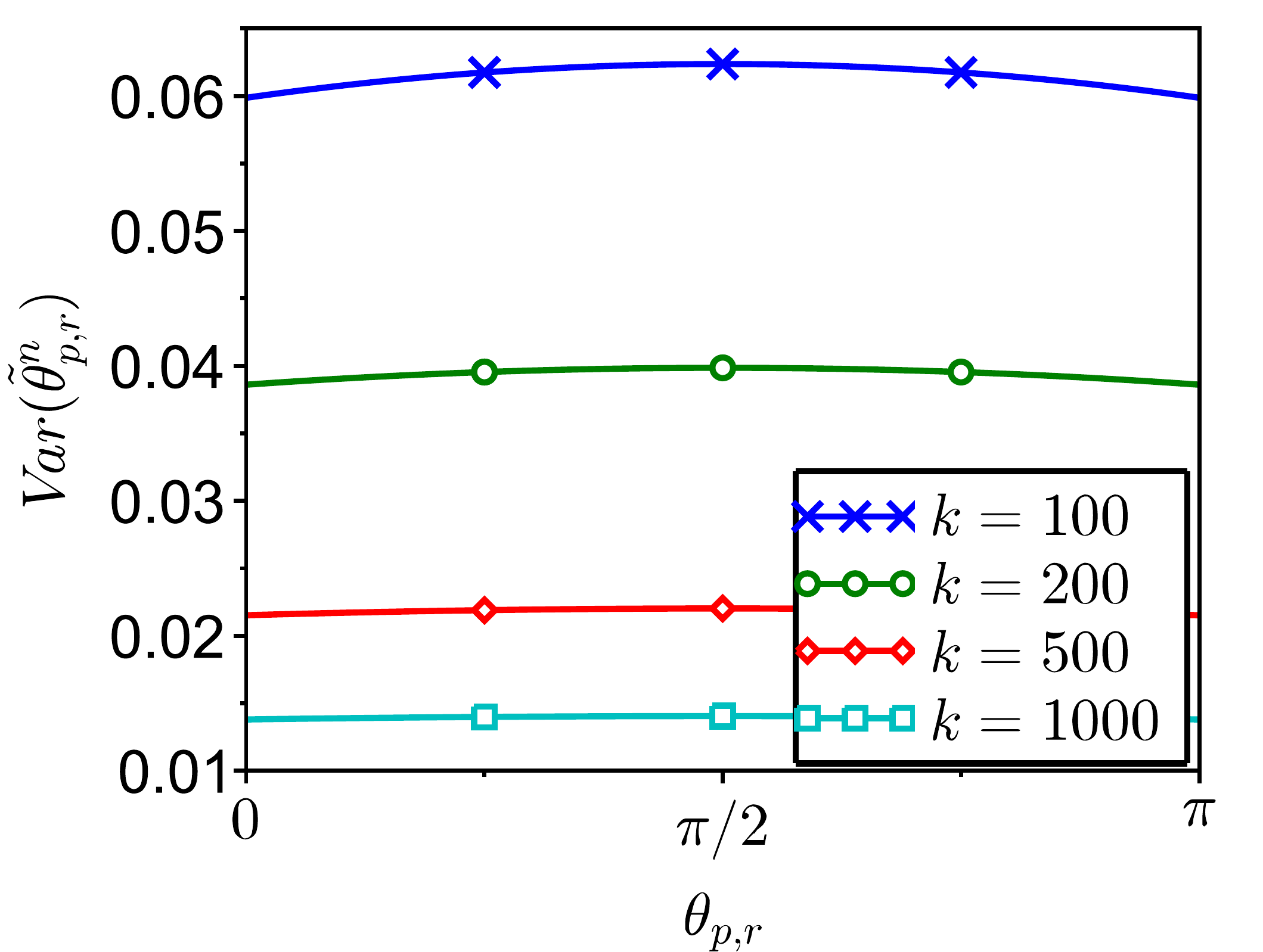}
\hspace{-0.1in}\includegraphics[width = 1.7in]{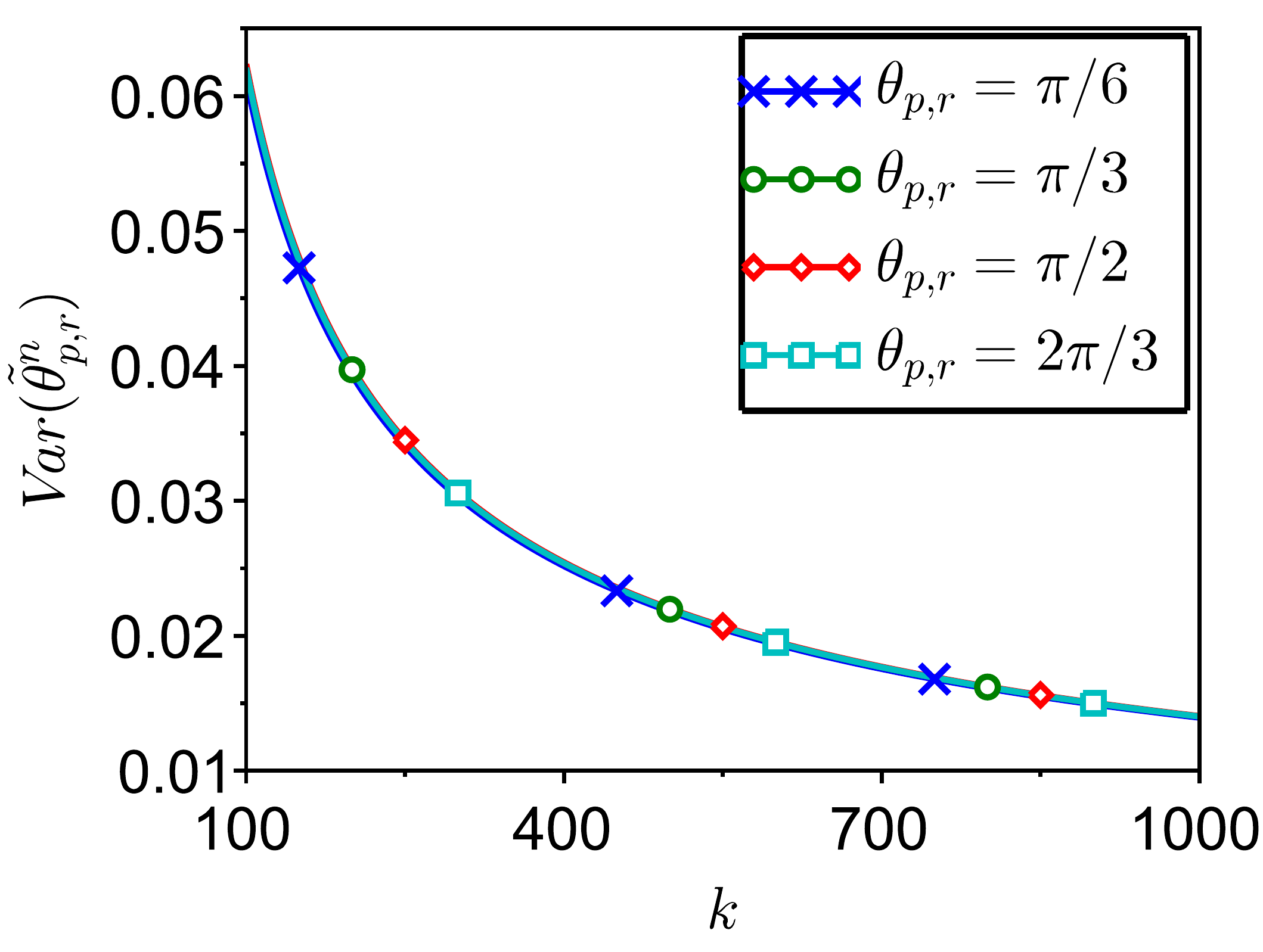}
\vspace{-0.25in}
\caption{The dependence of the upper bound on the variance of the normalized approximate angle $\tilde{\theta}^{n}_{p,r}$ 
on \textbf{(left:)} an angle when the size of the hash $k$ is fixed (the upper bound scales as $\frac{1}{k}$
and is almost independent of $\theta_{p,r}$), \textbf{(right:)} the size of the hash $k$ when the true angular distance $\theta_{p,r}$ is fixed (the upper bound converges to $0$ as $k \rightarrow \infty$).}
\label{fig:theory1}
\vspace{-0.1in}
\end{figure}

Corollary \ref{ext_theorem} follows from Theorem \ref{ext_technical_theorem} by taking:
$a=n^{-\frac{1}{3}}$, $\epsilon = k^{-\frac{1}{3}}$, $f(n)=n^{p}$ for small enough constant $p>0$, noticing that every Toeplitz Gaussian matrix is $0$-regular and the corresponding $\mathcal{P}$-chromatic number $\chi(\mathcal{P})$ is at most $3$. 

Term $O\left(\frac{N^{2}}{e^{poly(n)}}\right)$ is related to the balancedness property. To clarify, the goal of multiplying by $\mathcal{H}\mathcal{R}$ in the preprocessing step is to make each input vector balanced, or in other words to spread out the mass of the vector across all the dimensions in approximately uniform way. This property is required to obtain theoretical results (also note it was unnecessary in the unstructured setting) and does not depend on the number of projected dimensions.

Let us consider now the short $\Psi$-regular hashing model. The theorem presented below is an application of the 
Chebyshev's inequality preceded by the careful analysis of the variance of $\tilde{\theta}^{n}_{p,r}$.

\begin{theorem}
\label{short_theorem}
Consider short $\Psi$-regular hashing model $\mathcal{M}$, where $\mathcal{P}$ is a Toeplitz Gaussian matrix.
Denote by $k$ the size of the hash.
Let $\theta_{p,r}$ be the angular distance between vectors $p,r \in \mathcal{D}$, where $\mathcal{D}$ is the dataset.
Then the following is true
\begin{equation}
\forall_{p,r \in\mathcal{D}}Var(\tilde{\theta}^{n}_{p,r}) \leq \frac{1}{k}\frac{\theta_{p,r}(\pi-\theta_{p,r})}{\pi^{2}} + (\frac{\log(k)}{k^{2}})^{\frac{1}{3}},
\end{equation}
and thus for any $c>0$ and $p,r \in \mathcal{D}$:
\vspace{-0.1in}
$$\mathbb{P}\left(\left|\tilde{\theta}^{n}_{p,r} - \frac{\theta_{p,r}}{\pi}\right| \geq c \left(\frac{\sqrt{\log(k)}}{k}\right)^{\frac{1}{3}}\right) = O\left(\frac{1}{c^{2}}\right).$$
\end{theorem}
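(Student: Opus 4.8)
The plan is to reduce everything to the variance bound, since the probability statement follows from it by a one-line application of Chebyshev's inequality: I would take the threshold $\lambda = c(\sqrt{\log(k)}/k)^{1/3}$ precisely so that $\lambda^2 = c^2 (\log(k)/k^2)^{1/3}$ equals $c^2$ times the second (dominant) term of the variance bound, whence $\mathrm{Var}(\tilde{\theta}^n_{p,r})/\lambda^2 = O(1/c^2)$. So I concentrate on proving the variance estimate. By Lemma~\ref{mean_lemma} I may write $\tilde{\theta}^n_{p,r} = \frac1k\sum_{i=1}^k X_i$, where $X_i$ is the indicator that the hyperplane normal to the $i$-th (diagonally rescaled) row separates $p$ and $r$, so each $X_i$ is Bernoulli with parameter $\theta_{p,r}/\pi$. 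Then
\[
\mathrm{Var}(\tilde{\theta}^n_{p,r}) = \frac1{k^2}\sum_{i=1}^k \mathrm{Var}(X_i) + \frac1{k^2}\sum_{i\ne j}\mathrm{Cov}(X_i,X_j).
\]
The first sum equals exactly $\frac1k\frac{\theta_{p,r}(\pi-\theta_{p,r})}{\pi^2}$ because $\mathrm{Var}(X_i)=\frac{\theta_{p,r}}{\pi}(1-\frac{\theta_{p,r}}{\pi})$, so the entire task becomes bounding the covariance sum by $(\log(k)/k^2)^{1/3}$.

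To estimate a single covariance I would fix two rows $i<j$ and condition on the random diagonal $\mathcal{D}$. For a Toeplitz matrix, rows $i$ and $j$ share a Gaussian exactly along the single descending diagonal determined by the shift $s=j-i$; conditionally on $\mathcal{D}$, the four projections $\langle g^i_{\mathcal{D}},p\rangle,\langle g^i_{\mathcal{D}},r\rangle,\langle g^j_{\mathcal{D}},p\rangle,\langle g^j_{\mathcal{D}},r\rangle$ are jointly Gaussian, the two within-row correlations equal $\cos\theta_{p,r}$, and each cross-row correlation is a Rademacher linear form such as $\rho^{pp}_s = \sum_l d_l d_{l+s} p_l p_{l+s}$, which has mean zero under $\mathcal{D}$ and second moment $V^{pp}_s = \sum_l p_l^2 p_{l+s}^2$ (and analogously for the $pr$, $rp$, $rr$ slots). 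If all four cross-correlations vanished, $X_i$ and $X_j$ would be independent and the covariance zero; hence the covariance is governed by the typical magnitude of these cross-correlations, which is where both the random diagonal (forcing mean zero) and the structure of $\mathcal{P}$ enter.

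To control $|\rho^{pp}_s|$ and its analogues I would invoke the $\mathcal{P}$-chromatic number. The products $d_l d_{l+s}$ share sign variables exactly according to the graph $\mathcal{G}_{\mathcal{P}}(i,j)$, which for a Toeplitz matrix is a union of vertex-disjoint paths and cycles with $\chi(\mathcal{P})\le 3$; partitioning the terms into $\chi(\mathcal{P})$ color classes writes each cross-correlation as a sum of $\chi(\mathcal{P})$ sub-sums of \emph{independent}, bounded, mean-zero terms, to each of which Hoeffding's inequality applies, giving a sub-Gaussian tail $\mathbb{P}(|\rho_s|>\delta)\le C\,\chi(\mathcal{P})\exp(-c\delta^2/V_s)$. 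The decisive $p,r$-independent feature of the shift structure is the identity $\sum_s V^{pp}_s = \sum_l p_l^2\big(\sum_s p_{l+s}^2\big) = \|p\|_2^2\,\|p\|_2^2 = 1$ (and likewise for the other slots), which is what lets the final bound be stated uniformly over all inputs.

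Finally I would assemble the pieces. For each pair I bound $|\mathrm{Cov}(X_i,X_j)|$ by splitting on whether the cross-correlations for that pair exceed a threshold $\delta$: on the good event the sign-agreement probability is a smooth (Lipschitz) function of the correlation matrix, contributing a term controlled by $\delta$, while on the bad event I use the Hoeffding tail above. Summing over all $\binom{k}{2}$ pairs, grouped by shift $s$ (there are $k-s$ pairs at shift $s$), then optimizing $\delta$ and invoking H\"older's inequality together with $\sum_s V_s\le 1$, produces the cube-root exponent and the $\log(k)$ factor of $(\log(k)/k^2)^{1/3}$. \textbf{This covariance estimate --- quantifying the near-independence of two structured rows and summing it uniformly in $p,r$ --- is the main obstacle}; note that it rests on the shift-sum identity rather than on any balancedness of the inputs, which is exactly why the short model yields a weaker rate than the extended model of Theorem~\ref{ext_technical_theorem}, where the Hadamard preprocessing additionally flattens $\|p\|_\infty$ and sharpens the concentration. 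The diagonal term and the concluding Chebyshev step are routine, so the proof stands or falls on the covariance bound.
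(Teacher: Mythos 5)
Your proposal follows essentially the same route as the paper's proof: the same variance decomposition with $\mathrm{Var}(U_i)=\frac{\theta_{p,r}}{\pi}\bigl(1-\frac{\theta_{p,r}}{\pi}\bigr)$, covariances controlled by the Rademacher cross-correlations of pairs of rows (the paper's near-orthogonality parameter $\Delta$ and Lemma on $|\mathbb{P}(U_iU_j=1)-\mathbb{P}(U_i=1)\mathbb{P}(U_j=1)|=O(\Delta)$), the chromatic-number partition followed by Azuma/Hoeffding, the crucial shift-sum identity $\sum_s V_s = 1$ (the paper's $\sum_i \alpha_{x,i}=1$), a split between large- and small-variance pairs with a trivial $O(1)$ bound on the former, and a concluding Chebyshev step. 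The only difference is cosmetic, in how the final optimization is organized (your per-pair threshold $\delta$ plus H\"older versus the paper's counting of shifts with $\alpha_{x,i}\geq f(k)/\binom{k}{2}$ and the choice $f(k)=(k^2/\log k)^{1/3}$), and both yield the same $\bigl(\frac{\log k}{k^2}\bigr)^{1/3}$ covariance term.
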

\vspace{-0.1in}

Figure~\ref{fig:theory1} shows the dependence of the upper bound on the variance of the normalized approximate angle $\tilde{\theta}^{n}_{p,r}$ on resp. the true angular distance $\theta_{p,r}$ and the size of the hash $k$ when resp. $k$ and $\theta_{p,r}$ are fixed.

Rate $k^{-\frac{1}{3}}$ that appears in the theoretical results we obtained and the non-linear with $k$ variance decay of Figure~\ref{fig:theory1} (right) is a consequence of the structured setting, where the quality of the nonlinear embedding is affected by the existence of dependencies between entries of the structured matrix.

\section{Numerical experiments}
\label{sec:ne}

In this section we demonstrate that all considered structured matrices achieve reasonable performance in comparison to fully random matrices. Specifically we show: i) the dependence of the performance on the size of the hash and the reduction factor $\frac{n}{k}$ for different structured matrices and ii) the performance of different structured matrices when used with neural networks and $1$-NN classifier. Experiments confirm our novel theoretical results.

\vspace{-0.1in}
\begin{figure}[h!]
\centering
\begin{tikzpicture}[shorten >=1pt,->,draw=black!50, node distance=\layersep]
    \tikzstyle{every pin edge}=[<-,shorten <=1pt]
    \tikzstyle{neuron}=[circle,fill=black!27,minimum size=20pt,inner sep=0pt]
    \tikzstyle{input neuron}=[neuron, fill=green!50];
    \tikzstyle{output neuron}=[neuron, fill=red!50];
    \tikzstyle{hidden neuron}=[neuron, fill=blue!50];
     \tikzstyle{transparent neuron}=[neuron, fill=blue!0];
    \tikzstyle{annot} = [text width=4em, text centered]

        \node[input neuron, pin=left:$x_1$] (I-1) at (0,-0.5) {};
        \node[input neuron, pin=left:$x_2$] (I-2) at (0,-2) {};
        \node[transparent neuron, pin=left:$\dots$] (I-3) at (0,-3) {$\dots$};	
        \node[input neuron, pin=left:$x_n$] (I-4) at (0,-4.5) {};

    \foreach \name / \y in {1,...,2}
        \path[yshift=0cm]
            node[hidden neuron] (H1-\name) at (\layersep,-\y cm) {\textbf{\y}};
        \path[yshift=0cm]
            node[transparent neuron] (H1-3) at (\layersep,-3 cm) {$\dots$};
        \path[yshift=0cm]
            node[hidden neuron] (H1-4) at (\layersep,-4 cm) {$\bf{k}$};

    \foreach \name / \y in {1,...,2}
        \path[yshift=0cm]
            node[hidden neuron] (H2-\name) at (\layersepp,-\y cm) {};
        \path[yshift=0cm]
            node[transparent neuron] (H2-3) at (\layersepp,-3 cm) {$\dots$};
        \path[yshift=0cm]
	      node[hidden neuron] (H2-4) at (\layersepp,-4 cm) {};




        \path[yshift=0cm]
            node[output neuron] (O-1) at (\layersepppp,-1.2 cm) {$y_1$};
        \path[yshift=0cm]
            node[output neuron] (O-2) at (\layersepppp,-2 cm) {$y_2$};
         \path[yshift=0cm]
            node[transparent neuron] (O-3) at (\layersepppp,-3 cm) {$\dots$};
        \path[yshift=0cm]
            node[output neuron] (O-4) at (\layersepppp,-3.8 cm) {$y_s$};

  \foreach \source in {1,...,4}
      \foreach \dest in {2,...,4}
          \path (I-\source) edge [ultra thick,red] (H1-\dest);

  \foreach \source in {1,...,1}
      \foreach \dest in {1,...,1}
	\path (I-\source) edge [ultra thick,red] node [above] {\textcolor{red}{$\bf \mathcal{P}_{\text{
}}\mathcal{D}$}} (H1-\dest);

  \foreach \source in {2,...,4}
      \foreach \dest in {1,...,1}
	\path (I-\source) edge [ultra thick,red] (H1-\dest);

  \foreach \source in {1,...,4}
      \foreach \dest in {2,...,4}
          \path (H1-\source) edge [ultra thick] (H2-\dest);

  \foreach \source in {1,...,1}
      \foreach \dest in {1,...,1}
	\path (H1-\source) edge [ultra thick] (H2-\dest);

  \foreach \source in {2,...,4}
      \foreach \dest in {1,...,1}
	\path (H1-\source) edge [ultra thick] (H2-\dest);


 \foreach \source in {1,...,1}
      \foreach \dest in {1,...,1}
	\path (H2-\source) edge [ultra thick] (O-\dest);

 \foreach \source in {1,...,4}
     \foreach \dest in {2,...,4}
	\path (H2-\source) edge [ultra thick] (O-\dest);

 \foreach \source in {2,...,4}
      \foreach \dest in {1,...,1}
	\path (H2-\source) edge [ultra thick] (O-\dest);


\end{tikzpicture}
\caption{Fully-connected network with randomized input layer (red edges correspond to structured matrix). $k < n$. $\mathcal{D}$ is a random diagonal matrix with diagonal entries chosen independently from the binary set $\{-1,1\}$,
each value being chosen with probability $\frac{1}{2}$, and $\mathcal{P}$ is a structured matrix. \textit{The figure should be viewed in color.}}
\label{fig:autoenrsfc}
\vspace{-0.03in}
\end{figure}
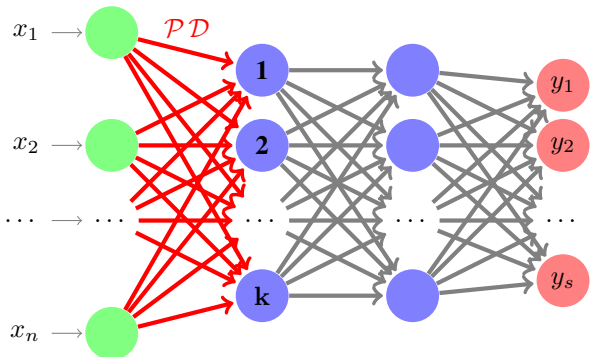

We performed experiments on \textit{MNIST} dataset downloaded from \url{http://yann.lecun.com/exdb/mnist/}. 
The data was preprocessed\footnote{Preprocessing is discussed in Section~\ref{sec:has_mech}.} according to the \textit{short} hashing scheme (the \textit{extended} hashing scheme 
gave results of no significant statistical difference) before being given to the input of the network. We first considered 
a simple model of the fully-connected feed-forward neural network with two hidden layers, where the first hidden layer 
had $k$ units that use sign non-linearity (we explored $k = \{16,32,64,128,256,512,1024\}$), and the second hidden layer 
had $100$ units that use ReLU non-linearity. The size of the second hidden layer was chosen as follows. We first investigated 
the dependence of the test error on this size in case when $n=k$ and the inputs instead of being randomly projected are 
multiplied by identity (it is equivalent to eliminating first hidden layer). We then chose as a size the threshold below which test 
performance was rapidly deteriorating. 

\begin{figure}[htp!]
\vspace{-0.15in}
\center
a)\includegraphics[width = 2.55in]{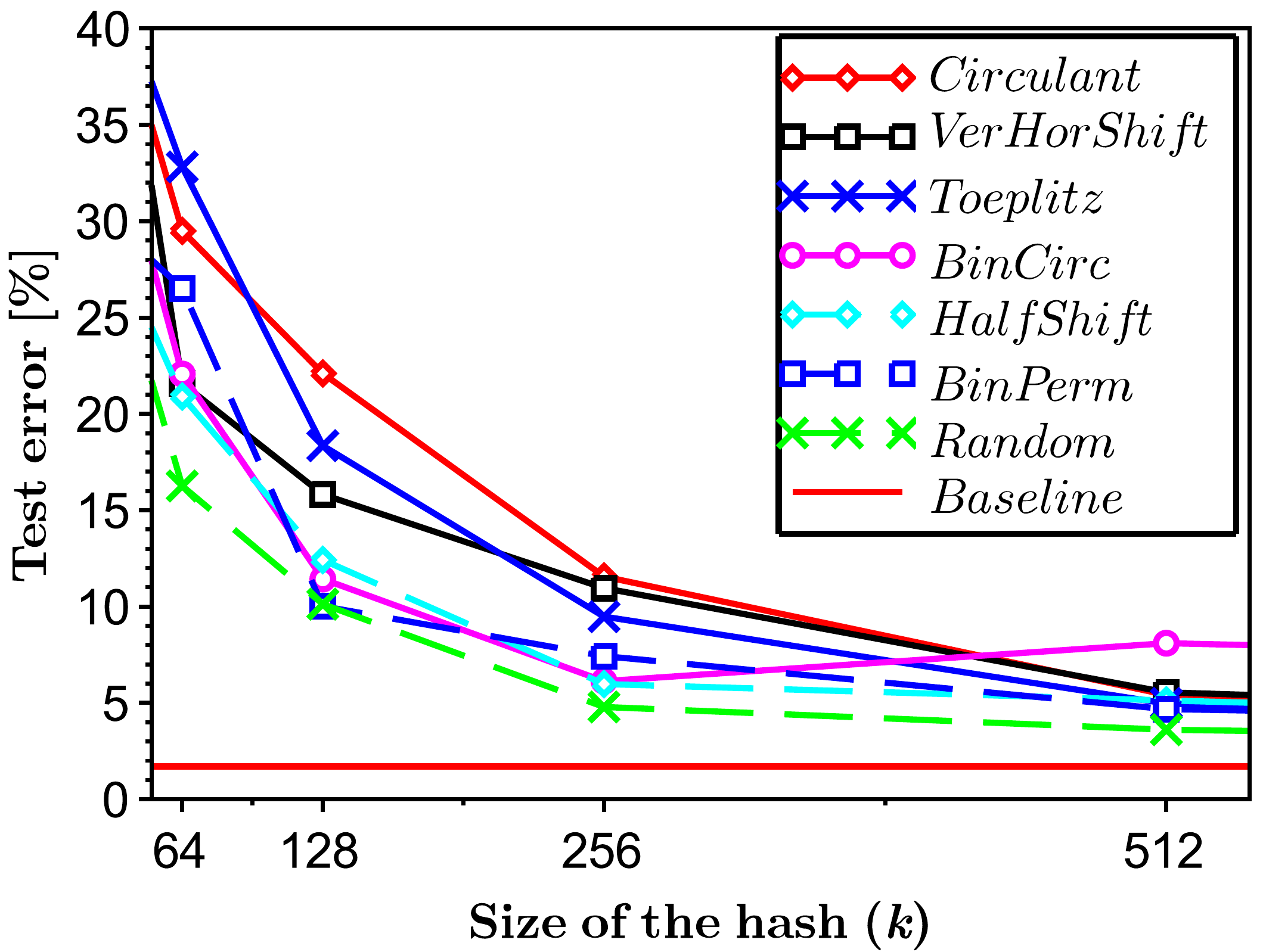}\\
b)\includegraphics[width = 2.55in]{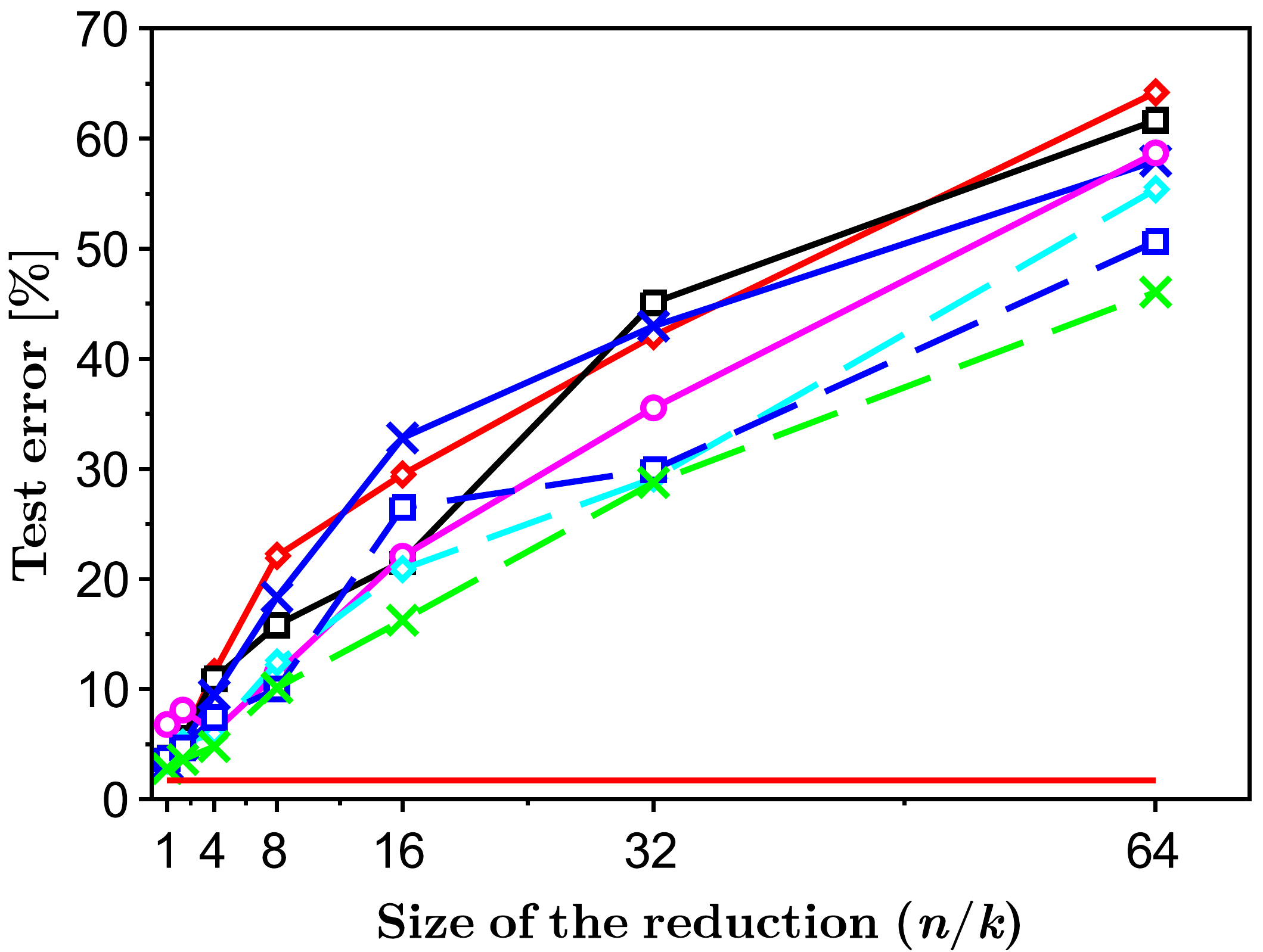}
\vspace{-0.1in}
\caption{Mean test error versus a) the size of the hash ($k$) (zoomed plot\footnotemark[3]), b) the size of the reduction ($n/k$) for the network. Baseline corresponds to $1.7\%$.}
\label{fig:plots}
\vspace{-0.08in}
\end{figure}

\begin{table*}[htp!]
\center
\setlength{\tabcolsep}{4.5pt}
\caption{Mean and std of the test error versus the size of the hash ($k$) / size of the reduction ($n/k$) for the network.}
\vspace{-0.1in}
\begin{tabular}{|c|c|c|c|c|c|c|c|c|c|c|c|}
\hline
\multicolumn{1}{|c|}{\multirow{2}{*}{$k\:\:$/$\:\:\frac{n}{k}$}} & Circulant & Random & BinPerm & BinCirc & HalfShift & Toeplitz & VerHorShift \\
& $[\%]$ & $[\%]$ & $[\%]$ &  $[\%]$ & $[\%]$ & $[\%]$ & $[\%]$\\
  \hline
\hline
$1024$ / $1$ & $3.53 \pm 0.16$ & $2.78 \pm 0.10$ & $3.69 \pm 0.21$ & $6.79 \pm 0.49$ & $3.54 \pm 0.16$ & $3.16 \pm 0.19$ & $3.74 \pm 0.16$\\
\hline
$512$ / $2$ & $5.42 \pm 0.83$ & $3.61 \pm 0.19$ & $4.68\pm 0.35$ & $8.10 \pm 1.85$ & $5.13 \pm 2.15$ & $4.97 \pm 0.53$ & $5.55 \pm 0.62$\\
\hline
$256$ / $4$ & $11.56 \pm 1.42$ & $4.79 \pm 0.13$ & $7.43 \pm 1.31$ & $6.13 \pm 1.42$ & $5.98 \pm 1.05$ & $9.48 \pm 1.88$ & $10.96 \pm 2.78$\\
\hline
$128$ / $8$ & $22.10 \pm 5.42$ & $10.13 \pm 0.24$ & $10.02 \pm 0.50$ & $11.43 \pm 0.92$ & $12.42 \pm 0.95$ & $18.35 \pm 2.36$ & $15.82 \pm 1.63$\\
\hline
$64$ / $16$ & $29.50 \pm 1.13$ & $16.26 \pm 1.02$ & $26.50 \pm 10.55$ & $22.07 \pm 1.35$ & $20.90 \pm 2.25$ & $32.82 \pm 4.83$ & $21.59 \pm 3.05$\\
\hline
$32$ / $32$ & $42.07 \pm 4.16$ & $28.77\pm 2.28$ & $29.94 \pm 3.48$ & $35.55 \pm 3.12$ & $29.15 \pm 0.97$ & $42.97 \pm 2.08$ & $45.10 \pm 4.46$\\
\hline
$16$ / $64$ & $64.20 \pm 6.76$ & $46.06\pm 1.03$ & $50.65 \pm 5.66$ & $58.70\pm 7.15$ & $55.40 \pm 6.90$ & $57.96 \pm 3.65$ & $61.66 \pm 4.08$\\
\hline
\end{tabular} 
\label{tab:one}
\end{table*}

\begin{table*}[htp!]
\vspace{-0.05in}
\center
\setlength{\tabcolsep}{1.2pt}
\caption{Memory complexity and number of required random values for structured matrices and \textit{Random} matrix.}
\vspace{-0.1in}
\begin{tabular}{|c||c|c|c|c|c|c|c|c|c|c|c|}
\hline
Matrix & Random & Circulant & BinPerm & HalfShift & VerHorShift & BinCirc & Toeplitz\\
  \hline
\hline
$\#$ of random values & $\mathcal{O}(nk)$ & $\mathcal{O}(n)$ & $\mathcal{O}(n)$ & $\mathcal{O}(n)$ & $\mathcal{O}(n)$ &  $\mathcal{O}(n)$ & $\mathcal{O}(n)$\\
\hline
Memory complexity& $\mathcal{O}(nk)$ & $\mathcal{O}(n)$ & $\mathcal{O}(nk)$ & $\mathcal{O}(n)$ & $\mathcal{O}(n)$ &  $\mathcal{O}(n)$ & $\mathcal{O}(n)$\\
\hline
\end{tabular} 
\label{tab:two}
\vspace{-0.1in}
\end{table*}

The first hidden layer contains random untrained weights, and we only train the parameters of the second layer and the output layer. 
The network we consider is shown in Figure~\ref{fig:autoenrsfc}. Each experiment was initialized from a random set of parameters sampled 
uniformly within the unit cube, and was repeated $1000$ times. All networks were trained for $30$ epochs using SGD~\cite{bottou-98x}. 
The experiments with constant learning rate are reported (we also explored learning rate decay, but obtained similar results), where the 
learning rate was chosen from the set $\{0.0005, 0.001, 0.002, 0.005, 0.01, 0.02, 0.05, 0.1 , 0.2, 0.5,$\\$1\}$ to minimize the test error. 
The weights of the first hidden layer correspond to the entries in the ``preprocessed'' structured matrix. We explored seven kinds of random matrices (first six are structured): \textit{Circulant}, \textit{Toeplitz}, \textit{HalfShift}, 
\textit{VerHorShift}, \textit{BinPerm}, \textit{BinCirc}, and \textit{Random} (entries are independent and drawn from Gaussian distribution $\mathcal{N}(0,1)$). 
All codes were implemented in Torch7.

\begin{figure}[h!]
\center
a)\includegraphics[width = 2.55in]{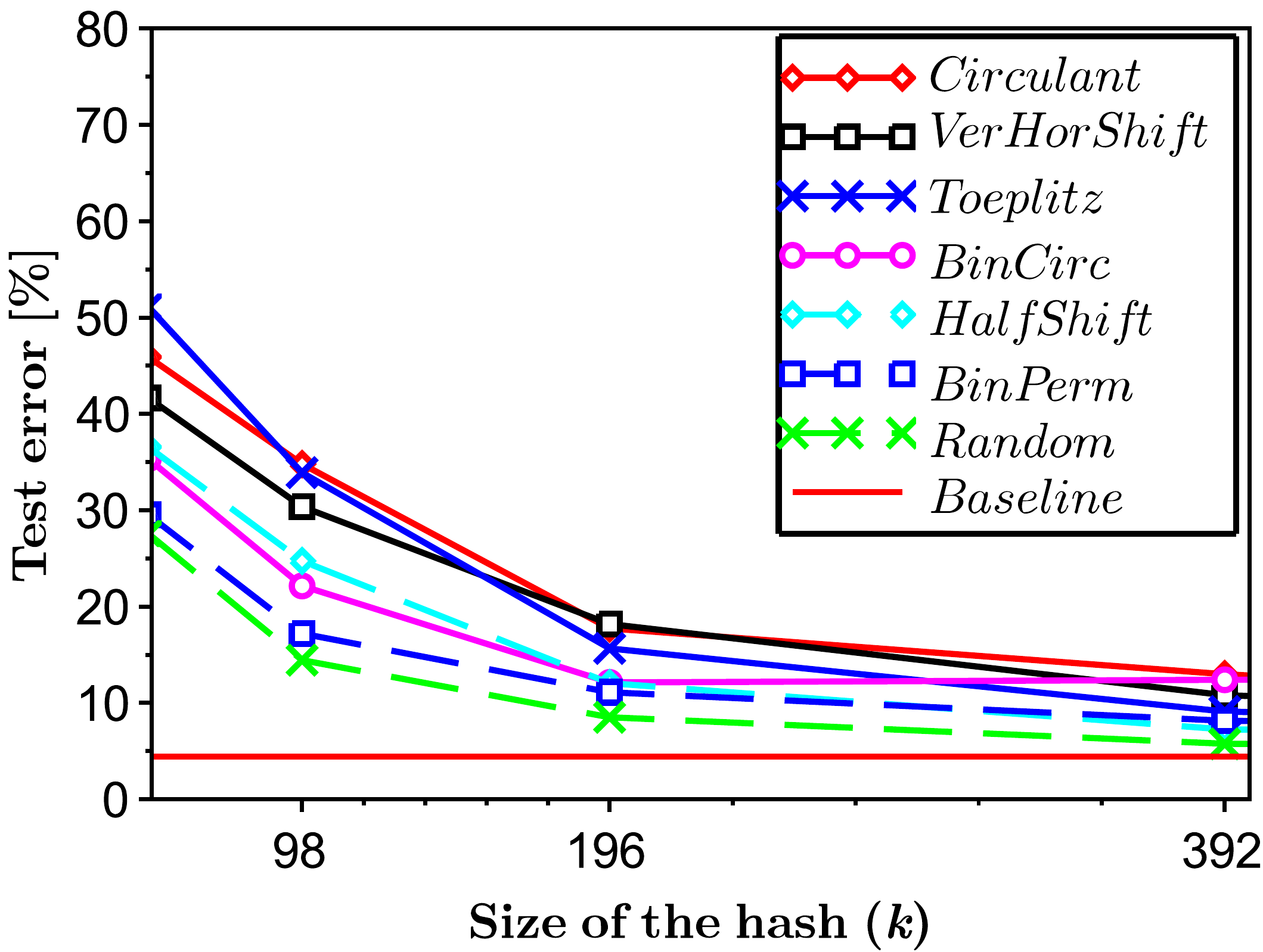}\\
b)\includegraphics[width = 2.55in]{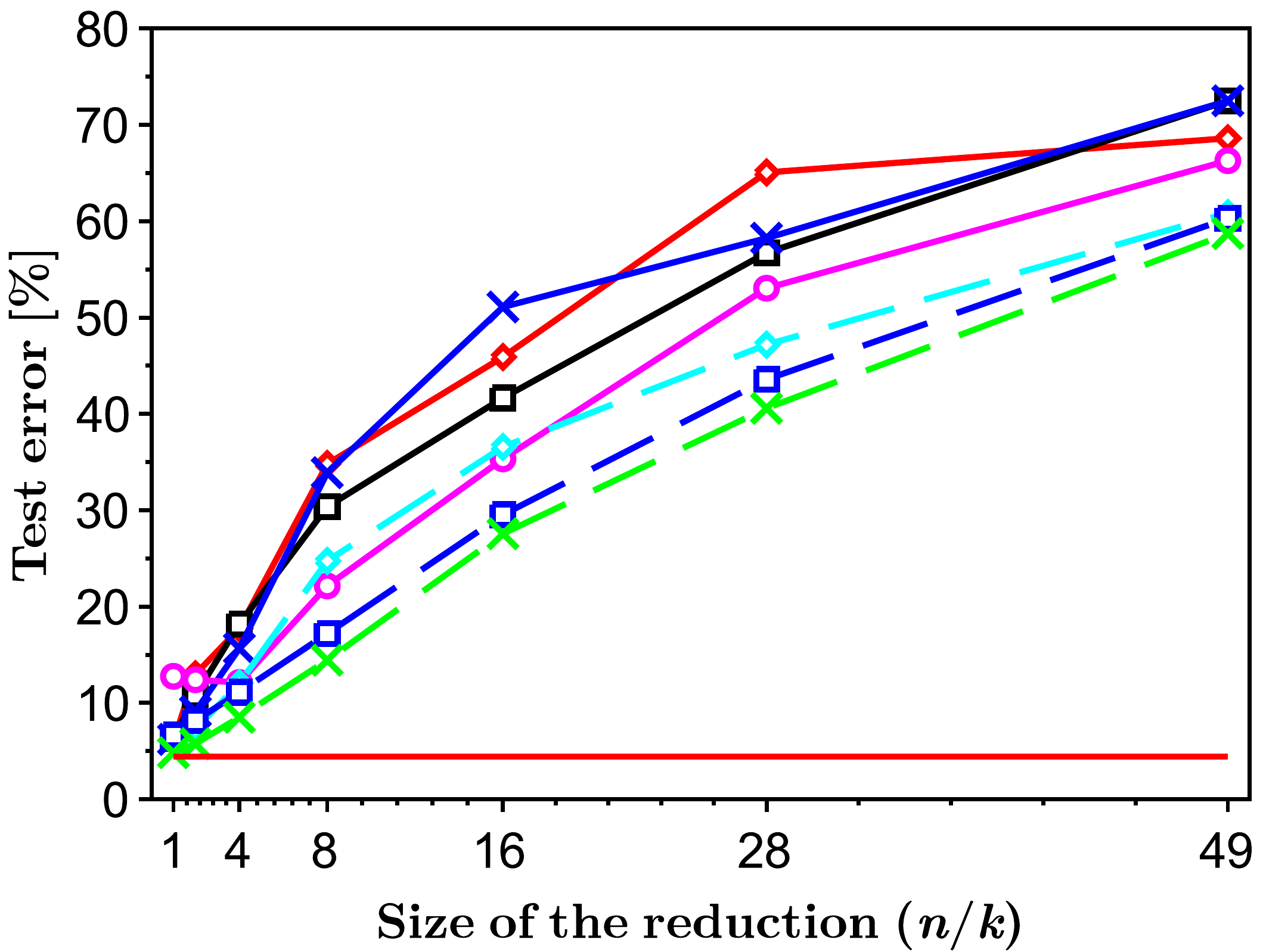}
\vspace{-0.05in}
\caption{Mean test error versus a) the size of the hash ($k$) (zoomed plot\footnotemark[2]), b) the size of the reduction ($n/k$) for $1$-NN. Baseline corresponds to $4.5\%$.}
\label{fig:plotskNN}
\vspace{-0.1in}
\end{figure}

Figure~\ref{fig:plots}a shows how the mean test error is affected by the size of the hash, and  Figure~\ref{fig:plots}b shows how the mean test error changes 
with the size of the reduction, where the size of the reduction is defined as the ratio $n/k$. In Table~\ref{tab:one} we report both the mean and the standard deviation (std) of the test error across our experiments. Training results are reported in the Supplementary material. 
\textit{Baseline} refers to the network with one hidden layer containing $100$ hidden units, where all parameters are trained.

\footnotetext[3]{Original plot is in the Supplement.}

Experimental results show how the performance is affected by using structured hashed projections to reduce data dimensionality. Figure~\ref{fig:plots}b and Table~\ref{tab:one} show close to linear dependence between the error and the size of the reduction. Simultaneously, this approach leads to computational savings and the reduction of memory storage. i.e. the reduction of the number of input weights for the hidden layer (in example for \textit{Circulant} matrix this reduction is of the order $\mathcal{O}(n/k)$\footnote[4]{The memory required for storing \textit{Circulant} matrix is negligible compared to the number of weights.}). Memory complexity, i.e. memory required to store the matrix, and the number of required random values for different structured matrices and \textit{Random} matrix are summarized in Table~\ref{tab:two}.

Experiments show that using fully random matrix gives the best performance as predicted in theory. \textit{BinPerm} matrix exhibits comparable performance to the \textit{Random} matrix, which might be explained by the fact that applying permutation itself adds an additional source of randomness. 
The next best performer is \textit{HalfShift}, whose generation is less random than the one of \textit{BinPerm} or \textit{Random}. Thus its performance, as expected, is worse than for these two other matrices. However, as opposed to \textit{BinPerm} and \textit{Random} matrices, \textit{HalfShift} matrix can be stored in linear space. The results also show that in general all structured matrices perform relatively well for medium-size reductions. Finally, all structured matrices except for \textit{BinPerm} lead to the biggest memory savings and require the smallest ``budget of randomness''. Moreover, they often lead to computational efficiency, e.g. \textit{Toeplitz} matrix-vector multiplications can be efficiently implemented via Fast Fourier Transform~\cite{felix}. But, as mentioned before, faster than naive matrix-vector product computations can be performed also for \textit{BinPerm}, \textit{HalfShift}, and \textit{VerHorShift}.

Finally, we also report how the performance of $1$-NN algorithm is affected by using structured hashed projections for the dimensionality reduction. We obtained similar plots as for the case of neural networks. They are captured in Figure~\ref{fig:plotskNN}. The table showing the mean and the standard deviation of the test error for experiments with $1$-NN is enclosed in the Supplementary material.

\section{Conclusions}
\label{sec:con}

This paper shows that structured hashed projections well preserve 
the angular distance between input data instances. Our theoretical results 
consider mapping the data to lower-dimensional space using various structured matrices, where the structured linear projections are
followed by the \textit{sign} nonlinearity. This non-linear operation was not 
considered for such a wide range of structured matrices in previous related theoretical works. The theoretical setting naturally applies 
to the multilayer network framework, where the basic components of the architecture perform 
matrix-vector multiplication followed by the nonlinear mapping. We empirically verify 
our theoretical findings and show how using structured hashed projections for dimensionality reduction
affects the performance of neural network and nearest neighbor classifier.

\bibliography{SH2016}

\begin{thebibliography}{54}
\providecommand{\natexlab}[1]{#1}
\providecommand{\url}[1]{\texttt{#1}}
\expandafter\ifx\csname urlstyle\endcsname\relax
  \providecommand{\doi}[1]{doi: #1}\else
  \providecommand{\doi}{doi: \begingroup \urlstyle{rm}\Url}\fi

\bibitem[Achlioptas(2003)]{Achlioptas:2003:DRP:861182.861189}
Achlioptas, D.
\newblock Database-friendly random projections: Johnson-lindenstrauss with
  binary coins.
\newblock \emph{J. Comput. Syst. Sci.}, 66\penalty0 (4):\penalty0 671--687,
  2003.

\bibitem[Altman(1992)]{citeulike:5847607}
Altman, N.~S.
\newblock {An Introduction to Kernel and Nearest-Neighbor Nonparametric
  Regression}.
\newblock \emph{The American Statistician}, 46\penalty0 (3):\penalty0 175--185,
  1992.

\bibitem[Bingham \& Mannila(2001)Bingham and
  Mannila]{Bingham:2001:RPD:502512.502546}
Bingham, E. and Mannila, H.
\newblock Random projection in dimensionality reduction: Applications to image
  and text data.
\newblock In \emph{KDD}, 2001.

\bibitem[Blum(2006)]{Blum:2005:RPM:2182373.2182377}
Blum, A.
\newblock Random projection, margins, kernels, and feature-selection.
\newblock In \emph{SLSFS}, 2006.

\bibitem[Boedecker et~al.(2009)Boedecker, Obst, Mayer, and
  Asada]{Boedecker2009}
Boedecker, J., Obst, O., Mayer, N.~M., and Asada, M.
\newblock {Initialization and self-organized optimization of recurrent neural
  network connectivity.}
\newblock \emph{HFSP journal}, 3\penalty0 (5):\penalty0 340--9, 2009.

\bibitem[Bottou(1998)]{bottou-98x}
Bottou, L.
\newblock Online algorithms and stochastic approximations.
\newblock In \emph{Online Learning and Neural Networks}. Cambridge University
  Press, 1998.

\bibitem[Charikar(2002)]{charikar}
Charikar, Moses.
\newblock Similarity estimation techniques from rounding algorithms.
\newblock In \emph{Proceedings on 34th Annual {ACM} Symposium on Theory of
  Computing, May 19-21, 2002, Montr{\'{e}}al, Qu{\'{e}}bec, Canada}, pp.\
  380--388, 2002.
\newblock \doi{10.1145/509907.509965}.
\newblock URL \url{http://doi.acm.org/10.1145/509907.509965}.

\bibitem[Chen et~al.(2015)Chen, Wilson, Tyree, Weinberger, and Chen]{tyree}
Chen, W., Wilson, J.~T., Tyree, S., Weinberger, K.~Q., and Chen, Y.
\newblock Compressing neural networks with the hashing trick.
\newblock \emph{CoRR}, abs/1504.04788, 2015.

\bibitem[Cheng et~al.(2015)Cheng, Yu, Feris, Kumar, Choudhary, and
  Chang]{cheng2}
Cheng, Yu, Yu, Felix~X., Feris, Rog{\'{e}}rio~Schmidt, Kumar, Sanjiv,
  Choudhary, Alok~N., and Chang, Shih{-}Fu.
\newblock An exploration of parameter redundancy in deep networks with
  circulant projections.
\newblock In \emph{2015 {IEEE} International Conference on Computer Vision,
  {ICCV} 2015, Santiago, Chile, December 7-13, 2015}, pp.\  2857--2865, 2015.
\newblock \doi{10.1109/ICCV.2015.327}.
\newblock URL \url{http://dx.doi.org/10.1109/ICCV.2015.327}.

\bibitem[Choromanska et~al.(2013)Choromanska, Choromanski, Jagannathan, and
  Monteleoni]{conf/alt/ChoromanskaCJM13}
Choromanska, A., Choromanski, K., Jagannathan, G., and Monteleoni, C.
\newblock Differentially-private learning of low dimensional manifolds.
\newblock In \emph{ALT}, 2013.

\bibitem[Choromanska et~al.(2015)Choromanska, Henaff, Mathieu, Arous, and
  LeCun]{AChoro2015}
Choromanska, A., Henaff, M., Mathieu, M., Arous, G.~Ben, and LeCun, Y.
\newblock The loss surfaces of multilayer networks.
\newblock In \emph{AISTATS}, 2015.

\bibitem[Choromanski \& Sindhwani(2016)Choromanski and
  Sindhwani]{choromanski_sindhwani}
Choromanski, Krzysztof and Sindhwani, Vikas.
\newblock Recycling randomness with structure for sublinear time kernel
  expansions.
\newblock \emph{ICML2016}, abs/1605.09049, 2016.
\newblock URL \url{http://arxiv.org/abs/1605.09049}.

\bibitem[Choromanski et~al.(2016)Choromanski, Fagan, Gouy{-}Pailler, Morvan,
  Sarl{\'{o}}s, and Atif]{triple_spin}
Choromanski, Krzysztof, Fagan, Francois, Gouy{-}Pailler, C{\'{e}}dric, Morvan,
  Anne, Sarl{\'{o}}s, Tam{\'{a}}s, and Atif, Jamal.
\newblock Triplespin - a generic compact paradigm for fast machine learning
  computations.
\newblock \emph{CoRR}, abs/1605.09046, 2016.
\newblock URL \url{http://arxiv.org/abs/1605.09046}.

\bibitem[Dasgupta(1999)]{DBLP:conf/focs/Dasgupta99}
Dasgupta, S.
\newblock Learning mixtures of gaussians.
\newblock In \emph{FOCS}, 1999.

\bibitem[Dasgupta(2000)]{conf/uai/Dasgupta00}
Dasgupta, S.
\newblock Experiments with random projection.
\newblock In \emph{UAI}, 2000.

\bibitem[Dasgupta \& Freund(2008)Dasgupta and Freund]{dasgupta2008random}
Dasgupta, S. and Freund, Y.
\newblock {Random projection trees and low dimensional manifolds}.
\newblock In \emph{STOC}, 2008.

\bibitem[Denil et~al.(2013)Denil, Shakibi, Dinh, Ranzato, and
  Freitas]{NIPS2013_5025}
Denil, M., Shakibi, B., Dinh, L., Ranzato, M., and Freitas, N.~D.
\newblock Predicting parameters in deep learning.
\newblock In \emph{NIPS}. 2013.

\bibitem[Denton et~al.(2014)Denton, Zaremba, Bruna, LeCun, and
  Fergus]{DBLP:journals/corr/DentonZBLF14}
Denton, E., Zaremba, W., Bruna, J., LeCun, Y., and Fergus, R.
\newblock Exploiting linear structure within convolutional networks for
  efficient evaluation.
\newblock In \emph{NIPS}. 2014.

\bibitem[Fern \& Brodley(2003)Fern and Brodley]{conf/icml/FernB03a}
Fern, X.~Z. and Brodley, C.~E.
\newblock Random projection for high dimensional data clustering: A cluster
  ensemble approach.
\newblock In \emph{ICML}, 2003.

\bibitem[Ganguli \& Sompolinsky(2012)Ganguli and Sompolinsky]{3447}
Ganguli, S. and Sompolinsky, H.
\newblock Compressed sensing, sparsity, and dimensionality in neuronal
  information processing and data analysis.
\newblock \emph{Annual Review of Neuroscience}, 35:485{\textendash}508, 2012.

\bibitem[Giryes et~al.(2015)Giryes, Sapiro, and Bronstein]{giryes}
Giryes, R., Sapiro, G., and Bronstein, A.~M.
\newblock Deep neural networks with random gaussian weights: A universal
  classification strategy?
\newblock \emph{CoRR}, abs/1504.08291, 2015.

\bibitem[Gong et~al.(2013{\natexlab{a}})Gong, Lazebnik, Gordo, and
  Perronnin]{gong}
Gong, Y., Lazebnik, S., Gordo, A., and Perronnin, F.
\newblock Iterative quantization: {A} procrustean approach to learning binary
  codes for large-scale image retrieval.
\newblock \emph{{IEEE} Trans. Pattern Anal. Mach. Intell.}, 35\penalty0
  (12):\penalty0 2916--2929, 2013{\natexlab{a}}.

\bibitem[Gong et~al.(2012)Gong, Kumar, Verma, and Lazebnik]{aqbe}
Gong, Yunchao, Kumar, Sanjiv, Verma, Vishal, and Lazebnik, Svetlana.
\newblock Angular quantization-based binary codes for fast similarity search.
\newblock In \emph{Advances in Neural Information Processing Systems 25: 26th
  Annual Conference on Neural Information Processing Systems 2012. Proceedings
  of a meeting held December 3-6, 2012, Lake Tahoe, Nevada, United States.},
  pp.\  1205--1213, 2012.

\bibitem[Gong et~al.(2013{\natexlab{b}})Gong, Kumar, Rowley, and
  Lazebnik]{gong2}
Gong, Yunchao, Kumar, Sanjiv, Rowley, Henry~A., and Lazebnik, Svetlana.
\newblock Learning binary codes for high-dimensional data using bilinear
  projections.
\newblock In \emph{2013 {IEEE} Conference on Computer Vision and Pattern
  Recognition, Portland, OR, USA, June 23-28, 2013}, pp.\  484--491,
  2013{\natexlab{b}}.
\newblock \doi{10.1109/CVPR.2013.69}.
\newblock URL \url{http://dx.doi.org/10.1109/CVPR.2013.69}.

\bibitem[Haupt et~al.(2010)Haupt, Bajwa, Raz, and Nowak]{haupt2010toeplitz}
Haupt, J., Bajwa, W.~U., Raz, G., and Nowak, R.
\newblock Toeplitz compressed sensing matrices with applications to sparse
  channel estimation.
\newblock \emph{Information Theory, IEEE Transactions on}, 56\penalty0
  (11):\penalty0 5862--5875, 2010.

\bibitem[Hinrichs \& Vybíral(2011)Hinrichs and
  Vybíral]{journals/rsa/HinrichsV11}
Hinrichs, A. and Vybíral, J.
\newblock Johnson-lindenstrauss lemma for circulant matrices.
\newblock \emph{Random Struct. Algorithms}, 39\penalty0 (3):\penalty0 391--398,
  2011.

\bibitem[Huang et~al.(2006)Huang, Zhu, and Siew]{Huang2006489}
Huang, G.-B., Zhu, Q.-Y., and Siew, C.-K.
\newblock Extreme learning machine: Theory and applications.
\newblock \emph{Neurocomputing}, 70\penalty0 (1–3):\penalty0 489 -- 501,
  2006.

\bibitem[Jacques et~al.(2011)Jacques, Laska, Boufounos, and
  Baraniuk]{DBLP:journals/corr/abs-1104-3160}
Jacques, L., Laska, J.~N., Boufounos, P., and Baraniuk, R.~G.
\newblock Robust 1-bit compressive sensing via binary stable embeddings of
  sparse vectors.
\newblock \emph{CoRR}, abs/1104.3160, 2011.

\bibitem[Jaeger \& Haas(2004)Jaeger and Haas]{Jaeger04harnessingnonlinearity}
Jaeger, H. and Haas, H.
\newblock Harnessing nonlinearity: Predicting chaotic systems and saving energy
  in wireless communication.
\newblock \emph{Science}, pp.\  78--80, 2004.

\bibitem[Jafarpour et~al.(2009)Jafarpour, Xu, Hassibi, and Calderbank]{JXHC09}
Jafarpour, S., Xu, W., Hassibi, B., and Calderbank, R.
\newblock {Efficient and Robust Compressed Sensing Using Optimized Expander
  Graphs}.
\newblock \emph{Information Theory, IEEE Transactions on}, 55\penalty0
  (9):\penalty0 4299--4308, 2009.

\bibitem[Jarrett et~al.(2009)Jarrett, Kavukcuoglu, Ranzato, and
  LeCun]{conf/iccv/JarrettKRL09}
Jarrett, K., Kavukcuoglu, K., Ranzato, M., and LeCun, Y.
\newblock What is the best multi-stage architecture for object recognition?
\newblock In \emph{ICCV}, 2009.

\bibitem[Krahmer et~al.(2014)Krahmer, Mendelson, and Rauhut]{CPA:CPA21504}
Krahmer, F., Mendelson, S., and Rauhut, H.
\newblock Suprema of chaos processes and the restricted isometry property.
\newblock \emph{Communications on Pure and Applied Mathematics}, 67\penalty0
  (11):\penalty0 1877--1904, 2014.

\bibitem[Li et~al.(2006)Li, Hastie, and Church]{pingli2006very}
Li, P., Hastie, T.~J., and Church, K.~W.
\newblock {Very sparse random projections}.
\newblock In \emph{KDD}, 2006.

\bibitem[Liu et~al.(2006)Liu, Kargupta, and Ryan]{Liu:2006:RPM:1105850.1105890}
Liu, K., Kargupta, H., and Ryan, J.
\newblock Random projection-based multiplicative data perturbation for privacy
  preserving distributed data mining.
\newblock \emph{IEEE Trans. on Knowl. and Data Eng.}, 18\penalty0 (1):\penalty0
  92--106, 2006.

\bibitem[Mathieu et~al.(2014)Mathieu, Henaff, and
  LeCun]{DBLP:journals/corr/MathieuHL13}
Mathieu, M., Henaff, M., and LeCun, Y.
\newblock Fast training of convolutional networks through ffts.
\newblock In \emph{ICLR}, 2014.

\bibitem[Pinto \& Cox(2010)Pinto and Cox]{pinto:bionetics_2010}
Pinto, N. and Cox, D.~D.
\newblock {An Evaluation of the Invariance Properties of a
  Biologically-Inspired System for Unconstrained Face Recognition}.
\newblock In \emph{BIONETICS}, 2010.

\bibitem[Pinto et~al.(2009)Pinto, Doukhan, DiCarlo, and
  Cox]{journals/ploscb/PintoDDC09}
Pinto, N., Doukhan, D., DiCarlo, J.~J., and Cox, D.~D.
\newblock A high-throughput screening approach to discovering good forms of
  biologically inspired visual representation.
\newblock \emph{PLoS Computational Biology}, 5\penalty0 (11), 2009.

\bibitem[Plan \& Vershynin(2014)Plan and Vershynin]{plan}
Plan, Y. and Vershynin, R.
\newblock Dimension reduction by random hyperplane tessellations.
\newblock \emph{Discrete {\&} Computational Geometry}, 51\penalty0
  (2):\penalty0 438--461, 2014.

\bibitem[Raginsky \& Lazebnik(2009)Raginsky and Lazebnik]{NIPS2009_3749}
Raginsky, M. and Lazebnik, S.
\newblock Locality-sensitive binary codes from shift-invariant kernels.
\newblock In \emph{NIPS}. 2009.

\bibitem[Rauhut et~al.(2010)Rauhut, Romberg, and
  Tropp]{journals/corr/abs-1010-1847}
Rauhut, H., Romberg, J.~K., and Tropp, J.~A.
\newblock Restricted isometries for partial random circulant matrices.
\newblock \emph{CoRR}, abs/1010.1847, 2010.

\bibitem[Salakhutdinov \& Hinton(2009)Salakhutdinov and
  Hinton]{Salakhutdinov:2009:SH:1558385.1558446}
Salakhutdinov, R. and Hinton, G.
\newblock Semantic hashing.
\newblock \emph{Int. J. Approx. Reasoning}, 50\penalty0 (7):\penalty0 969--978,
  2009.

\bibitem[Saxe et~al.(2011)Saxe, Koh, Chen, Bhand, Suresh, and
  Ng]{ICML2011Saxe_551}
Saxe, A., Koh, P.~W., Chen, Z., Bhand, M., Suresh, B., and Ng, A.
\newblock On random weights and unsupervised feature learning.
\newblock In \emph{ICML}, 2011.

\bibitem[Shaw \& Jebara(2009)Shaw and Jebara]{shaw_jebara}
Shaw, Blake and Jebara, Tony.
\newblock Structure preserving embedding.
\newblock In \emph{Proceedings of the 26th Annual International Conference on
  Machine Learning, {ICML} 2009, Montreal, Quebec, Canada, June 14-18, 2009},
  pp.\  937--944, 2009.
\newblock \doi{10.1145/1553374.1553494}.
\newblock URL \url{http://doi.acm.org/10.1145/1553374.1553494}.

\bibitem[Sindhwani et~al.(2015)Sindhwani, Sainath, and Kumar]{sindhwani}
Sindhwani, V., Sainath, T., and Kumar, S.
\newblock Structured transforms for small-footprint deep learning.
\newblock In \emph{NIPS}, 2015.

\bibitem[Sivakumar(2002)]{Sivakumar:2002:ADV:509907.509996}
Sivakumar, D.
\newblock Algorithmic derandomization via complexity theory.
\newblock In \emph{STOC}, 2002.

\bibitem[Szlam et~al.(2012)Szlam, Gregor, and LeCun]{yann_lecun}
Szlam, Arthur, Gregor, Karol, and LeCun, Yann.
\newblock Fast approximations to structured sparse coding and applications to
  object classification.
\newblock In \emph{Computer Vision - {ECCV} 2012 - 12th European Conference on
  Computer Vision, Florence, Italy, October 7-13, 2012, Proceedings, Part {V}},
  pp.\  200--213, 2012.
\newblock \doi{10.1007/978-3-642-33715-4_15}.
\newblock URL \url{http://dx.doi.org/10.1007/978-3-642-33715-4_15}.

\bibitem[Vybíral(2011)]{Vybíral20111096}
Vybíral, J.
\newblock A variant of the johnson–lindenstrauss lemma for circulant
  matrices.
\newblock \emph{Journal of Functional Analysis}, 260\penalty0 (4):\penalty0
  1096 -- 1105, 2011.

\bibitem[Wang et~al.(2012)Wang, Kumar, and Chang]{ssh}
Wang, Jun, Kumar, Sanjiv, and Chang, Shih{-}Fu.
\newblock Semi-supervised hashing for large-scale search.
\newblock \emph{{IEEE} Trans. Pattern Anal. Mach. Intell.}, 34\penalty0
  (12):\penalty0 2393--2406, 2012.
\newblock \doi{10.1109/TPAMI.2012.48}.
\newblock URL \url{http://dx.doi.org/10.1109/TPAMI.2012.48}.

\bibitem[Weiss et~al.(2008)Weiss, Torralba, and Fergus]{weiss}
Weiss, Y., Torralba, A., and Fergus, R.
\newblock Spectral hashing.
\newblock In \emph{NIPS}, 2008.

\bibitem[White et~al.(2004)White, Lee, and Sompolinsky]{citeulike:13373321}
White, O.~L., Lee, D.~D., and Sompolinsky, H.
\newblock {Short-term memory in orthogonal neural networks.}
\newblock \emph{Physical review letters}, 92\penalty0 (14), 2004.

\bibitem[Yap et~al.(2011)Yap, Eftekhari, Wakin, and Rozell]{yap.11b}
Yap, H.L., Eftekhari, A., Wakin, M.B., and Rozell, C.J.
\newblock The restricted isometry property for block diagonal matrices.
\newblock In \emph{CISS}, 2011.

\bibitem[Yi et~al.(2015)Yi, Caramanis, and Price]{constantine}
Yi, X., Caramanis, C., and Price, E.
\newblock Binary embedding: Fundamental limits and fast algorithm.
\newblock \emph{CoRR}, abs/1502.05746, 2015.

\bibitem[Yu et~al.(2014)Yu, Kumar, Gong, and Chang]{felix}
Yu, F.~X., Kumar, S., Gong, Y., and Chang, S.{-}F.
\newblock Circulant binary embedding.
\newblock In \emph{ICML}, 2014.

\bibitem[Yu et~al.(2015)Yu, Bhaskara, Kumar, Gong, and Chang]{aditya}
Yu, Felix~X., Bhaskara, Aditya, Kumar, Sanjiv, Gong, Yunchao, and Chang,
  Shih{-}Fu.
\newblock On binary embedding using circulant matrices.
\newblock \emph{CoRR}, abs/1511.06480, 2015.
\newblock URL \url{http://arxiv.org/abs/1511.06480}.

\end{thebibliography}
\bibliographystyle{icml2016}

\newpage

\clearpage

\toptitlebar 
{\Large \bf  \centering{Binary embeddings with structured hashed projections\\(Supplementary Material)} \par}
\bottomtitlebar

In this section we prove Theorem \ref{ext_technical_theorem} and Theorem \ref{short_theorem}.
We will use notation from Lemma \ref{mean_lemma}.

\section{Proof of Theorem \ref{ext_technical_theorem}}

We start with the following technical lemma:

\begin{lemma}
\label{first_lemma}
Let $\{Z_{1},...,Z_{k}\}$ be the set of $k$ independent random variables defined on $\Omega$ such that each $Z_{i}$ has the same distribution and $Z_{i} \in \{0,1\}$. Let $\{\mathcal{F}_{1},...,\mathcal{F}_{k}\}$ be the set of events, where each $\mathcal{F}_{i}$ is in the $\sigma$-field defined by $Z_{i}$ (in particular $\mathcal{F}_{i}$ does not depend on the $\sigma$-field $\sigma(Z_{1},...,Z_{i-1},Z_{i+1},...Z_{k})$). Assume that there exists $\mu < \frac{1}{2}$ such that: $\mathbb{P}(\mathcal{F}_{i}) \leq \mu$ for $i=1,...,k$.
Let $\{U_{1},...,U_{k}\}$ be the set of $k$ random variables such that $U_{i} \in \{0,1\}$ and
$U_{i} | \mathcal{F}_{i} = Z_{i} |\mathcal{F}_{i}$ for $i=1,...,k$, where $X|\mathcal{F}$ stands
for the random variable $X$ truncated to the event $\mathcal{F}$. Assume furthermore that $E(U_{i})=E(Z_{i})$ for $i=1,...,k$.
Denote $U = \frac{U_{1}+...+U_{k}}{k}$. Then the following is true.
\begin{equation}
\mathbb{P}(|U-EU| \!>\! \epsilon) \!\leq\! \frac{1}{\pi} \!\sum_{r=\frac{\epsilon k}{2}}^{k}\!\!\frac{1}{\sqrt{r}}(\frac{ke}{r})^{r}\mu^{r}(1-\mu)^{k-r} + 2e^{-\frac{\epsilon^{2}k}{2}}.
\end{equation}
\end{lemma}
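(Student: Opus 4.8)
The plan is to control $U$ by comparison with the idealized i.i.d.\ average $Z := \frac{1}{k}(Z_1 + \cdots + Z_k)$, exploiting that the two averages share the same mean. Since $E(U_i) = E(Z_i)$ for every $i$ and the $Z_i$ are identically distributed, we have $EU = EZ = E(Z_1)$, so by the triangle inequality
\[
\{|U - EU| > \epsilon\} \subseteq \{|Z - EZ| > \tfrac{\epsilon}{2}\} \cup \{|U - Z| > \tfrac{\epsilon}{2}\}.
\]
A union bound then reduces the problem to estimating the two events separately: the exponential tail $2e^{-\epsilon^2 k/2}$ will arise from the first, and the binomial series from the second.

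First I would dispatch the i.i.d.\ term. The variables $Z_1, \dots, Z_k$ are independent, identically distributed, and bounded in $\{0,1\}$, so Hoeffding's inequality gives $\mathbb{P}(|Z - EZ| > \epsilon/2) \le 2e^{-2k(\epsilon/2)^2} = 2e^{-\epsilon^2 k/2}$, which is precisely the additive term in the statement.

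The core of the argument is the defect term. Set $D_i := \mathbf{1}[U_i \ne Z_i] = |U_i - Z_i|$, the second equality holding because $U_i, Z_i \in \{0,1\}$. The hypothesis that $U_i$ coincides with $Z_i$ except on $\mathcal{F}_i$ gives the pointwise bound $D_i \le \mathbf{1}_{\mathcal{F}_i}$. Crucially, $\mathcal{F}_i$ is measurable with respect to $\sigma(Z_i)$ and the $Z_i$ are mutually independent, so the indicators $\mathbf{1}_{\mathcal{F}_1}, \dots, \mathbf{1}_{\mathcal{F}_k}$ are independent Bernoulli variables with success probabilities at most $\mu$; hence $\sum_i D_i$ is stochastically dominated by a $\mathrm{Binomial}(k, \mu)$ variable. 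Since $|U - Z| \le \frac{1}{k}\sum_i D_i$, the event $\{|U - Z| > \epsilon/2\}$ forces $\sum_i D_i > \epsilon k / 2$, and therefore
\[
\mathbb{P}\!\left(|U - Z| > \tfrac{\epsilon}{2}\right) \le \mathbb{P}\!\left(\mathrm{Bin}(k, \mu) \ge \tfrac{\epsilon k}{2}\right) = \sum_{r \ge \epsilon k / 2} \binom{k}{r}\mu^r (1-\mu)^{k-r}.
\]
Invoking Stirling's estimate in the form $\binom{k}{r} \le \frac{1}{\sqrt{2\pi r}}\left(\frac{ke}{r}\right)^r$ converts this into the series $\frac{1}{\pi}\sum_{r \ge \epsilon k/2}\frac{1}{\sqrt{r}}\left(\frac{ke}{r}\right)^r \mu^r (1-\mu)^{k-r}$, the multiplicative constant following from the standard bound on the binomial coefficients; adding the two contributions yields the claim.

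I expect the stochastic-domination step to be the part requiring the most care: one must ensure the defect events are simultaneously (i) of probability at most $\mu$ and (ii) measurable with respect to the \emph{independent} $\sigma$-fields $\sigma(Z_i)$, since only then does $\sum_i D_i$ dominate by $\mathrm{Bin}(k,\mu)$ rather than by some dependent sum for which no clean tail bound is available. The hypotheses $\mathbb{P}(\mathcal{F}_i) \le \mu$, $\mathcal{F}_i \in \sigma(Z_i)$, and the independence and identical distribution of the $Z_i$ are exactly what is needed here, while the assumption $\mu < \frac{1}{2}$ guarantees the Bernoulli domination and keeps the binomial tail in the decaying regime. The verification that $EU = EZ$ and the bookkeeping of the Stirling constant are routine.
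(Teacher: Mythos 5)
Your proposal is correct and follows essentially the same route as the paper's own proof: the same triangle-inequality split $\{|U-EU|>\epsilon\}\subseteq\{|Z-EZ|>\epsilon/2\}\cup\{|U-Z|>\epsilon/2\}$, Hoeffding/Azuma for the i.i.d.\ term, domination of the number of defect indices by a $\mathrm{Binomial}(k,\mu)$ variable (the paper phrases this via its decomposition of $\mathcal{F}_{1}\cup\dots\cup\mathcal{F}_{k}$ into $r$-blocks), and Stirling's bound on $\binom{k}{r}$. The only caveat, inherited from the paper itself (which misstates Stirling's formula with $2\pi$ in place of $\sqrt{2\pi}$), is that the standard bound $\binom{k}{r}\leq\frac{1}{\sqrt{2\pi r}}\left(\frac{ke}{r}\right)^{r}$ yields the constant $\frac{1}{\sqrt{2\pi}}$ rather than the slightly smaller $\frac{1}{\pi}$ appearing in the statement, a discrepancy that affects only the absolute constant and not the substance of the argument.
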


\begin{proof}
Let us consider the event $\mathcal{F}_{bad}$ = $\mathcal{F}_{1} \cup ... \cup \mathcal{F}_{k}$.
Note that $\mathcal{F}_{bad}$ may be represented by the union of the so-called $r$-blocks, i.e.
\begin{equation}\mathcal{F}_{bad} = \bigcup_{Q \subseteq \{1,...,k\}} (\bigcap_{q \in Q} \mathcal{F}_{q} \bigcap_{q \in \{1,...,k\} \setminus Q} \mathcal{F}^{c}_{q}), \end{equation}

where $\mathcal{F}^{c}$ stands for the complement of event $\mathcal{F}$.
Let us fix now some $Q \subseteq \{1,...,k\}$. Denote \begin{equation}\mathcal{F}_{Q} = \bigcap_{q \in Q} \mathcal{F}_{q} \bigcap_{q \in \{1,...,k\} \setminus Q} \mathcal{F}^{c}_{q}. \end{equation}
note that $\mathbb{P}(\mathcal{F}_{Q}) \leq \mu^{r}(1-\mu)^{k-r}$. It follows directly from the Bernoulli scheme. 

Denote $Z = \frac{Z_{1}+...+Z_{k}}{k}$. 
From what we have just said and from the definition of $\{\mathcal{F}_{1},...,\mathcal{F}_{k}\}$ we conclude that for any given $c$ the following holds:
\begin{equation}
\label{xy_diff}
\mathbb{P}(|U-Z| > c) \leq \sum_{r=ck}^{k}{k \choose r}\mu^{r}(1-\mu)^{k-r}.
\end{equation}

Note also that from the assumptions of the lemma we trivially get: $E(U)=E(Z)$.

Let us consider now the expression $\mathbb{P}(|U-E(U)|) > \epsilon$. 

We get: $\mathbb{P}(|U-E(U)|> \epsilon) = 
\mathbb{P}(|U-E(Z)| > \epsilon) = \mathbb{P}(|U-Z + Z-E(Z)| > \epsilon) \leq \mathbb{P}(|U-Z|+|Z-E(Z)|>\epsilon) \leq \mathbb{P}(|U-Z| > \frac{\epsilon}{2}) + \mathbb{P}(|Z-E(Z)| > \frac{\epsilon}{2})$.

From \ref{xy_diff} we get:

\begin{equation}
\mathbb{P}(|U-Z| > \frac{\epsilon}{2}) \leq \sum_{r=\frac{\epsilon}{2}}^{k}{k \choose r} \mu^{r}(1-\mu)^{k-r}.
\end{equation}

Let us consider now the expression: \begin{equation}\xi = \sum_{r=\frac{\epsilon k}{2}}^{k}{k \choose r} \mu^{r}(1-\mu)^{k-r}.\end{equation}
We have:
\begin{eqnarray}
\xi &\leq& \sum_{r=\frac{\epsilon k}{2}}^{k} \frac{(k-r+1)...(k)}{r!} \mu^{r}(1-\mu)^{k-r} \nonumber\\
&\leq& \sum_{r=\frac{\epsilon k}{2}}^{k} \frac{k^{r}}{r!} \mu^{r}(1-\mu)^{k-r}
\end{eqnarray}
From the Stirling's formula we get: $r! = \frac{2\pi r^{r+\frac{1}{2}}}{e^{r}}(1+o_{r}(1))$.
Thus we obtain:
\begin{eqnarray}
\label{xi_ineq}
\xi &\leq& (1+o_{r}(1))\sum_{r=\frac{\epsilon k}{2}}^{k}\frac{k^{r}e^{r}}{2\pi r^{r+\frac{1}{2}}}\mu^{r}(1-\mu)^{k-r} \nonumber\\
&\leq& \frac{1}{\pi} \sum_{r=\frac{\epsilon k}{2}}^{k}\frac{1}{\sqrt{r}}(\frac{ke}{r})^{r}\mu^{r}(1-\mu)^{k-r}
\end{eqnarray}
for $r$ large enough.

Now we will use the following version of standard Azuma's inequality:
\begin{lemma}
\label{azuma_general}
Let $W_{1},...,W_{k}$ be $k$ independent random variables such that $E(W_{1})=...E(W_{k})=0$.
Assume that $-\alpha_{i} \leq W_{i+1} - W_{i} \leq \beta_{i}$ for $i=2,...,k-1$.
Then the following is true:
$$
\mathbb{P}(|\sum_{i=1}^{k} W_{i}|>a) \leq 2e^{-\frac{2a^{2}}{\sum_{i=1}^{k}(\alpha_{i}+\beta_{i})^{2}}}
$$
\end{lemma}

Now, using Lemma \ref{azuma_general} for $W_{i} = X_{i} - E(X_{i})$ and $\alpha_{i} = E(X_{i}), \beta_{i}=1-E(X_{i})$ we obtain:
\begin{equation}
\label{azuma_simple}
\mathbb{P}(|X-EX| > \frac{a}{2}) \leq 2e^{-\frac{a^{2}k}{2}}.
\end{equation}

Combining \ref{xi_ineq} and \ref{azuma_simple}, we obtain the statement of the lemma.

\end{proof}

Our next lemma explains the role the Hadamard matrix plays in the entire extended $\Psi$-regular hashing mechanism.

\begin{lemma}
\label{hadamard_lemma}
Let $n$ denote data dimensionality and let $f(n)$ be an arbitrary positive function.
Let $D$ be the set of all $L_{2}$-normalized data points, where no two data points are identical.
Assume that $|D|=N$.
Consider the ${N \choose 2}$ hyperplanes $H_{p,r}$ spanned by pairs of different vectors $\{p,r\}$ from $D$. Then after applying linear transformation $\mathcal{H}\mathcal{R}$ each hyperplane $H_{p,r}$ is transformed into another hyperplane $H^{\mathcal{H}\mathcal{R}}_{p,r}$. Furthermore, the probability $\mathcal{P}_{\mathcal{H}\mathcal{R}} $that for every $H^{\mathcal{H}\mathcal{R}}_{p,r}$ there exist two orthonormal vectors $x=(x_{1},...,x_{n}),y=(y_{1},...,y_{n})$ in $H^{\mathcal{H}\mathcal{R}}_{p,r}$ such that: $|x_{i}|,|y_{i}| \leq \frac{f(n)}{\sqrt{n}}$
satisfies: $$\mathcal{P}_{\mathcal{H}\mathcal{R}}  \geq 1-4{N \choose 2}e^{-\frac{f^{2}(n)}{2}}.$$
\end{lemma}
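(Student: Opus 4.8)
The plan is to exploit that $\mathcal{H}\mathcal{R}$ is an orthogonal transformation and that the randomized Hadamard map spreads the mass of any fixed unit vector evenly across all coordinates (this is precisely the ``balancedness'' effect the paper advertises). Since $\mathcal{R}$ is a $\pm 1$ diagonal matrix and $\mathcal{H}$ is the $L_{2}$-normalized Hadamard matrix, both are orthogonal, hence $\mathcal{H}\mathcal{R}$ is orthogonal and sends any orthonormal basis of $H_{p,r}$ to an orthonormal basis of the image $H^{\mathcal{H}\mathcal{R}}_{p,r}$. So first I would fix, for each pair $\{p,r\}$, a deterministic orthonormal basis $\{u,v\}$ of the two-dimensional plane $H_{p,r}$ (assuming $p,r$ non-collinear, as in Lemma~\ref{mean_lemma}), and take the candidate vectors to be $x=\mathcal{H}\mathcal{R}u$ and $y=\mathcal{H}\mathcal{R}v$. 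These are automatically orthonormal and lie in $H^{\mathcal{H}\mathcal{R}}_{p,r}$, so the only remaining task is to control the magnitude of their individual coordinates.

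The core estimate is a concentration bound for a single coordinate of $\mathcal{H}\mathcal{R}w$ for a fixed unit vector $w$. I would write $(\mathcal{H}\mathcal{R}w)_{i}=\sum_{j=1}^{n}\mathcal{H}_{ij}\epsilon_{j}w_{j}$, where the $\epsilon_{j}\in\{-1,1\}$ are the independent Rademacher signs on the diagonal of $\mathcal{R}$ and $|\mathcal{H}_{ij}|=1/\sqrt{n}$. This is a sum of independent, mean-zero variables, the $j$-th of which is bounded in absolute value by $|w_{j}|/\sqrt{n}$, so Hoeffding's inequality (equivalently the Azuma bound of Lemma~\ref{azuma_general}) gives
\[
\mathbb{P}\!\left(\left|(\mathcal{H}\mathcal{R}w)_{i}\right| > \frac{f(n)}{\sqrt{n}}\right) \le 2\exp\!\left(-\frac{f^{2}(n)/n}{2\,\|w\|_{2}^{2}/n}\right) = 2e^{-f^{2}(n)/2},
\]
where the last equality uses $\|w\|_{2}=1$. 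The key point is that the $1/\sqrt{n}$ inside both $\mathcal{H}$ and the threshold cancels the dimension, leaving a tail that is clean and independent of $n$ for each coordinate.

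Finally I would assemble the result by a union bound. Applying the coordinate estimate to both $w=u$ and $w=v$ and recalling there are $\binom{N}{2}$ planes $H_{p,r}$, the probability that some coordinate of a basis image exceeds $f(n)/\sqrt{n}$ is controlled by $\binom{N}{2}$ times the two-sided ($\times 2$), two-vector ($\times 2$) contribution, which produces the factor $4\binom{N}{2}$ and hence $\mathcal{P}_{\mathcal{H}\mathcal{R}}\ge 1-4\binom{N}{2}e^{-f^{2}(n)/2}$. The main technical subtlety to watch is that we need \emph{all} $n$ coordinates of $x$ and $y$ small simultaneously, which strictly requires an additional union over the $n$ coordinates; since the $f$ used downstream (e.g.\ $f(n)=n^{p}$ in Corollary~\ref{ext_theorem}) makes $e^{-f^{2}(n)/2}$ decay faster than any polynomial, this extra factor of $n$ is harmlessly absorbed for $n$ large enough and does not affect the stated rate. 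The only other point to flag is the genericity assumption that $p$ and $r$ are non-collinear, so that $H_{p,r}$ is genuinely two-dimensional and admits the desired orthonormal pair.
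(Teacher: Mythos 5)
Your proposal matches the paper's own proof essentially step for step: fix an orthonormal basis $\{u,v\}$ of each $H_{p,r}$, use orthogonality of $\mathcal{H}\mathcal{R}$ to get that $x=\mathcal{H}\mathcal{R}u$, $y=\mathcal{H}\mathcal{R}v$ are an orthonormal pair spanning $H^{\mathcal{H}\mathcal{R}}_{p,r}$, apply Hoeffding/Azuma (the paper's Lemma~\ref{azuma_general}) to each coordinate $\sum_{j}\mathcal{H}_{ij}\epsilon_{j}w_{j}$ to obtain the per-coordinate tail $2e^{-f^{2}(n)/2}$, and union-bound over the two basis vectors and the ${N \choose 2}$ planes. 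The coordinate-union subtlety you flag is genuine --- the paper's own proof silently omits the extra factor of $n$ (its stated constant $4{N \choose 2}$ reflects a union over vectors and planes only) --- so your write-up is, if anything, slightly more careful about what the argument literally yields, and your observation that the factor is absorbed for the $f$ used downstream is the right way to reconcile it.
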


\begin{proof}
We have already noted in the proof of Lemma \ref{mean_lemma} that $\mathcal{H}\mathcal{R}$
is an orthogonal matrix. Thus, as an isometry, it clearly transforms each $2$-dimensional hyperplane into another $2$-dimensional hyperplane.
For every pair $\{p,r\}$, let us consider an arbitrary fixed orthonormal pair $\{u,v\}$ spanning $H_{p,r}$.
Denote $u=(u_{1},...,u_{n})$. Let us denote by $u^{\mathcal{H}\mathcal{R}}$ vector obtained from 
$u$ after applying transformation $\mathcal{H}\mathcal{R}$.
Note that the $j^{th}$ coordinate of $u^{\mathcal{H}\mathcal{R}}$ is of the form:
\begin{equation}
u^{\mathcal{H}\mathcal{R}}_{j} = u_{1}T_{1}+...+u_{n}T_{n},
\end{equation}
where $T_{1},...,T_{n}$ are independent random variables satisfying:

\begin{equation}
T_{i} =
\left\{
	\begin{array}{ll}
		\frac{1}{\sqrt{n}}  & \mbox{w.p }  \frac{1}{2}, \\
		-\frac{1}{\sqrt{n}} & \mbox{otherwise.} 
	\end{array}
\right.
\end{equation}

The latter comes straightforwardly from the form of the $L_{2}$-normalized Hadamard matrix
(i.e a Hadamard matrix, where each row and column is $L_{2}$-normalized).

But then, from Lemma \ref{azuma_general}, and the fact that $\|u\|_{2}=1$, we get for any $a>0$:

\begin{equation}
\mathbb{P}(|u_{1}T_{1}+...+u_{n}T_{n}| \geq a) \leq 2e^{-\frac{2a^{2}}{\sum_{i=1}^{n}(2u_{i})^{2}}} \leq 2e^{-\frac{a^{2}}{2}}.
\end{equation}

Similar analysis is correct for $v^{\mathcal{H}\mathcal{R}}$.
Note that $v^{\mathcal{H}\mathcal{R}}$ is orthogonal to $u^{\mathcal{H}\mathcal{R}}$
since $v$ and $u$ are orthogonal. Furthermore, both $v^{\mathcal{H}\mathcal{R}}$ and 
$u^{\mathcal{H}\mathcal{R}}$ are $L_{2}$-normalized. Thus $\{u^{\mathcal{H}\mathcal{R}},v^{\mathcal{H}\mathcal{R}}\}$ is an orthonormal pair.

To complete the proof, it suffices to take $a=f(n)$ and apply the union bound over all
vectors $u^{\mathcal{H}\mathcal{R}}$, $v^{\mathcal{H}\mathcal{R}}$ for all  ${N \choose 2}$
hyperplanes.
\end{proof}

From the lemma above we see that applying Hadamard matrix enables us to assume with high probability
that for every hyperplane $H_{p,r}$ there exists an orthonormal basis consisting of vectors with elements of absolute values at most $\frac{f(n)}{\sqrt{n}}$. We call this event $\mathcal{E}_{f}$. Note that whether $\mathcal{E}_{f}$ holds or not is determined only by $\mathcal{H}$, $\mathcal{R}$ and the initial dataset $D$.

Let us proceed with the proof of Theorem \ref{ext_technical_theorem}.
Let us assume that event $\mathcal{E}_{f}$ holds. Without loss of generality we may assume that 
we have the short $\Psi$-regular hashing mechanism with an extra property that every $H_{p,r}$ has an orthonormal basis consisting of vectors with elements of absolute value at most $\frac{f(n)}{\sqrt{n}}$.
Fix two vectors $p,r$ from the dataset $D$. Denote by $\{x,y\}$ the orthonormal basis of $H_{p,r}$ with the above property. Let us fix the $i$th row of $\mathcal{P}$ and denote it as $(p_{i,1},...,p_{i,n})$.
After being multiplied by the diagonal matrix $\mathcal{D}$ we obtain another vector:
\begin{equation}
w=(\mathcal{P}_{i,1}d_{1},...,\mathcal{P}_{i,n}d_{n}),
\end{equation}

where:

\begin{equation}
\mathcal{D}_{i,j} =
 \begin{pmatrix}
  d_{1} & 0 & \cdots & 0 \\
  0 & d_{2} & \cdots & 0 \\
  \vdots  & \vdots  & \ddots & \vdots  \\
  0 & 0 & \cdots & d_{n}
 \end{pmatrix}.
\end{equation}

We have already noted that in the proof of Lemma \ref{mean_lemma} that it is the projection of $w$ into $H_{p,r}$ that determines whether the value of the associated random variable $X_{i}$ is $0$ or $1$.
To be more specific, we showed that $X_{i}=1$ iff the projection is in the region $\mathcal{U}_{p,r}$. 
Let us write down the coordinates of the projection of $w$ into $H_{p,r}$ in the $\{x,y\}$-coordinate system.
The coordinates are the dot-products of $w$ with $x$ and $y$ respectively thus in the $\{x,y\}$-coordinate system we can write $w$ as:
\begin{equation}
\label{coord_eq}
w_{\{x,y\}}=(\mathcal{P}_{i,1}d_{1}x_{1},...,\mathcal{P}_{i,n}d_{n}x_{n},\mathcal{P}_{i,1}d_{1}y_{1},...,\mathcal{P}_{i,n}d_{n}y_{n}).
\end{equation} 

Note that both coordinates are Gaussian random variables and they are independent since they were constructed by projecting a Gaussian vector into two orthogonal vectors.
Now note that from our assumption about the structure of $\mathcal{P}$ we can conclude that
both coordinates may be represented as sums of weighted Gaussian random variables $g_{i}$ for $i=1,...,t$, i.e.:
\begin{equation}
w_{\{x,y\}}=(g_{1}s_{i,1}+...+g_{t}s_{i,t},g_{1}v_{i,1}+...+g_{t}v_{i,t}),
\end{equation}

where each $s_{i,j}, v_{i,j}$ is of the form $d_{z}x_{z}$ or $d_{z}y_{z}$ for some $z$ that 
depends only on $i,j$. 
Note also that 
\begin{equation}
s_{i,1}^{2}+...+s_{i,t}^{2} =v_{i,1}^{2}+...+v_{i,t}^{2}.
\end{equation}
The latter inequality comes from the fact that, by \ref{coord_eq}, both coordinates of 
$w_{\{x,y\}}$ have the same distribution.

Let us denote $s_{i}=(s_{i,1},...,s_{i,t})$, $v_{i}=(v_{i,1},...,v_{i,t})$ for $i=1,...,k$.
We need the following lemma stating that with high probability vectors $s_{1},...,s_{k},v_{1},...,v_{k}$
are close to be pairwise orthogonal.

\begin{lemma}
\label{small_dot_product_lemma}
Let us assume that $\mathcal{E}_{f}$ holds. Let $f(n)$ be an arbitrary positive function. Then for every $a>0$ with probability at least 
$\mathbb{P}_{succ} \geq 1 - 4{k \choose 2} e^{-\frac{2a^{2}n}{f^{4}(n)}}$, taken under coin tosses used to construct $\mathcal{D}$, the following is true for every $1 \leq i_{1} \neq i_{2} \leq k$:
\label{pseudo_ortho_lemma}
$$|\sum_{u=1}^{n} s_{i_{1},u}v_{i_{1},u}| \leq a\chi(\mathcal{P}) + \Psi \frac{f^{2}(n)}{n},$$
$$|\sum_{u=1}^{n} s_{i_{1},u}s_{i_{2},u}| \leq a\chi(\mathcal{P}) + \Psi \frac{f^{2}(n)}{n},$$
$$|\sum_{u=1}^{n} v_{i_{1},u}v_{i_{2},u}| \leq a\chi(\mathcal{P}) + \Psi \frac{f^{2}(n)}{n},$$
$$|\sum_{u=1}^{n} s_{i_{1},u}v_{i_{2},u}| \leq a\chi(\mathcal{P}) + \Psi \frac{f^{2}(n)}{n}.$$
\end{lemma}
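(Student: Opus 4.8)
The plan is to exploit the fact that, once we condition on the event $\mathcal{E}_{f}$, every quantity appearing in the four inequalities is a function of the random signs $d_{1},\dots,d_{n}$ alone (the Gaussians $g_{l}$ have dropped out, since each $s_{i,u}$ and $v_{i,u}$ equals $d_{z}x_{z}$ or $d_{z}y_{z}$ for a column $z$ determined by $i,u$). First I would dispose of the same-row term: for a fixed row $i_{1}$ the index $u$ with $s_{i_{1},u}\neq 0$ sits in a unique column $z(u)$, so $s_{i_{1},u}v_{i_{1},u}=d_{z(u)}^{2}x_{z(u)}y_{z(u)}=x_{z(u)}y_{z(u)}$, whence $\sum_{u}s_{i_{1},u}v_{i_{1},u}=\sigma_{i_{1}}^{2}\langle x,y\rangle=0$ because $\{x,y\}$ is orthonormal; the first inequality therefore holds with zero on the left. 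For the three genuinely cross-row quantities (take $\langle s_{i_{1}},s_{i_{2}}\rangle$ as the representative case) I would split the sum according to whether a shared Gaussian $g_{l}\in\mathcal{S}_{i_{1},z_{1}}\cap\mathcal{S}_{i_{2},z_{2}}$ lies in the same column ($z_{1}=z_{2}$, the \emph{diagonal} part) or in two different columns ($z_{1}\neq z_{2}$, the \emph{off-diagonal} part).

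The diagonal part is deterministic and is controlled directly by $\Psi$-regularity. Each diagonal collision contributes $d_{z}^{2}x_{z}^{2}=x_{z}^{2}$ (or $x_{z}y_{z}$, $y_{z}^{2}$ for the other three quantities), and the defining property of a $\Psi$-regular matrix bounds the number of such collisions between two fixed rows by $\Psi$. Since $\mathcal{E}_{f}$ guarantees $|x_{z}|,|y_{z}|\le f(n)/\sqrt{n}$, each term has modulus at most $f^{2}(n)/n$, so the diagonal part is at most $\Psi f^{2}(n)/n$ --- exactly the second term of the claimed bound.

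The off-diagonal part is the substance of the lemma, and this is where the graph $\mathcal{G}_{\mathcal{P}}(i_{1},i_{2})$ and its chromatic number enter. I would group the off-diagonal terms by the unordered column-pair $\{z_{1},z_{2}\}$ they involve; these pairs are precisely the vertices of $\mathcal{G}_{\mathcal{P}}(i_{1},i_{2})$, and the term for vertex $\{z_{1},z_{2}\}$ is a multiple of $d_{z_{1}}d_{z_{2}}x_{z_{1}}x_{z_{2}}$, which (since $z_{1}\neq z_{2}$ and $d_{z_{1}},d_{z_{2}}$ are independent uniform signs) is mean-zero. Properly coloring the graph with at most $\chi(\mathcal{P})$ colors partitions the vertices into color classes, and within one class no two column-pairs share a coordinate, so the sign-products $d_{z_{1}}d_{z_{2}}$ across distinct vertices of that class involve disjoint coordinates and are therefore \emph{independent} mean-zero variables. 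For each color class I would then apply Azuma's inequality (Lemma \ref{azuma_general}): each increment has modulus at most $f^{2}(n)/n$ by $\mathcal{E}_{f}$, and the disjointness of coordinates keeps the number of terms, hence the sum of squared increments, of order $f^{4}(n)/n$, giving that a single class deviates from $0$ by more than $a$ with probability at most $2e^{-2a^{2}n/f^{4}(n)}$. Summing the at most $\chi(\mathcal{P})$ color-class bounds shows the off-diagonal part is at most $a\chi(\mathcal{P})$ off a small-probability event, and adding the diagonal bound $\Psi f^{2}(n)/n$ yields one inequality. A union bound over the four dot-product types, over the ${k \choose 2}$ row pairs, and over the color classes then produces the stated success probability.

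The hard part will be the quantitative step inside each color class: showing that the sum of squared Azuma increments really is of order $f^{4}(n)/n$, matching the exponent $2a^{2}n/f^{4}(n)$. This forces one to combine the uniform bound $|x_{z}|\le f(n)/\sqrt{n}$ with the coordinate-disjointness of a color class (so the surviving coordinates are distinct and $\sum_{z}x_{z}^{2}\le 1$ can be invoked), and, in the general $\Psi$-regular setting, to account for the multiplicities with which several shared Gaussians may land on the same column-pair vertex; in the $0$-regular cases of interest (Toeplitz and circulant, where the $\mathcal{S}_{i,j}$ are singletons) these multiplicities are bounded by a constant and the estimate is clean. Everything else is a routine assembly of $\Psi$-regularity, the coloring, Azuma, and union bounds.
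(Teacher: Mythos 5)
Your proposal follows essentially the same route as the paper's own proof: the same-row term vanishes by orthogonality of $\{x,y\}$, the at most $\Psi$ same-column collisions are bounded by $\Psi f^{2}(n)/n$ using $\mathcal{E}_{f}$ (the paper phrases this as discarding the terms with $\rho(i)=\lambda(i)$), the remaining cross-column terms are partitioned into at most $\chi(\mathcal{P})$ classes of independent mean-zero sign variables via a proper coloring of $\mathcal{G}_{\mathcal{P}}(i_{1},i_{2})$, and each class is controlled by Azuma's inequality with increments bounded by $f^{2}(n)/n$, followed by a union bound over row pairs and dot-product types. Your explicit attention to the squared-increment total of order $f^{4}(n)/n$ and to vertex multiplicities in the general $\Psi$-regular case is, if anything, more careful than the paper's own write-up.
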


\begin{proof}
Note that the we get the first inequality for free from the fact that $x$ is orthogonal to $y$
(in other words, $\sum_{u=1}^{n} s_{i_{1},u}v_{i_{1},u}$ can be represented as $C\sum_{u=1}^{n} x_{i}y_{i}$ and the latter expression is clearly $0$). 
Let us consider now one of the three remaining expressions. Note that they can be rewritten as:
\begin{equation}E = \sum_{i=1}^{n} d_{\rho(i)}d_{\lambda(i)} x_{\zeta(i)}x_{\gamma(i)}\end{equation} 
or \begin{equation}E = \sum_{i=1}^{n} d_{\rho(i)}d_{\lambda(i)} y_{\zeta(i)}y_{\gamma(i)}\end{equation}
or \begin{equation}E = \sum_{i=1}^{n} d_{\rho(i)}d_{\lambda(i)} x_{\zeta(i)}y_{\gamma(i)}\end{equation} for some
$\rho, \lambda, \zeta, \gamma$. 
Note also that from the $\Psi$-regularity condition we immediately obtain that $\rho(i)=\lambda(i)$
for at most $\Psi$ elements of each sum. Get rid of these elements from each sum and consider the remaining ones. From the definition of the $\mathcal{P}$-chromatic number, those remaining ones can be partitioned into at most $\chi(\mathcal{P})$ parts, each consisting of elements that are independent random variables (since in the corresponding graph there are no edges between them).
Thus, for the sum corresponding to each part one can apply Lemma \ref{azuma_general}.
Thus one can conclude that the sum differs from its expectation (which clearly is $0$ since $E(d_{i}d_{j})=0$ for $i \neq j$) by a with probability at most: 
\begin{equation}
\mathbb{P}_{a} \leq 2e^{-\frac{2a^{2}}{\sum_{i=1}^{n} x_{\zeta(i)}x_{\gamma(i)}}},
\end{equation}
or
\begin{equation}
\mathbb{P}_{a} \leq 2e^{-\frac{2a^{2}}{\sum_{i=1}^{n} y_{\zeta(i)}y_{\gamma(i)}}},
\end{equation}
or
\begin{equation}
\mathbb{P}_{a} \leq 2e^{-\frac{2a^{2}}{\sum_{i=1}^{n} x_{\zeta(i)}y_{\gamma(i)}}}.
\end{equation}

Now it is time to use the fact that event $\mathcal{E}_{f}$ holds.
Then we know that: $|x_{i}|,|y_{i}| \leq \frac{f(n)}{\sqrt{n}}$ for $i=1,...,n$.
Substituting this upper bound for $|x_{i}|,|y_{i}|$ in the derived expressions on the probabilities coming from Lemma \ref{azuma_general}, and then taking the union bound, we complete the proof.
\end{proof}

We can finish the proof of Theorem \ref{ext_technical_theorem}.
From Lemma \ref{small_dot_product_lemma} we see that $s_{1},...,s_{k},v_{1},...,v_{k}$ are
close to pairwise orthogonal with high probability. Let us fix some positive function $f(n)>0$ and some
$a>0$. Denote 

\begin{equation}
\Delta = a\chi(\mathcal{P}) + \Psi \frac{f^{2}(n)}{n}.
\end{equation}

Note that, by Lemma \ref{small_dot_product_lemma} we see that applying Gram-Schmidt process
we can obtain a system of pairwise orthogonal vectors $\tilde{s}_{1},...,\tilde{s}_{k},\tilde{v}_{1},...,\tilde{v}_{k}$ such that 
\begin{equation} \label{ineq1}\|\tilde{v}_{i}-v_{i}\|_{2} \leq \sigma(k) \Delta. \end{equation}
and 
\begin{equation}\label{ineq2}\|\tilde{s}_{i}-s_{i}\|_{2} \leq \sigma(k) \Delta, \end{equation}
where $\sigma(k)>0$ is some function of $k$ (it does not depend on $n$ and $t$).
Note that for $n,t$ large enough we have: $\sigma(k) \Delta \leq \sqrt{a} \chi(\mathcal{P}) + \Psi \frac{f^{2}(n)}{\sqrt{n}}$. 

Let us consider again $w_{x,y}$.  Replacing $s_{i}$ by $\tilde{s}_{i}$ and $v_{i}$ by $\tilde{v}_{i}$
in the formula on $w_{x,y}$, we obtain another Gaussian vector: $\tilde{w}_{x,y}$ for each row $i$ of the matrix $\mathcal{P}$. Note however that vectors $\tilde{w}_{x,y}$ have one crucial advantage over vectors $w_{x,y}$, namely they are independent. That comes from the fact that $\tilde{s}_{1},...,\tilde{s}_{k}$,$\tilde{v}_{1},...,\tilde{v}_{k}$ are pairwise orthogonal.
Note also that from \ref{ineq1} and \ref{ineq2} we obtain that the angular distance between 
 $w_{x,y}$ and $\tilde{w}_{x,y}$ is at most $\sigma(k)\Delta$.

Let $Z_{i}$ for $i=1,...k$ be an indicator random variable that is zero if $\tilde{w}_{x,y}$ is inside the region $\mathcal{U}_{p,r}$ and zero otherwise. 
Let $U_{i}$ for $i=1,...k$ be an indicator random variable that is zero if $w_{x,y}$ is inside the region $\mathcal{U}_{p,r}$ and zero otherwise. 
Note that $\tilde{\theta}^{n}_{p,r} = \frac{U_{1}+...+U_{k}}{k}$.
Furthermore, random variables $Z_{1},...,Z_{k},U_{1},...,U_{k}$ satisfy the assumptions of 
Lemma \ref{first_lemma} with $\mu \leq \frac{8\tau}{\theta_{p,r}}$, where $\tau = \sigma(k)\Delta$.
Indeed,  random variables $Z_{i}$ are independent since vectors $\tilde{w}_{x,y}$ are independent.
From what we have said so far we know that each of them takes value one with probability exactly $\frac{\theta_{p,r}}{\pi}$.
Furthermore $Z_{i} \neq U_{i}$ only if $w_{x,y}$ is inside $\mathcal{U}_{p,r}$ and $\tilde{w}_{x,y}$
is outside $\mathcal{U}_{p,r}$ or vice versa. The latter event implies (thus it is included in the event) 
that $w_{x,y}$ is near the border of the region $\mathcal{U}_{p,r}$, namely within an angular distance $\frac{\epsilon}{\theta_{p,r}}$ 
from one of the four semi-lines defining $\mathcal{U}_{p,r}$. Thus in particular, an event $Z_{i} \neq U_{i}$ is contained in 
the event of probability at most $2 \cdot 4 \cdot \frac{\epsilon}{\theta_{p,r}}$ that depends only on one $w_{x,y}$.

But then we can apply Lemma \ref{first_lemma}. All we need is to assume that the premises of Lemma \ref{small_dot_product_lemma} are satisfied. But this is the case with probability specified in Lemma \ref{hadamard_lemma} and this probability is taken under random coin tosses used to product $\mathcal{H}$ and $\mathcal{R}$, thus independently from the random coin tosses used to produce $\mathcal{D}$. Putting it all together we obtain the statement of Theorem \ref{ext_technical_theorem}.

\section{Proof of Theorem \ref{short_theorem}}

We will borrow some notation from the proof of Theorem \ref{ext_technical_theorem}.
Note however that in this setting no preprocessing with the use of matrices $\mathcal{H}$
and $\mathcal{R}$ is applied.

\begin{lemma}
\label{variance_lemma}
Define $U_{1},...,U_{k}$ as in the proof of Theorem \ref{ext_technical_theorem}.
Assume that the following is true:
$$|\sum_{u=1}^{n} s_{i_{1},u}v_{i_{1},u}| \leq \Delta,$$
$$|\sum_{u=1}^{n} s_{i_{1},u}s_{i_{2},u}| \leq \Delta,$$
$$|\sum_{u=1}^{n} v_{i_{1},u}v_{i_{2},u}| \leq \Delta,$$
$$|\sum_{u=1}^{n} s_{i_{1},u}v_{i_{2},u}| \leq \Delta.$$ 
for some $0<\Delta<1$.
The the following is true for every fixed $1 \leq i < j \leq k$:
$$|\mathbb{P}(U_{i}U_{j}=1) - \mathbb{P}(U_{i}=1)\mathbb{P}(U_{j}=1)| = O(\Delta).$$
\end{lemma}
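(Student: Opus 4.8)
The plan is to bound the covariance of the two indicator variables $U_i$ and $U_j$ by controlling how close the joint distribution of the four Gaussian coordinates (two for each row, projected onto $H_{p,r}$) is to the product of their marginals. Recall from the proof of Lemma~\ref{mean_lemma} that $U_i = 1$ precisely when the projection $w^{(i)}_{\{x,y\}}$ of the $i$th (diagonally-scrambled) row lands in the cone region $\mathcal{U}_{p,r}$, and likewise $U_j$ depends only on $w^{(j)}_{\{x,y\}}$. Each such projection is a $2$-dimensional centered Gaussian vector, so the pair $(U_i,U_j)$ is a deterministic function of the $4$-dimensional Gaussian vector $W = (w^{(i)}_{\{x,y\}}, w^{(j)}_{\{x,y\}})$. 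The key point is that the off-diagonal entries of the covariance matrix $\Sigma$ of $W$ are exactly the four dot products $\sum_u s_{i,u}v_{i,u}$, $\sum_u s_{i,u}s_{j,u}$, $\sum_u v_{i,u}v_{j,u}$, $\sum_u s_{i,u}v_{j,u}$ appearing in the hypothesis, each bounded in absolute value by $\Delta$. (The within-row coordinates $(s_i,v_i)$ and $(s_j,v_j)$ would be orthogonal up to $\Delta$ as well.) Hence $\Sigma$ differs from a block-diagonal matrix $\Sigma_0$ — under which $W$ splits into two independent $2$-d Gaussians and $U_i \perp U_j$ — by a perturbation of operator-norm $O(\Delta)$.

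First I would set up $\mathbb{P}(U_iU_j=1)$ as the Gaussian measure, under covariance $\Sigma$, of the product region $\mathcal{U}_{p,r}\times\mathcal{U}_{p,r}\subseteq\mathbb{R}^4$, and similarly $\mathbb{P}(U_i=1)\mathbb{P}(U_j=1)$ as the same region's measure under the block-diagonal $\Sigma_0$. The covariance to estimate is then $\int_{A}(\phi_{\Sigma}-\phi_{\Sigma_0})$, where $A$ is the product cone and $\phi_\Sigma$ denotes the density of $\mathcal{N}(0,\Sigma)$. The clean way to bound this is to interpolate: write $\Sigma_s = \Sigma_0 + s(\Sigma-\Sigma_0)$ for $s\in[0,1]$ and use
\begin{equation*}
\mathbb{P}(U_iU_j=1)-\mathbb{P}(U_i=1)\mathbb{P}(U_j=1) = \int_0^1 \frac{d}{ds}\,\mathbb{P}_{\Sigma_s}(W\in A)\,ds.
\end{equation*}
By the standard fact that $\partial_{\Sigma_{ab}}\phi_\Sigma$ equals a second-order partial derivative of $\phi_\Sigma$ (the diffusion/Plackett identity $\partial_{\Sigma_{ab}}\phi = \partial_{w_a}\partial_{w_b}\phi$ for $a\neq b$), the derivative in $s$ is a sum over the perturbed (off-diagonal) entries, each weighted by $(\Sigma-\Sigma_0)_{ab}=O(\Delta)$, of integrals of mixed second derivatives of the Gaussian density over $A$. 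Since each $|(\Sigma-\Sigma_0)_{ab}|\le\Delta$ and $\Delta<1$ keeps $\Sigma_s$ uniformly nondegenerate (eigenvalues bounded away from $0$ and $\infty$ along the whole path), each such integral is bounded by an absolute constant, giving the claimed $O(\Delta)$ bound.

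The main obstacle I anticipate is controlling those integrals of second-order derivatives of $\phi_{\Sigma_s}$ over the cone region $A$ uniformly in $s$. Because $A$ is unbounded, one cannot naively bound $\int_A |\partial_a\partial_b \phi|$ by the max of the integrand times a volume; instead I would integrate one of the derivatives by parts, converting each term into a Gaussian surface integral over the boundary of $A$ (the cone faces) plus a manifestly bounded bulk term. The boundary pieces are lower-dimensional Gaussian integrals over the cone walls, which are finite and bounded by an absolute constant provided $\Sigma_s$ stays nondegenerate — this is exactly where the hypothesis $\Delta<1$ is used, since it guarantees the smallest eigenvalue of $\Sigma_s$ is bounded below along the interpolation path. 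A secondary subtlety is that the marginal variances of the four coordinates are not normalized to $1$ (they equal $\sigma_i^2=|\mathcal{S}_{i,1}|$ by Lemma~\ref{mean_lemma}); I would rescale each coordinate to unit variance at the outset, which leaves the indicator regions $\mathcal{U}_{p,r}$ invariant (they are cones, hence scale-invariant) and rescales each off-diagonal covariance into a correlation still of size $O(\Delta)$, so the estimate is unaffected.
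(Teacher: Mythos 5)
Your proposal is correct, but it takes a genuinely different route from the paper. The paper dismisses this lemma in one line, saying it follows from ``exactly the same analysis'' as the final part of the proof of Theorem~\ref{ext_technical_theorem}: there, one applies Gram--Schmidt to the coefficient vectors $s_{i},v_{i},s_{j},v_{j}$ to produce exactly orthogonal vectors within $O(\Delta)$ of the originals, hence surrogate projections $\tilde{w}$ that are exactly independent across rows and within angular distance $O(\Delta)$ of the true ones; the surrogate indicator $Z_{i}$ can then differ from $U_{i}$ only when $w^{(i)}$ falls within angle $O(\Delta)$ of one of the four rays bounding $\mathcal{U}_{p,r}$, an event of probability $O(\Delta)$ by isotropy, and the covariance bound follows from a union bound together with $\mathbb{P}(Z_{i}Z_{j}=1)=\mathbb{P}(Z_{i}=1)\mathbb{P}(Z_{j}=1)$. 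You instead compare the two Gaussian laws directly: you realize $(U_{i},U_{j})$ as a function of a $4$-dimensional Gaussian $W$ whose cross-block covariances are the hypothesized dot products, interpolate $\Sigma_{s}=\Sigma_{0}+s(\Sigma-\Sigma_{0})$ from the block-diagonal matrix (under which the probability of the product cone is exactly the product of marginals), and control the $s$-derivative via the Plackett identity plus integration by parts, reducing to two-dimensional Gaussian integrals over products of boundary rays. This is a standard Slepian-type comparison; it buys explicit constants, works for any number of rows and any region with piecewise-smooth conic boundary, and avoids re-deriving the geometric coupling, whereas the paper's route is more elementary and recycles machinery it has already set up. One repair is needed in your write-up: $\Delta<1$ alone does \emph{not} keep $\Sigma_{s}$ uniformly nondegenerate (four unit vectors whose pairwise inner products all have absolute value $\tfrac{1}{3}$ can be linearly dependent, so $\Sigma$ itself may be singular well before $\Delta$ approaches $1$), and your constants would degenerate as $\Delta$ grows. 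This is harmless but should be said: for $\Delta$ bounded away from $0$, say $\Delta\geq\tfrac{1}{8}$, the conclusion is trivial because the covariance of two indicator variables is at most $\tfrac{1}{4}\leq 2\Delta$; hence you may assume $\Delta\leq\tfrac{1}{8}$, and then Gershgorin gives a uniform spectral lower bound along the entire interpolation path, which is all your boundary-integral estimates require.
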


The lemma follows from the exactly the same analysis that was done in the last section of the proof of Theorem \ref{ext_technical_theorem} thus we leave it to the reader as an exercise.

Note that we have: 

\begin{eqnarray}
\!\!\!\!\!\!\!\!\!Var(\tilde{\theta}^{n}_{p,r}) \!\!\!\!\!&=&\!\!\!\!\! Var(\frac{U_{1}+...+U_{k}}{k}) \nonumber\\
\!\!\!\!\!&=&\!\!\!\!\! \frac{1}{k^{2}}(\sum_{i=1}^{k} Var(U_{i}) + \sum_{i \neq j} Cov(U_{i},U_{j})).
\end{eqnarray} 

Since $U_{i}$ is an indicator random variable that takes value one with probability $\frac{\theta_{p,r}}{\pi}$,
we get:
\begin{equation}
Var(U_{i}) = E(U_{i}^{2}) - E(U_{i})^{2} = \frac{\theta_{p,r}}{\pi}(1-\frac{\theta_{p,r}}{\pi}).
\end{equation}

Thus we have:

\begin{equation}
Var(\tilde{\theta}^{n}_{p,r}) = \frac{1}{k}\frac{\theta_{p,r}(\pi-\theta_{p,r})}{\pi^{2}}+\frac{1}{k^{2}}\sum_{i \neq j} Cov(U_{i},U_{j}).
\end{equation}

Note however that $Cov(U_{i},U_{j})$ is exactly: $\mathbb{P}(U_{i}U_{j}=1) - \mathbb{P}(U_{i}=1)\mathbb{P}(U_{j}=1)$.

Therefore, using Lemma \ref{variance_lemma}, we obtain:

\begin{equation}
Var(\tilde{\theta}^{n}_{p,r})  = \frac{1}{k}\frac{\theta_{p,r}(\pi-\theta_{p,r})}{\pi^{2}} + O(\Delta).
\end{equation}

It suffices to estimate parameter $\Delta$.
We proceed as in the previous proof. 
We only need to be a little bit more cautious since the condition: $|x_{i}|,|y_{i}| \leq \frac{f(n)}{\sqrt{n}}$ cannot be assumed right now.
We select two rows: $i_{1},i_{2}$ of $\mathcal{P}$.
Note that again we see that applying Gram-Schmidt process,
we can obtain a system of pairwise orthogonal vectors $\tilde{s}_{i_{1}},\tilde{s}_{i_{i}},\tilde{v}_{i_{i}},\tilde{v}_{i_{2}}$ such that 
\begin{equation} \label{ineq1}\|\tilde{v}_{i_{1}}-v_{i_{2}}\|_{2} = O(\Delta), \end{equation}
and 
\begin{equation}\label{ineq2}\|\tilde{s}_{i_{1}}-s_{i_{2}}\|_{2} = O(\Delta). \end{equation}

The fact that right now the above upper bounds are not multiplied by $k$, as it was the case in the previous proof,
plays key role in obtaining nontrivial concentration results even when no Hadamard mechanism is applied.

\begin{table*}[htp!]
\center
\setlength{\tabcolsep}{4.5pt}
\caption{Mean and std of the train error versus the size of the hash ($k$) / size of the reduction ($n/k$) for the network.}
\begin{tabular}{|c|c|c|c|c|c|c|c|c|c|c|c|}
\hline
\multicolumn{1}{|c|}{\multirow{2}{*}{$k\:\:$/$\:\:\frac{n}{k}$}} & Circulant & Random & BinPerm & BinCirc & HalfShift & Toeplitz & VerHorShift \\
& $[\%]$ & $[\%]$ & $[\%]$ &  $[\%]$ & $[\%]$ & $[\%]$ & $[\%]$\\
  \hline
\hline
$1024$ / $1$ & $0.00 \pm 0.00$ & $0.00\pm 0.00$ & $0.00 \pm 0.00$ & $0.30 \pm 0.44$ & $0.00 \pm 0.00$ & $0.00 \pm 0.00$ & $0.00 \pm 0.00$\\
\hline
$512$ / $2$ & $0.04 \pm 0.06$ & $0.00\pm 0.00$ & $0.00 \pm 0.00$ & $2.66 \pm 2.98$ & $1.44 \pm 2.89$ & $0.00 \pm 0.00$ & $0.00 \pm 0.01$\\
\hline
$256$ / $4$ & $6.46 \pm 2.27$ & $0.00\pm 0.00$ & $0.79 \pm 1.57$ & $0.60 \pm 1.19$ & $0.49 \pm 0.93$ & $2.09 \pm 1.69$ & $3.98 \pm 3.96$\\
\hline
$128$ / $8$ & $16.89 \pm 6.57$ & $4.69\pm 0.43$ & $4.44 \pm 0.50$ & $5.62 \pm 1.03$ & $7.34 \pm 1.27$ & $11.82 \pm 2.17$ & $10.51 \pm 1.27$\\
\hline
$64$ / $16$ & $26.47 \pm 0.98$ & $13.35\pm 0.61$ & $23.98 \pm 11.54$ & $18.68 \pm 0.78$ & $17.64 \pm 2.01$ & $29.97 \pm 5.29$ & $18.68 \pm 3.26$\\
\hline
$32$ / $32$ & $40.79 \pm 3.82$ & $27.51\pm 2.04$ & $28.28 \pm 3.23$ & $33.91 \pm 3.23$ & $27.90 \pm 1.05$ & $41.49 \pm 2.14$ & $43.51 \pm 3.78$\\
\hline
$16$ / $64$ & $63.96 \pm 5.62$ & $46.31\pm 0.73$ & $50.03 \pm 6.18$ & $58.71 \pm 6.96$ & $54.88 \pm 6.47$ & $57.72 \pm 3.42$ & $60.91 \pm 4.53$\\
\hline
\end{tabular} 
\label{tab:one2}
\end{table*}

We consider the related sums:
$E_{1} = \sum_{i=1}^{n} d_{\rho(i)}d_{\lambda(i)} x_{\zeta(i)}x_{\gamma(i)},
  E_{2} = \sum_{i=1}^{n} d_{\rho(i)}d_{\lambda(i)} y_{\zeta(i)}y_{\gamma(i)},
  E_{3} = \sum_{i=1}^{n} d_{\rho(i)}d_{\lambda(i)} x_{\zeta(i)}y_{\gamma(i)}$ 
as before. We can again partition each sum into at most $\chi(\mathcal{P})$ sub-chunks, where
this time $\chi(\mathcal{P}) \leq 3$ (since $\mathcal{P}$ is Toeplitz Gaussian).
The problem is that applying Lemma \ref{azuma_general}, we get bounds that depend on the expressions of the form \begin{equation} \alpha_{x,i} = \sum_{j=1}^{n}x_{j}^{2}x_{j+i}^{2},\end{equation} and \begin{equation} \alpha_{y,i} = \sum_{j=1}^{n}y_{j}^{2}y_{j+i}^{2}, \end{equation} where indices are added modulo $n$ and this time we cannot assume that all $|x_{i}|,|y_{i}|$ are small.
Fortunately we have:
\begin{equation}
\sum_{i=1}^{n} \alpha_{x,i} = 1,
\end{equation}
and 
\begin{equation}
\sum_{i=1}^{n} \alpha_{y,i} = 1
\end{equation}

Let us fix some positive function $f(k)$. We can conclude that the number of variables $\alpha_{x,i}$
such that $\alpha_{x,i} \geq \frac{f(k)}{{k \choose 2}}$ is at most $\frac{{k \choose 2}}{f(k)}$.
Note that each such $\alpha_{x,i}$ and each such $\alpha_{y,i}$ corresponds to a pair $\{i_{1},_{2}\}$ of rows of the matrix $\mathcal{P}$ and consequently to the unique element $Cov(U_{i_{1}},U_{i_{2}})$ of the entire covariance sum (scaled by $\frac{1}{k^{2}}$).
Since trivially we have $|Cov(U_{i_{1}},U_{i_{2}})|=O(1)$, we conclude that the contribution of these elements to the entire covariance sum is of order $\frac{1}{f(k)}$.
Let us now consider these $\alpha_{x,i}$ and $\alpha_{y,i}$ that are at most $\frac{f(k)}{{k \choose 2}}$.
These sums are small (if we take $f(k)=o(k^{2})$) and thus it makes sense to apply Lemma \ref{azuma_general} to them. That gives us upper bound $a=\Omega(\Delta)$ with probability: 
\begin{equation}\mathbb{P}^{*} \geq 1-e^{-\Omega(a^{2}\frac{k^{2}}{f(k)})}.\end{equation}
Taking  $f(k)=(\frac{k^{2}}{\log(k)})^{\frac{1}{3}}$ and $a=O(\Delta) = \frac{1}{f(k)}$, we get: $\mathbb{P}^{*} \geq 1 - O(\frac{1}{k})$ and furthermore:
\begin{equation}
Var(\tilde{\theta}^{n}_{p,r}) = \frac{1}{k}\frac{\theta_{p,r}(\pi-\theta_{p,r})}{\pi^{2}} + (\frac{\log(k)}{k^{2}})^{\frac{1}{3}}.
\end{equation}
Thus, from the Chebyshev's inequality, we get the following for every $c>0$ and fixed points $p,r$:
\begin{equation}
\mathbb{P}(|\tilde{\theta}^{n}_{p,r} - \frac{\theta_{p,r}}{\pi}| \geq c (\frac{\sqrt{\log(k)}}{k})^{\frac{1}{3}}) = O(\frac{1}{c^{2}}).
\end{equation}
That completes the proof of Theorem \ref{short_theorem}.

\section{Additional figures}

Figure~\ref{fig:plots2}a and Figure~\ref{fig:plots2}b show how the mean train error is affected by the size of the hash, and  Figure~\ref{fig:plots2}c shows how the mean train error changes with the size of the reduction for the neural network experiment. In Table~\ref{tab:one2} we report both the mean and the standard deviation of the train error across our neural network experiments. \textit{Baseline} refers to the network with one hidden layer containing $100$ hidden units, where all parameters are trained.

\begin{figure}[htp!]
\vspace{-0.60in}
\center
a)\includegraphics[width = 2.6in]{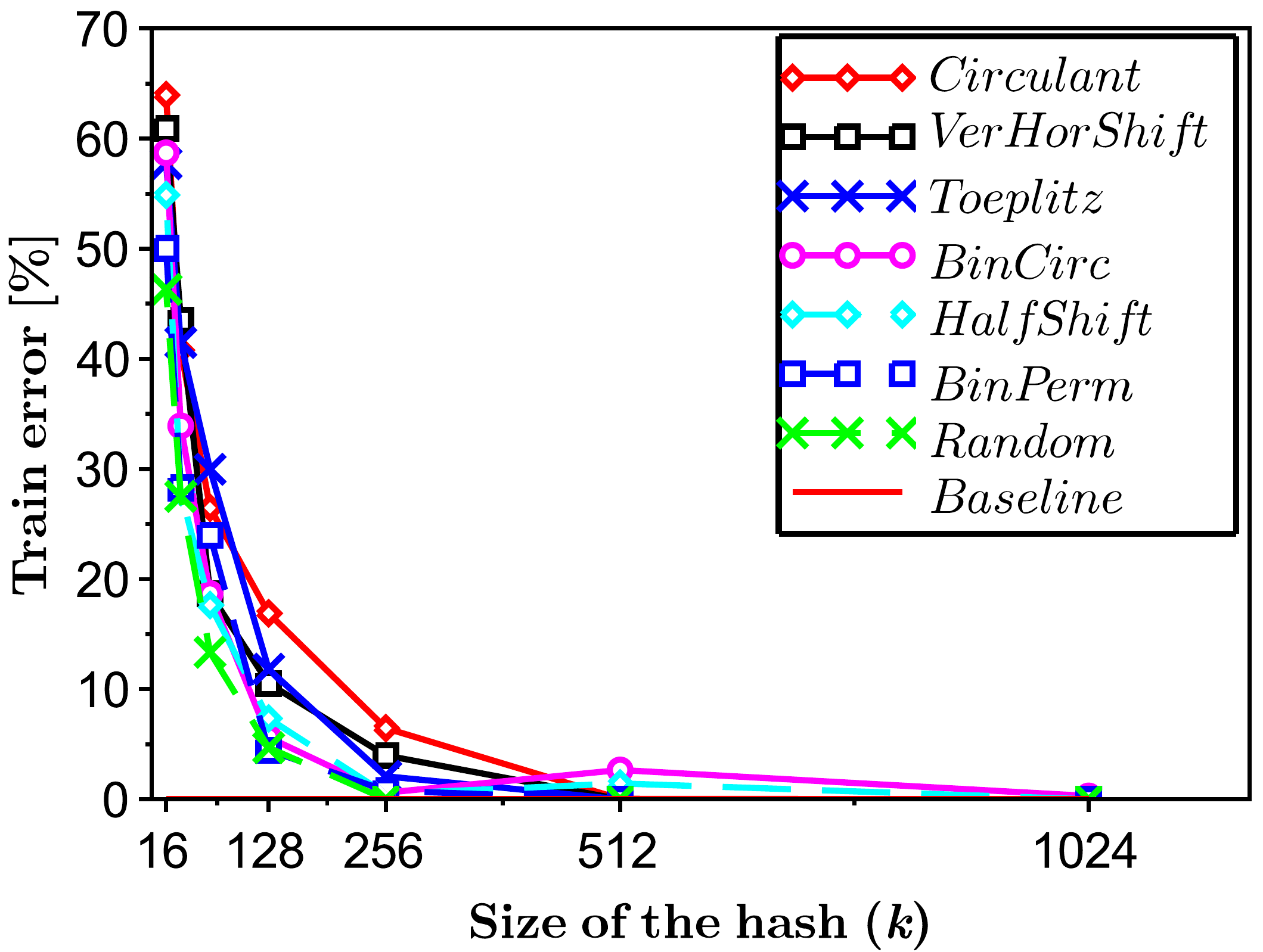}\\
b)\includegraphics[width = 2.6in]{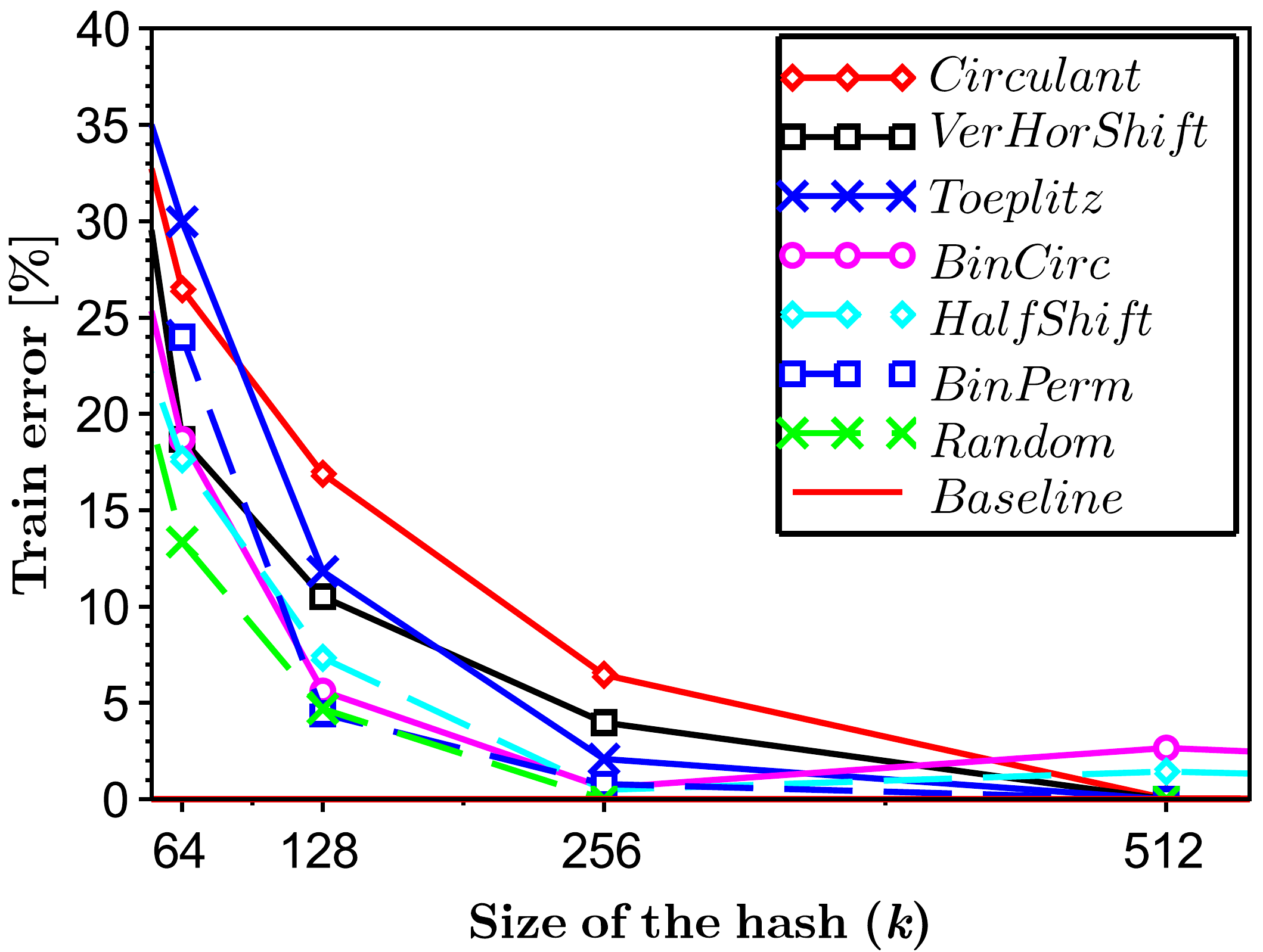}\\
c)\includegraphics[width = 2.6in]{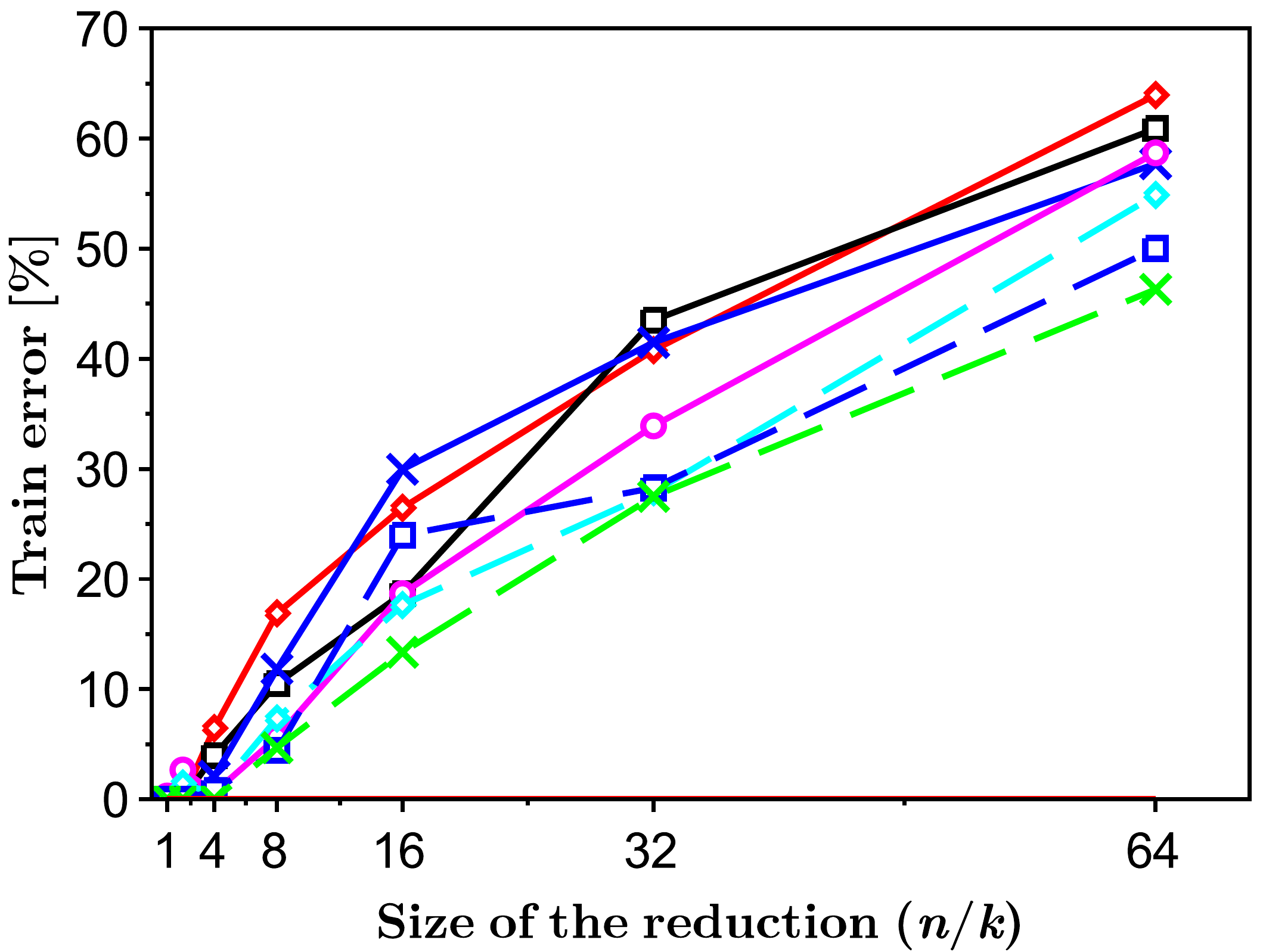}
\vspace{-0.1in}
\caption{Mean train error versus a), b) the size of the hash ($k$), c) the size of the reduction ($n/k$) for the network. b) is a zoomed a). Baseline corresponds to $0\%$.}
\label{fig:plots2}
\vspace{-0.1in}
\end{figure}

\begin{table*}[htp!]
\center
\setlength{\tabcolsep}{2.5pt}
\caption{Mean and std of the test error versus the size of the hash ($k$) / size of the reduction ($n/k$) for $1$-NN.}
\vspace{-0.1in}
\begin{tabular}{|c|c|c|c|c|c|c|c|c|c|c|c|}
\hline
\multicolumn{1}{|c|}{\multirow{2}{*}{$k\:\:$/$\:\:\frac{n}{k}$}} & Circulant & Random & BinPerm & BinCirc & HalfShift & Toeplitz & VerHorShift \\
& $[\%]$ & $[\%]$ & $[\%]$ &  $[\%]$ & $[\%]$ & $[\%]$ & $[\%]$\\
  \hline
\hline
$1024$ / $1$ & $6.02 \pm 0.64$ & $4.83 \pm 0.19$ & $6.67 \pm 0.65$ & $12.77 \pm 2.86$ & $6.38 \pm 0.44$ & $6.22 \pm 1.20$ & $6.30 \pm 0.76$\\
\hline
$512$ / $2$ & $12.98 \pm 11.29$ & $5.77 \pm 0.11$ & $8.15 \pm 0.56$ & $12.40 \pm 2.32$ & $7.25 \pm 0.71$ & $9.11 \pm 2.28$ & $10.81 \pm 4.31$\\
\hline
$256$ / $4$ & $17.73 \pm 6.66$ & $8.51 \pm 0.35$ & $11.11 \pm 1.15$ & $12.13 \pm 4.35$ & $12.05 \pm 2.94$ & $15.66 \pm 3.36$ & $18.19 \pm 5.46$\\
\hline
$128$ / $8$ & $34.80 \pm 14.59$ & $14.44 \pm 0.89$ & $17.20 \pm 2.26$ & $22.15 \pm 6.45$ & $24.74 \pm 8.14$ & $33.90 \pm 13.90$ & $30.37 \pm 7.52$\\
\hline
$64$ / $16$ & $45.91 \pm 5.50$ & $27.57 \pm 1.58$ & $29.53 \pm 3.40$ & $35.33 \pm 5.58$ & $36.58 \pm 10.71$ & $51.10 \pm 13.98$ & $41.66 \pm 8.08$\\
\hline
$32$ / $32$ & $65.06 \pm 9.60$ & $40.58 \pm 2.49$ & $43.58 \pm 4.66$ & $53.05 \pm 5.39$ & $47.18 \pm 7.19$ & $58.24 \pm 8.87$ & $56.73 \pm 6.09$\\
\hline
$16$ / $64$ & $68.61 \pm 5.72$ & $58.72 \pm 3.08$ & $60.30 \pm 6.11$ & $66.29 \pm 4.79$ & $60.84 \pm 5.31$ & $72.50 \pm 6.04$ & $72.50 \pm 5.91$\\
\hline
\end{tabular} 
\label{tab:one3}
\end{table*}

\begin{figure}[h!]
\vspace{-3in}
\center
a)\includegraphics[width = 2.6in]{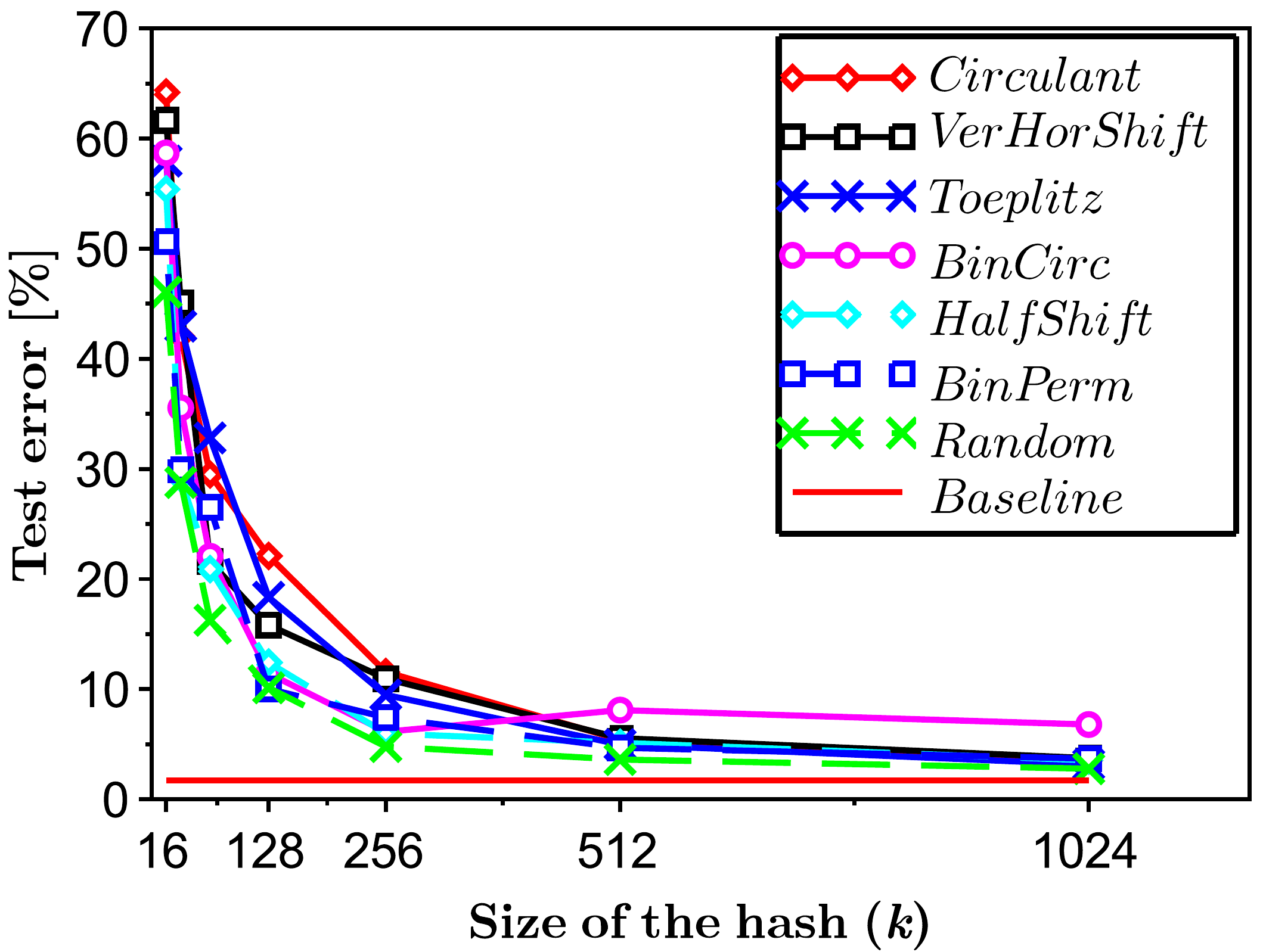}\\
b)\includegraphics[width = 2.6in]{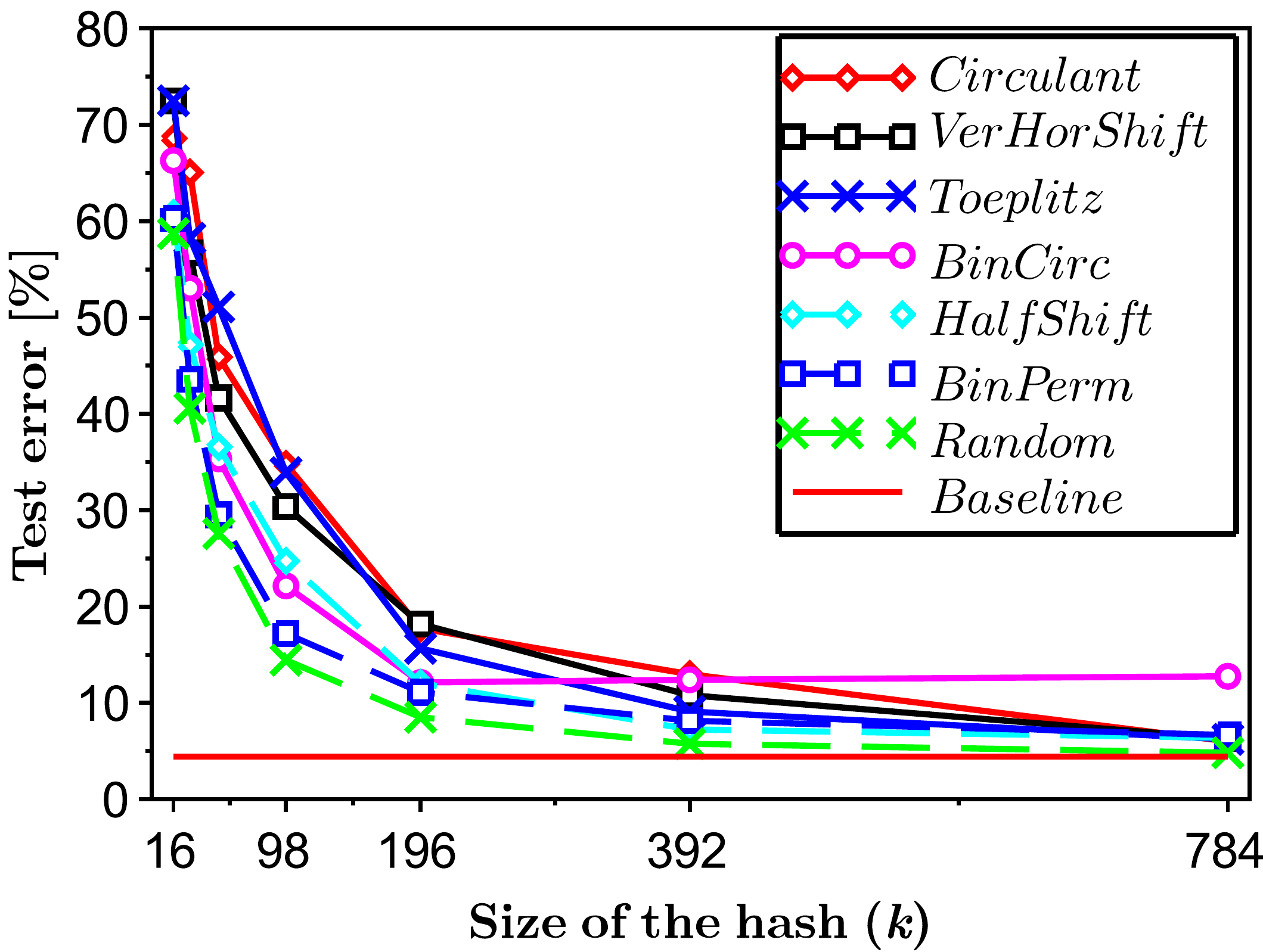}
\vspace{-0.1in}
\caption{Mean test error versus the size of the hash ($k$) (original plot) for a) the network, b) $1$-NN.}
\label{fig:plotsorig}
\end{figure}

Figure~\ref{fig:plotsorig}a shows the original version of Figure~\ref{fig:plots}a (before zoom). Figure~\ref{fig:plotsorig}b shows the original version of Figure~\ref{fig:plotskNN}a (before zoom). Finally, Table~\ref{tab:one3} shows the mean and the standard deviation of the test error versus the size of the hash ($k$)/size of the reduction ($n/k$) for $1$-NN.

\end{document}